\documentclass[journal]{IEEEtran}

\usepackage{amssymb,amsmath,amsthm,amsfonts,algorithm,algorithmic,subcaption,graphicx,cite,color,flushend,mysymbol,hyperref,mathrsfs,lipsum,dsfont,multirow}
%\graphicspath{{figures/}}
\captionsetup{font+=footnotesize}

% correct bad hyphenation here
\hyphenation{op-tical net-works semi-conduc-tor}
\AtEndDocument{\par\leavevmode}

\begin{document}

% Remove page numbering
%\pagenumbering{gobble}

\title{\huge{Physics-Based Learning for Robotic Environmental Sensing}}

% author names and IEEE memberships:
\author{Reza~Khodayi-mehr,~\IEEEmembership{Member,~IEEE},~and~Michael~M.~Zavlanos,~\IEEEmembership{Senior~Member,~IEEE}%
\thanks{Reza Khodayi-mehr and Michael M. Zavlanos are with the Department of Mechanical Engineering and Materials Science,  Duke University, Durham, NC 27708, USA, {\tt\footnotesize \{reza.khodayi.mehr, michael.zavlanos\}@duke.edu}.
This work is supported in part by the National Science Foundation under awards CNS-1837499 and CNS-1932011.
}}

% paper headers:
%\markboth{IEEE~Transactions~on~Robotics,~January~2021. Note:~First~Manuscript.}{Reza~Khodayi-mehr~and~Michael~M.~Zavlanos} 

% make the title area
\maketitle

\begin{abstract}
We propose a physics-based method to learn environmental fields (EFs) using a mobile robot. Common purely data-driven methods require prohibitively many measurements to accurately learn such complex EFs. Alternatively, physics-based models provide global knowledge of EFs but require experimental validation, depend on uncertain parameters, and are intractable for mobile robots. To address these challenges, we propose a Bayesian framework to select the most likely physics-based models of EFs in \textit{real-time}, from a pool of numerical solutions generated offline as a function of the uncertain parameters. Specifically, we focus on turbulent flow fields and utilize Gaussian processes (GPs) to construct statistical models for them, using the pool of numerical solutions to inform their prior mean. To incorporate flow measurements into these GPs, we control a custom-built mobile robot through a sequence of waypoints that maximize the information content of the measurements. We experimentally demonstrate that our proposed framework constructs a posterior distribution of the flow field that better approximates the real flow compared to the prior numerical solutions and purely data-driven methods.
\end{abstract}

\begin{IEEEkeywords}
Environmental sensing, physics-based learning, active learning, mobile robot, Gaussian process, flow field.
\end{IEEEkeywords}

\IEEEpeerreviewmaketitle

% ------------------------------------------------------------------------------------------------------------------------------ %
\section{Introduction} \label{sec:intro}
%
% Theme: motivation
\IEEEPARstart{E}{nvironmental} sensing using mobile robots has been widely explored in the literature to collect informative measurements while reducing cost \cite{REM2012DM,DCAUR2000Y}. The applications range from atmospheric pollution monitoring \cite{EMMR2008TRCL} and localization \cite{CPSL2006PF}, to oil and gas source localization \cite{UAUGGES2018RCDD}, chemical source identification \cite{meC1,meC2,meJ1}, underwater surveillance \cite{ISNAUHS2010JKEH}, oceanographic research \cite{UGOR2004RDEF}, estimation of temperature and salinity profiles \cite{NALGP2007KG,DEMMSN2008LSYF,CELSTDSF2011WZ}, and the investigation of the effects of ocean currents on aquatic life \cite{MFSTDDP2008CSMS,SSMAE2008CBKS}.
%
% Theme: necessity of flow field knowledge
Knowledge of the underlying flow field is often essential in these applications both for estimation and navigation.
For instance, in marine robotics, knowledge of ocean currents is important to plan efficient paths for autonomous underwater vehicles (AUVs) that perform long-term monitoring \cite{LUOCPSRN2016HPBSS,IDPMLOM2016MLS,PETSD2017JH,TEOPPGF2016KBH,DLMAOC2017ESGW}.
These currents can be modeled using, e.g., historical data from the Regional Ocean Modeling System \cite{ROMS2005SM} which, however, are known to be sparse and approximate. As a result, alternative approaches have been proposed in the literature that avoid models altogether and instead consider the flow field as a ``strong disturbance'' \cite{PETSD2017JH} or rely on periodic online drift measurements to account for the effect of ambient currents on the motion of an AUV \cite{OEOCSGDUV2019LYHA}.
Our goal in this paper is to develop an effective physics-based method to more accurately learn flow fields that are an essential component of many environmental sensing tasks such as those discussed above. Since learning flow fields is itself an environmental sensing problem, our proposed framework can also be applied to other environmental phenomena that can be described using principled physical models.
%
% Theme: major differences with marine robotics
Unlike commonly studied oceanic flow patterns that are often composed of simple circulations \cite{PETSD2017JH,TEOPPGF2016KBH,OEOCSGDUV2019LYHA}, here we focus on turbulent flows in confined environments, where complicated flow patterns emerge. Nonetheless, our methodology also applies to ocean currents \cite{IOT2007T}.
Moreover, unlike much of the robotics literature that assumes that instantaneous measurements of environmental fields can be acquired at no cost \cite{PETSD2017JH,TEOPPGF2016KBH,OEOCSGDUV2019LYHA,EBSPMSNCMRF2013XCDM,LEFECCUR2019DGKS,MRSEMGMRF2020NKRD}, here we treat flow measurements in a more realistic way by taking into consideration that turbulence is a stochastic process subject to considerable variation \cite{TMCFD1993W} and, as a result, collecting reliable measurements of turbulent flow properties is both time and energy consuming.

% Theme: numerical solution
A widely used method for estimating flow properties is numerical simulation based on the Reynolds-averaged Navier Stokes (RANS) equations. This approach is cost-effective and provides global estimation of the flow over a domain of interest.
Nevertheless, solutions provided by RANS models are generally incompatible with each other and with empirical data and require experimental validation \cite{EMLPRHRANSU2015LT}. Furthermore, precise knowledge of boundary conditions (BCs) and domain geometry is often unavailable, which results in even larger inaccuracies in the predicted flow properties. 
Finally, solving RANS models onboard mobile robots with limited computational resources is still intractable.
%
% Theme: data-driven approaches
Due to such challenges, the authors in \cite{OEOCSGDUV2019LYHA,EBSPMSNCMRF2013XCDM,LEFECCUR2019DGKS,MRSEMGMRF2020NKRD} forgo the use of physics-based models and instead advocate the use of purely data-driven statistical methods that rely on Gaussian processes (GPs)\footnote{As defined in \cite{GPML2006RW}, `a GP is a collection of random variables, any finite number of which have a joint Gaussian distribution'.} to estimate spatiotemporal fields starting from non-informative constant priors.
Such purely data-driven methods, however, can require a prohibitively large number of measurements to accurately estimate complex flow fields, making them intractable in practice. Moreover, even when empirical data are available, they are typically prone to noise and error.  As a result, only sparse unreliable data can usually be collected which are insufficient to adequately train statistical models.

% Theme: model selection
In this paper, we propose a Bayesian \textit{model selection} framework that combines empirical data with physics-based models to learn accurate representations of unknown flow fields in \textit{real-time}.
Specifically, given a distribution of the uncertain parameters like BCs, we generate offline a pool of numerical models of the flow field by solving the RANS equations for different choices of uncertain parameters selected according to this distribution. These numerical models may be inconsistent with each other or the true flow and thus, are only used to inform the prior mean of a corresponding pool of statistical GP models of the flow properties, specifically the mean velocity components and turbulent intensity field. Then, the proposed Bayesian framework allows to incorporate empirical data collected by a mobile robot sensor, to select the most likely flow models from the pool and to obtain the posterior distribution of the flow properties given each model.
As such, our approach takes advantage of the global information provided by physics-based models without needing to solve for them online, and produces high fidelity estimations of the flow properties using only sparse data, which is not possible with purely data-driven methods. In the simulation study presented in Section \ref{sec:sim}, our proposed framework achieves a three fold improvement in prediction error compared to a purely data-driven approach that initializes the GP models with uninformative priors.
%
% Theme: Gaussian processes
%A key idea that enables our \textit{real-time} model selection framework, is to construct a statistical model of the flow properties using GPs and employ the physics of the problem, captured by RANS models, to inform their prior mean. Particularly, we construct a set of GPs corresponding to the mean velocity components and turbulent intensity field for each numerical solution. Then, relying on Bayesian inference, we incorporate empirical data collected by a mobile robot sensor to obtain the posterior probabilities of the models in the pool and the flow properties given each model.
This is because unlike data-driven methods, here empirical data are only used to improve physics-based models and not to replace them. Moreover, unlike much of the robotics literature \cite{EBSPMSNCMRF2013XCDM,LEFECCUR2019DGKS,SLROAUNSGP2016BMSK,SRLCGP2015BS,LNMPCIVMRPT2014OSB,CCLNTDBNDPGP2014WLZFKO,LDSMSFMSR2017LS,PPPMTSRGRF2013LS,MRSEMGMRF2020NKRD} and marine robotics literature \cite{LUOCPSRN2016HPBSS,IDPMLOM2016MLS,PETSD2017JH,OEOCSGDUV2019LYHA} in particular, where GPs are mainly used for computational expediency, here the use of GPs is in fact theoretically justified since the mean velocity components are normally distributed.

% Theme: variance of GPs
An important advantage of our proposed physics-based method is the use of turbulent intensity estimates provided by RANS models to define informative prior uncertainty fields for the GPs. This is not possible in the marine robotics literature in \cite{IDPMLOM2016MLS,PETSD2017JH,OEOCSGDUV2019LYHA} or the purely data-driven methods in \cite{EBSPMSNCMRF2013XCDM,LEFECCUR2019DGKS,MRSEMGMRF2020NKRD} that, in the absence of any prior knowledge, define constant prior uncertainty fields for their GP models.
%\red{The work in \cite{LUOCPSRN2016HPBSS} is an exception that estimates the uncertainty in predictive ocean current models and relies on the resulting confidence measures to plan the path of an AUV.}
%
An additional advantage of our method is that it derives accurate expressions of the measurement noise and incorporates them in the statistical inference process. This is in contrast to much of the robotics literature discussed above that oversimplifies the measurement collection process.
For example, a common assumption in this literature is that the measurement noise has \textit{constant} variance which, in the case of mean turbulent flow properties is incorrect since turbulent variation is an important factor in determining measurement uncertainty; see Section \ref{sec:measM} for more details.

% Theme: planning
A major contribution of this paper is also the experimental validation of the proposed framework, demonstrating that it is robust to the significant uncertainties that are present in the real-world; see \cite{meJ3_video2,meJ3_video1} for a visualization of such uncertainties. This is in contrast to the relevant literature that typically relies on numerical simulations to showcase the proposed methods \cite{EBSPMSNCMRF2013XCDM,LEFECCUR2019DGKS,MRSEMGMRF2020NKRD}.
Specifically, to collect the empirical data needed to train the GPs, we have custom-built a mobile robot sensor that collects and analyzes instantaneous velocity readings and extracts the mean flow properties. We incorporate the specific noise characteristics of our sensor rig into the GP models.
It is known that in the presence of uncertain model parameters, more information can be gained through online planning compared to measurements collected at a pre-determined set of locations; the higher the uncertainty, the more significant the gain \cite{NALGP2007KG}.
Thus, to maximize the amount of information collected about the flow field, we develop an active learning method that guides the robot through a sequence of waypoints that minimize the posterior entropy of unobserved areas in the domain, given available empirical data. In a simulation study with a known ground-truth, we show that our planning metric correlates very well with the true (unknown) error field; see Section \ref{sec:sim}.
%Although our proposed planning algorithm is not optimal, we show that it outperforms an alternative approach based on mutual information, that possesses a suboptimality bound \cite{NSPGP2008KSG}, both in terms of predictive performance and computational efficiency.

% Theme: planning
Compared to path planning methods in marine robotics \cite{LUOCPSRN2016HPBSS,IDPMLOM2016MLS,PETSD2017JH,TEOPPGF2016KBH,DLMAOC2017ESGW,OEOCSGDUV2019LYHA}, where flow fields are used for \textit{persistent} monitoring of aquatic phenomena with an emphasis on designing optimal paths subject to time and energy budget constraints, here the purpose of planning is to maximize the information collected about the flow field itself.
While those planning methods might be optimal in theory,  in practice they often require approximations to mitigate their computational cost \cite{PETSD2017JH}.
Moreover, sophisticated algorithms like the stochastic optimal control approach proposed in \cite{LEFECCUR2019DGKS} for data-driven environmental sensing, have only been demonstrated on simple uncertainty fields and it is unclear if they can be applied to the highly nonlinear uncertainty fields considered here.
%
% Theme: general active learning in GPs
In general, active learning of GPs has been extensively investigated in the robotics literature with applications ranging from estimation of nonlinear dynamics to spatiotemporal fields \cite{SLROAUNSGP2016BMSK,SRLCGP2015BS,LNMPCIVMRPT2014OSB,CCLNTDBNDPGP2014WLZFKO,LDSMSFMSR2017LS,PPPMTSRGRF2013LS}.
Closely related are also methods for robotic state estimation and planning with Gaussian noise; see \cite{DASEUA2017FLZ,OPPRAATL2015FMZ,meJ2}.
This literature typically employs simple, explicit models of small dimensions and does not consider model ambiguity or parameter uncertainty. Instead, here we focus on complex models of continuous flow fields that are implicit solutions of RANS models, a system of PDEs.
Although our planning method is not theoretically optimal, we show that it is effective in solving complex real-world environmental sensing problems. %To the best of our knowledge, such a capability has not been yet demonstrated.

% Theme: contributions
The contributions of this work can be summarized as follows. To the best of our knowledge, this is the first physics-based framework for robotic environmental sensing that is demonstrated \textit{in practice}. Since the computationally expensive numerical simulations needed to inform the prior mean of GP models are performed offline, our method can be implemented in \textit{real-time} onboard mobile robots.
%\blue{To this date, learning methods have been utilized to improve the computational cost \cite{AMLAFMO1999MMK,AEFSCN2016TSSP,DFSRF2015JSPG,CSMLSCFC2013BLK} and accuracy \cite{MLMDDTM2015ZD,EMLPRHRANSU2015LT} of RANS models, but do not address the shortcomings that make physics-based methods infeasible for mobile robots.
%On the other hand, active learning methods for robotic environmental sensing \cite{EBSPMSNCMRF2013XCDM,LEFECCUR2019DGKS,MRSEMGMRF2020NKRD} are typically purely data-driven making them unsuitable for problems in which collecting measurements is costly, e.g., estimation of turbulent flow fields.}
Compared to purely data-driven methods that are common in the literature, our method can produce high-fidelity global estimations using only sparse measurements.
Incorporating empirical data allows our method to resolve inconsistencies between numerical solutions and the true flow. Thus, a main contribution of this work is that it provides a systematic and tractable framework to combine principled physical models and empirical data to learn accurate models of complex environmental phenomena using mobile robots.
While the proposed planning method is not theoretically optimal, our planning metric correlates very well with the true (unknown) error field and, therefore, outperforms a baseline lattice placement and performs at least as well as a planning method based on mutual information that possesses a suboptimality bound \cite{NSPGP2008KSG} but is intractable for large domains.
Finally, an additional significant contribution of this work lies in the experimental validation of our method, demonstrating that it is robust to the real-world uncertainties present in these complex environmental sensing problems. %We show that our proposed planning framework is able to select the most accurate models, obtain the posterior distribution of the flow properties and, predict these properties at new locations with reasonable uncertainty bounds and better accuracy compared to purely numerical and data-driven approaches.
We note that although here we focus on turbulent flow fields, our framework can also be applied to robotic learning of other environmental phenomena that can be described using physical models, e.g., chemical concentration \cite{meJ1} or temperature \cite{NSPGP2008KSG} fields.

% Theme: structure of the paper
The remainder of the paper is organized as follows. In Section \ref{sec:turbF}, we discuss our model selection framework and develop a statistical model to capture the properties of turbulent flows. Section \ref{sec:path} describes the design of the mobile sensor, the characterization of measurement noise, and the proposed planning method. In Section \ref{sec:exp}, we present our results and in Section \ref{sec:disc}, we discuss possible future directions. Finally, Section \ref{sec:concl} concludes the paper.

% ------------------------------------------------------------------------------------------------------------------------------ %
\section{Statistical Model of Flow Fields} \label{sec:turbF}
Consider a turbulent flow field over a domain of interest $\bbarOmega \subset \reals^3$ and let $\bbq(x,t) : \bbarOmega \times  [0, T] \to \reals^3$ denote the corresponding flow velocity vector, where $x \in \bbarOmega$ and $t \in [0, T]$.
Due to turbulence, $\bbq(x,t)$ is a random variable (RV) subject to high variation.
Therefore, in engineering applications, we rather focus on the mean velocity field $\hhatbbq(x,t)$ and the turbulent intensity field $i(x,t)$ which is a measure of turbulent variations; we accurately define these quantities in the next section.
Consider a mobile robot sensor that can obtain instantaneous measurements of the random velocity field $\bbq(x,t)$ for a period of time at a set of locations and let the vector $\hhatbby$ denote the collection of these measurements. In Section \ref{sec:measM}, we elaborate on obtaining empirical measurements of the mean velocity and turbulent intensity fields from these instantaneous velocity readings. Given this notation, the problem that we address in the paper can be defined as follows.
\begin{prob} [Environmental Sensing] \label{prob:main}
Given the vector of measurements $\hhatbby$ of the instantaneous velocity field $\bbq(x,t)$, collected by a mobile robot, obtain  the posterior distributions $\bbarpi(\hhatbbq(x,t) | \hhatbby) \and \bbarpi(i(x,t) | \hhatbby)$ of the mean velocity field and turbulent intensity field.
\end{prob}

To this date, the robotics literature has mainly focused on purely data-driven approaches to solve this environmental sensing problem. As discussed in Section \ref{sec:intro}, since these methods rely on uninformative priors, they require prohibitively many measurements to accurately estimate complex fields. Instead, here we are interested in a physics-based solution that allows us to obtain more accurate estimation of the desired posterior distributions using fewer measurements.

% ------------------------------------------------------------- %
\subsection{Preliminaries on Turbulent Flow Fields} \label{sec:RANS}
Let $\bbq^j(x,t)$ denote $j$-th realization of the random velocity vector $\bbq(x,t)$ at location $x$ and time $t$. Given a total of $\hhatn$ realizations, the ensemble average is defined as
\begin{equation} \label{eq:ensemble}
\hhatbbq(x,t) = \lim_{\hhatn \to \infty} \frac{1}{\hhatn} \sum\nolimits_{j=1}^{\hhatn} \bbq^j(x, t) .
\end{equation}
This is the desired mean velocity in Problem \ref{prob:main} but cannot be computed in practice since it is impossible to take more than one measurement of the flow field at a given time and location.
Consider instead a decomposition of the velocity vector into its \textit{time-averaged} value $\bbq(x)$ and a perturbation as 
\begin{equation} \label{eq:velDec}
\bbq(x,t) = \bbq(x) + \bbepsilon(x,t), 
\end{equation}
 where
\begin{equation} \label{eq:time-avg}
\bbq(x) = \lim_{T \to \infty} \frac{1}{T} \int_{t_1}^{t_1+T} \bbq(x,t) dt .
\end{equation}
Assuming that the flow is \textit{stationary} or \textit{statistically steady}, the integral will be independent of $t_1$; set $t_1=0$ without loss of generality.
The stationary random vector $\bbq(x,t)$ is \textit{ergodic} if in the limit, the time average \eqref{eq:time-avg} converges to the ensemble average \eqref{eq:ensemble}, i.e., $\bbq(x) = \hhatbbq(x,t)$. In this paper we make the following assumption.

\begin{assumption} \label{assumption:ergodic}
Turbulent flow is stationary and ergodic.
\end{assumption}
This is a common assumption that allows us to obtain time-series measurements of the instantaneous velocity field at a given location and use the time-averaged velocity $\bbq(x)$ as a surrogate for the desired ensemble average $\hhatbbq(x,t)$ \cite{TMCFD1993W,HMTHWA2001J}. Assumption \ref{assumption:ergodic} implies that $\hhatbbq(t,x)$ is independent of time. Learning transient turbulent flows can be challenging since meaningful measurements of the flow field cannot be easily made; see Section \ref{sec:disc} for more details.

Next, we define the turbulent intensity field whose posterior we seek to estimate in Problem \ref{prob:main}. For this, we first need to define the root mean square (RMS) value of the fluctuations of the $k$-th velocity component
\begin{equation}
\bbq_{k,\text{rms}}(x) = \left( \lim_{T \to \infty} \frac{1}{T} \int_{0}^{T} \abs{\bbepsilon_k(x,t)}^2 dt \right)^{0.5} .
\end{equation}
Noting that $\bbepsilon(x,t) = \bbq(x,t) - \bbq(x)$ and using the ergodicity assumption, we have
\begin{equation}
\var[ \bbq_k(x,t) ] \triangleq \mbE \set{ \left[ \bbq_k(x,t) - \hhatbbq_k(x,t) \right]^2 } = \bbq_{k,\text{rms}}^2(x) ,
\end{equation}
where $\var[ \cdot ]$ denotes the variance of a RV. The average normalized RMS value is called turbulent intensity and is defined as
\begin{equation} \label{eq:turbI}
i(x) = \frac{ \norm{ \bbq_{\text{rms}}(x) } }{ q_{\text{ref}} } ,
\end{equation}
where $\norm{\cdot}$ denotes the Euclidean norm and $q_{\text{ref}}$ is a constant characteristic velocity used to normalize $i(x)$. Turbulent intensity is an isotropic measure of the average fluctuations caused by turbulence at point $x$.
 
Finally, we define the integral time scale $\bbt_k^*(x)$ of the turbulent flow field along the $k$-th velocity direction as
\begin{equation} \label{eq:intTimeScale}
\bbt_k^*(x) = \frac{1}{\bbrho_k(0)} \int_0^{\infty} \bbrho_k(\tau) \, d\tau ,
\end{equation}
where $\bbrho_k(\tau)$ is the autocorrelation of the velocity given by
\begin{equation}
\bbrho_k(\tau) = \lim_{T \to \infty} \frac{1}{T} \int_0^T \bbepsilon_k(x,t) \, \bbepsilon_k(x, t + \tau)  dt ,
\end{equation}
and $\tau$ is the autocorrelation lag.
We use the integral time scale to determine the sampling frequency in Section \ref{sec:smpFreq}. For more details on turbulence theory, see \cite{TMCFD1993W}.

% ------------------------------------------------------------- %
\subsection{Statistical Flow Model} \label{sec:statModel}
Let $\xi \in \reals^{n_{\xi}}$ encode the parameters needed to specify the domain $\bbarOmega$ and flow conditions imposed on its boundaries. Given $\xi$, we employ Reynolds-Averaged Navier Stokes (RANS) models to predict the flow properties like the mean velocity components and turbulent intensity, globally over $\bbarOmega$.
RANS models substitute the decomposition \eqref{eq:velDec} into the Navier Stokes equations to obtain a model for the mean flow velocity components. This introduces additional unknowns, called Reynolds stresses, in the momentum equation that require proper treatment to complete the set of equations; this is known as the `closure problem'. Depending on how this problem is addressed, various RANS models exist that are broadly categorized into eddy viscosity models (EVMs) and Reynolds stress models (RSMs) \cite{TMCFD1993W}.
The solution returned by these models can be incompatible with each other. Moreover, there is often high uncertainty in the parameters $\xi$, i.e., the domain geometry and boundary conditions (BCs). As a result, numerical solutions generally require experimental validation. Finally, solving RANS models onboard mobile robots in real-time is still intractable. In what follows, we propose a statistical framework to address these challenges, enabling for the first time, a physics-based solution to Problem \ref{prob:main}.

%On the other hand, empirical data is prone to noise and this uncertainty needs to be considered before deciding on a model that best matches the real flow field. Bayesian inference provides a natural approach to combine theoretical and empirical knowledge in a systematic manner. In this paper, we utilize Gaussian processes (GPs)\footnote{As defined in \cite{GPML2006RW}, `a GP is a collection of random variables, any finite number of which have a joint Gaussian distribution'.}
%to model uncertain flow fields; see \cite{GPML2006RW}.
%Using Bayesian inference with GPs and for a Gaussian likelihood function, we can derive the posterior distributions of the flow properties conditioned on available data in closed-form which is computationally very advantageous.
%
To do so, we first define Gaussian Process (GP) models for the mean velocity and turbulent intensity fields. As we show in Proposition \ref{prop:CLT}, this choice is justified since the time-averaged velocity components are normally distributed regardless of the underlying distribution of the instantaneous velocity field. Specifically, let $u(x,t) = \bbq_1(x,t)$ denote the first instantaneous velocity component with ensemble mean $\hhatbbq_1(x,t)$ and variance $\var[\bbq_1(x,t)]$; note that these quantities are fixed but unknown. Then, we have the following result.
\begin{prop} [Gaussianity of Mean Velocity Components] \label{prop:CLT}
Let Assumption \ref{assumption:ergodic} hold. Assume also that $n$ \textit{independent} samples $u(x,t_l)$ of the first velocity component are given over time where $l \in \set{1, \dots, n}$, and define the sample time-average
$ u(x) = {1}/{n} \sum_{l=1}^n u(x,t_l) .$
Then, for large enough values of $n$, we approximately have
$$ u(x) \sim \ccalN \left( \hhatbbq_1(x,t), \frac{ \var[\bbq_1(x,t)] }{n} \right) .$$
A similar result holds for the other velocity components.
\end{prop}
\begin{proof}
This result directly follows from the central limit theorem (CLT) \cite{FCP2002R}. See Appendix \ref{app:CLT} for details.
%According to the ergodicity assumption, the time-averaged velocity component $u(x)$ is equal to the ensemble average $\hhatbbq_1(x,t)$. 
%%
%Also, since the flow is statistically steady, at a given spatial location $x$, the samples $u(x,t_k)$ are identically distributed.
%Moreover, since $u(x,t)$ is a physical quantity, its variance is bounded. Then, from the CLT it follows that as $n \to \infty$, the distribution of $u(x)$ approaches a normal distribution. More specifically,
%%
%$$ \sqrt{n} \, \left( u(x) - \hhatbbq_1(x,t) \right) \rightarrow \ccalN \left( 0, \var[\bbq_1(x,t)] \right) .$$
%%
%A similar result can be obtained for other velocity components.
\end{proof}
Note that by the ergodicity assumption, the continuous time-average \eqref{eq:time-avg} and the sample time-average in Proposition \ref{prop:CLT} are both equivalent to the ensemble average.\footnote{As a rule of thumb, $n>30$ samples are necessary for the approximation to hold \cite{FCP2002R}. Noting that the continuous time-average \eqref{eq:time-avg} is infeasible in practice, in the following we drop the term `sample' when there is no confusion. Also, we use the terms `time-averaged', `averaged', and `mean' interchangeably.}
However, for the distribution to approach the specified normal distribution, the samples must be \textit{independent}. In Section \ref{sec:smpFreq} we discuss how to obtain such independent samples.

Given Proposition \ref{prop:CLT}, we can now define GP models to capture the distribution of the mean velocity components.
Specifically, given a value for parameters $\xi$, we solve a RANS model, e.g., RSM, to obtain a prediction for the flow properties. Let $\mu_u(x)$ denote the prediction for the first mean velocity component $u(x)$. Then, we model the prior distribution of $u(x)$, before collecting any measurements, using the following GP
\begin{equation} \label{eq:GPu}
u(x) \sim \ccalG\ccalP(\mu_u(x), \bbarkappa_u(x,x')) ,
\end{equation}
where, the kernel function $\bbarkappa_u(x, x')$ is defined as
\begin{equation} \label{eq:kernel}
\bbarkappa_u(x,x') = \bbarsigma_u^2(x,x') \, \rho(x,x').
\end{equation}
In \eqref{eq:kernel}, the standard deviation $\bbarsigma_u(x,x') \in \reals_+$ encapsulates the prior uncertainty in $u(x)$ and $\rho(x,x')$ is the correlation function. Explicitly, we define the standard deviation as
\begin{equation} \label{eq:ustd}
\bbarsigma_u^2(x,x') = \bbarsigma_{u,0}^2 + \frac{ 1 }{n_0} \, q^2_{\text{ref}} \, i(x) \, i(x'),
\end{equation}
where the constant $\bbarsigma_{u,0}  \in \reals_+$ is a measure of confidence in the numerical solution $\mu_u(x)$ and is selected depending on the convergence metrics provided by the RANS solver. The second term in \eqref{eq:ustd}, which is unavailable in purely data-driven approaches, captures the local variability due to turbulence, by relying on the estimation of turbulent intensity $i(x)$ provided by the RANS model. $n_0 \in \naturals_+$ is a nominal number of samples that scales this variability for the \textit{averaged} velocity component, similar to Proposition \ref{prop:CLT}.
We define the correlation function $\rho(x,x')$ in \eqref{eq:kernel} as a compactly supported polynomial 
\begin{equation} \label{eq:correlation}
\rho(x,x') = \left[ \left(1 - \frac{ \norm{x-x'} }{\ell} \right)_+ \right]^2,
\end{equation}
where $\ell \in \reals_+$ is the correlation characteristic length and the operator $(\alpha)_+ = \max(0, \alpha)$.
The correlation function \eqref{eq:correlation} implies that two points with distance larger than $\ell$ are uncorrelated, which results in sparse covariance matrices \cite{GPML2006RW}. %In the following we use a constant standard deviation, i.e., we set $\bbarsigma(x,x') = \bbarsigma$.

\begin{prop} [Validity of Kernel Function] \label{prop:kernel}
The kernel function \eqref{eq:kernel} with standard deviation \eqref{eq:ustd} and correlation function \eqref{eq:correlation} constitutes a valid kernel function.
\end{prop}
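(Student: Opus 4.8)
The plan is to verify that $\bbarkappa_u$ in \eqref{eq:kernel} is a valid kernel by checking its two defining properties, symmetry and positive semi-definiteness, and to obtain the latter from the standard closure properties of the cone of valid kernels. Symmetry is immediate: the correlation function \eqref{eq:correlation} depends on $x,x'$ only through $\norm{x-x'}$, and the standard deviation \eqref{eq:ustd} is symmetric in $i(x)$ and $i(x')$. What remains is to show that for every finite collection $\{x_1,\dots,x_m\} \subset \bbarOmega$ the Gram matrix $K = \bigl[\bbarkappa_u(x_p,x_q)\bigr]_{p,q=1}^{m}$ is positive semi-definite.

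First I would handle the standard deviation factor \eqref{eq:ustd}. The constant $\bbarsigma_{u,0}^2$ contributes a positive multiple of the all-ones matrix, which is positive semi-definite. The term $\tfrac{1}{n_0}\,q_{\text{ref}}^2\, i(x)\, i(x')$ has the rank-one form $f(x)f(x')$ with $f(x) = (q_{\text{ref}}/\sqrt{n_0})\, i(x) \ge 0$, so its Gram matrix is an outer product and hence positive semi-definite. Since a nonnegative sum of positive semi-definite kernels is again positive semi-definite, $\bbarsigma_u^2(\cdot,\cdot)$ is a valid kernel.

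Next I would argue that the correlation function \eqref{eq:correlation} is positive semi-definite. Writing $\rho(x,x') = \phi\bigl(\norm{x-x'}/\ell\bigr)$ with $\phi(r) = \bigl[(1-r)_+\bigr]^2$, this is, up to the length rescaling by $\ell$, the minimal-degree Wendland compactly supported radial basis function that is positive definite on $\reals^d$ for $d \le 3$; equivalently, by the classical result of Askey, the truncated power $(1-r)_+^{\lambda}$ is positive definite on $\reals^d$ whenever $\lambda \ge (d+1)/2$, which for $d = 3$ holds with $\lambda = 2$. Since the flow domain satisfies $\bbarOmega \subset \reals^3$, this dimensional condition is met, so the Gram matrix of $\rho$ is positive semi-definite. (As an alternative, one could invoke Bochner's theorem and check directly that the radial Fourier transform of $\phi$ over $\reals^3$ is nonnegative.) Finally, by the Schur product theorem the entrywise product of two positive semi-definite matrices is positive semi-definite; applying this to $K$, which is the entrywise product of the Gram matrix of $\bbarsigma_u^2$ and that of $\rho$, yields $K \succeq 0$, which together with symmetry proves the claim.

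The step needing the most care, and the only place the geometry really enters, is the positive-definiteness of $\rho$: it is essential both that the exponent in \eqref{eq:correlation} equals $2$ and that $\bbarOmega$ is three-dimensional, since the plain tent function $(1-r)_+$ is positive definite only on $\reals^1$. Everything else is routine, reducing to the elementary facts that constant kernels and rank-one kernels $f(x)f(x')$ are positive semi-definite and that the cone of valid kernels is closed under nonnegative sums and Schur products.
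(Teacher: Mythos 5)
Your proof is correct and follows the same overall architecture as the paper's: decompose $\bbarkappa_u$ into the variance factor \eqref{eq:ustd} times the correlation factor \eqref{eq:correlation}, verify each factor is a valid kernel, and close with the product (Schur) rule. The one substantive difference is how the correlation function is handled. The paper simply asserts that \eqref{eq:correlation} ``is a valid kernel itself'' and points to \cite[Sec 4.2]{GPML2006RW}, whereas you actually justify this step by identifying $\rho$ as the minimal-degree Wendland/Askey truncated power $(1-r)_+^{\lambda}$ with $\lambda=2$, which is positive definite on $\reals^d$ precisely because $\lambda \ge (d+1)/2$ holds for $d=3$. This is the only nontrivial point in the whole proposition --- the plain tent function $(1-r)_+$ is \emph{not} positive definite on $\reals^3$, so the exponent $2$ in \eqref{eq:correlation} and the dimension of $\bbarOmega$ both matter, and your proof makes this dependence explicit where the paper leaves it implicit. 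The remaining steps (the all-ones kernel from $\bbarsigma_{u,0}^2$, the rank-one kernel $f(x)f(x')$ with $f \propto i$, closure under nonnegative sums and entrywise products) match the paper's argument essentially verbatim. In short: same route, but your version supplies the missing justification for the step that actually carries the mathematical content.
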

\begin{proof}
See Appendix \ref{app:kernel}.
%This follows from the fact that the kernel function \eqref{eq:kernel} is the composition of multiple valid kernel function. First, note that the correlation function \eqref{eq:correlation} is a valid kernel itself. Regarding the standard deviation \eqref{eq:ustd}, note that multiplication of any function with itself, i.e., $\kappa_1(x,x') = i(x) i(x')$, constitutes a valid kernel. Moreover, scaling and addition with positive constants $q^2_{\text{ref}}/n_0$ and $\bbarsigma_{u,0}^2$, respectively, preserves the validity of the kernel. Thus the standard deviation \eqref{eq:ustd} is a valid kernel as well. Finally, the multiplication of two valid kernels results in a valid kernel meaning that the kernel function \eqref{eq:kernel} is valid as desired. See \cite[Sec 4.2]{GPML2006RW} for details.
\end{proof}

In practice, it is impossible to obtain noiseless samples as assumed in Proposition \ref{prop:CLT}. Thus, we consider a measurement model for $u(x)$ with additive Gaussian noise $\epsilon_u \sim \ccalN(0, \sigma_u^2(x))$; see Section \ref{sec:measM} for justification. Specifically, let $y_u(x) \in \reals$ denote a measurement of the first mean velocity component at a location $x$ given by
\begin{equation} \label{eq:measModel}
y_u(x) = u(x) + \epsilon_u(x) .
\end{equation}
Then $y_u(x)$ is also a GP
\begin{equation} \label{eq:GP}
y_u(x) \sim \ccalG\ccalP( \mu_u(x), \kappa_u(x,x') ) 
\end{equation}
with the following kernel function
\begin{equation} \label{eq:measKernel}
\kappa_u(x,x') = \bbarkappa_u(x,x') + \sigma_u^2(x) \, \delta(x-x'),
\end{equation}
where $\delta(x-x')$ is the Kronecker delta function.

Given a vector of measurements $\bby_{u,k}$ collected by the mobile robot at a set of $k$ locations $\ccalX_k$, the predictive distribution of $u(x)$ at a point $x$, conditioned on measurements $\bby_{u,k}$, is a Gaussian distribution whose mean and variance are given in closed-form by
\begin{subequations} \label{eq:inference}
\begin{eqnarray}
\mu_u(x| \ccalX_k) &=& \mu_u(x) + \barbSigma_{x\ccalX} \bbSigma_{\ccalX \ccalX}^{-1} \left(\bby_{u,k} - \bbmu_{\ccalX} \right) , \label{eq:predMean}	\\
\gamma_u^2(x | \ccalX_k) &=& \bbarkappa_u(x,x) - \barbSigma_{x\ccalX} \bbSigma_{\ccalX \ccalX}^{-1} \barbSigma_{\ccalX x} , \label{eq:predVar}
\end{eqnarray}
\end{subequations}
where $\bbmu_{\ccalX}$ denotes the mean function evaluated at measurement locations $\ccalX_k$ and the entries of the covariance matrices $\barbSigma_{x\ccalX}$ and $\bbSigma_{\ccalX \ccalX}$ are computed using \eqref{eq:kernel} and \eqref{eq:measKernel}, respectively. Note that for simplicity, we have dropped the subscripts $u \and k$ from the matrices in \eqref{eq:inference}. By Proposition \ref{prop:kernel}, the matrix $\bbSigma_{\ccalX \ccalX}$ is positive-definite and invertible.

From the Navier-Stokes PDEs, it follows that the mean velocity components are correlated \cite{TMCFD1993W}. However, since the prior fields, obtained from RANS models, already capture this correlation, to simplify the pursuant development, we make the following assumption.
\begin{assumption} \label{ass:velCompCorr}
The turbulent flow properties are independent and thus, uncorrelated.
\end{assumption}
Then, we can independently define GPs for the second mean velocity component $v$ and the turbulent intensity field with appropriate subscripts.\footnote{We only consider in-plane velocity components throughout the paper. The extension of the theoretical developments for the third component is trivial.}
%Specifically, for the first mean velocity component, we then have}
%%
%\begin{equation} \label{eq:GPu}
%u(x) \sim \ccalG\ccalP(\mu_u(x), \kappa_u(x,x'))
%\end{equation}
%%
%with mean $\mu_u(x)$ obtained from the numerical solution of a RANS model and the standard deviation in \eqref{eq:kernel} \blue{denote by $\bbarsigma_u$}.
%This value is a measure of confidence in the numerical solution $\mu_u(x)$ and can be adjusted to reflect this confidence depending on the model that is utilized and the convergence metrics provided by the RANS solver.
%GPs for the other velocity components $v$ and $w$ can be defined in a similar way.
%%
%On the other hand, 
Specifically, for the turbulent intensity $i(x)$, we define the prior GP as
\begin{equation} \label{eq:GPi}
i(x) \sim \ccalG \ccalP (\mu_i(x), \bbarkappa_i(x,x')) ,
\end{equation}
where we select the prior mean $\mu_i(x)$ from the numerical solution of the RANS model and use a constant standard deviation $\bbarsigma_i(x,x') = \bbarsigma_{i,0}$ in the kernel function \eqref{eq:kernel}. Note that unlike the mean velocity components that by Proposition \ref{prop:CLT} are normally distributed, modeling $i(x)$ as a GP is an assumption that allows us to take advantage of the closed-form posteriors \eqref{eq:inference}.

% ------------------------------------------------------------- %
\subsection{Hierarchical Bayesian Model Selection} \label{sec:hierarchical}
In Section \ref{sec:statModel}, we constructed GP models of the flow properties given known parameters $\xi$ using a specific RANS model. However, as discussed before, models constructed in this way, can be inaccurate due to the assumptions made to derive the RANS model and high uncertainty in the parameters $\xi$. Next, we outline a model selection method that takes into account this parameter uncertainty to generate \textit{offline}, a pool of likely models that later, the robot can select from and improve upon based on empirical data that it collects online.

Let $\ccalN$ denote a RANS model and consider a possibly uniform discrete prior distribution $\tdpi(\ccalN)$ over the available models. Furthermore, let $\tdpi(\xi)$ denote the discrete prior distribution on the domain geometry and BC parameters $\xi$. If the prior on the parameters is continuous, we can construct a discrete approximation $\tdpi(\xi)$ using stochastic reduced order models (SROMs); see \cite{meC3} for details.
Let $\ccalM = (\ccalN, \xi)$ denote the numerical solution obtained using the RANS model $\ccalN$ given the parameters $\xi$. Noting that $\ccalN \and \xi$ are independent, $\tdpi(\ccalM) = \tdpi(\ccalN) \tdpi(\xi)$. Let $\ccalM_j$ denote the $j$-th numerical model obtained for one combination of discrete $\ccalN$ and $\xi$ values and let the collection $\tdpi(\ccalM) = \set{p_{j,0}, \ccalM_j}_{j=1}^{\bbarn}$ denote the pool of numerical models, where $p_{j,0}$ denotes the prior probability of model $\ccalM_j$ and $\bbarn$ is the number of models.

Given measurements $\bby_{u,k}, \bby_{v,k}, \and \bby_{i,k}$ of the mean velocity components and turbulent intensity at a set of $k$ locations $\ccalX_k$, the posterior distribution over models can be obtained using Bayes' rule as
\begin{align} \label{eq:hierarchical}
\tdpi(\ccalM_j | \ccalX_k) &= \alpha \, \bbarpi(\bby_{u,k}, \bby_{v,k}, \bby_{i,k} | \ccalM_j) \, \tdpi(\ccalM_j) \\
& =\alpha \, \bbarpi(\bby_{u,k} | \ccalM_j) \, \bbarpi(\bby_{v,k} | \ccalM_j) \, \bbarpi(\bby_{i,k} | \ccalM_j) \, \tdpi(\ccalM_j) , \nonumber
\end{align}
where $\alpha$ is the normalizing constant in Bayes' rule and $ \bbarpi(\bby_{u,k} | \ccalM_j)$ is the likelihood of the measurements $\bby_{u,k}$ given model $\ccalM_j$ and similarly for $\bby_{v,k} \and \bby_{i,k}$. Note that the joint likelihood of the measurements in \eqref{eq:hierarchical} is equivalent to the product of the individual likelihoods since the flow properties are independent; see Assumption \ref{ass:velCompCorr}.
From the definition of the GPs constructed for model $\ccalM_j$ in Section \ref{sec:statModel}, we can obtain these likelihoods in closed-form. For instance
\begin{align}
\bbarpi(\bby_{u,k} | \ccalM_j) =& \det(2 \pi \, \bbSigma_{u,j} )^{-0.5} \\
& \exp \left( -\frac{1}{2} (\bby_{u,k} - \bbmu_{u,j})^T \bbSigma_{u,j}^{-1} (\bby_{u,k} - \bbmu_{u,j}) \right) , \nonumber
\end{align}
where $\bbmu_{u,j}$ and $\bbSigma_{u,j}$ are short-hand notation for the mean and covariance of the GP corresponding to $y_u(x) \and \ccalM_j$ at locations $\ccalX_k$; see the discussion after equation \eqref{eq:inference}. Since the sum of the discrete posterior model probabilities equals one, i.e., $\sum_{k=1}^{\bbarn} \tdpi(\ccalM_j | \ccalX_k) = 1$, we can compute the normalizing constant $\alpha$ from \eqref{eq:hierarchical} as
\begin{equation}
\alpha = \left( \sum\nolimits_{j=1}^{\bbarn} \bbarpi(\bby_{u,k}, \bby_{v,k}, \bby_{i,k} | \ccalM_j) \, \tdpi(\ccalM_j) \right)^{-1} .
\end{equation}
Given $\alpha$, we can finally compute the posterior distribution $\tdpi(\ccalM_j | \ccalX_k)$ over the pool of models using \eqref{eq:hierarchical}. This amounts to Bayesian \textit{model selection} and enables the mobile robot to assign probabilities to models constructed for likely parameter values, given the latest empirical data.
Note that although hierarchical Bayesian models are extensively used in the literature \cite{EBSPMSNCMRF2013XCDM,LEFECCUR2019DGKS,NALGP2007KG}, this work is the first to utilize them for physics-based robotic environmental sensing.

Given $\tdpi(\ccalM_j | \ccalX_k)$, the desired posterior distributions in Problem \ref{prob:main} are the marginal distributions $\pi(u | \ccalX_k), \pi(v | \ccalX_k)$, and $\pi(i | \ccalX_k)$ after integrating over the models.
These marginal distributions are GP mixtures (GMs) with their mean and variance given by
\begin{subequations} \label{eq:GM}
\begin{align}
\mu_u(x | \ccalX_k) &= \sum\nolimits_{j=1}^{\bbarn} p_{j,k} \, \mu_u(x | \ccalX_k, \ccalM_j), 	\label{eq:GMmean}	\\
\gamma_u^2(x | \ccalX_k) &= \sum\nolimits_{j=1}^{\bbarn} p_{j,k} \, \gamma_u^2(x | \ccalX_k, \ccalM_j)  \label{eq:GMvar} \\
& + \sum\nolimits_{j=1}^{\bbarn} p_{j,k} \, \left[ \mu_u(x | \ccalX_k, \ccalM_j) - \mu_u(x | \ccalX_k) \right]^2 ,	\nonumber
\end{align}
\end{subequations}
where $p_{j,k} = \tdpi(\ccalM_j | \ccalX_k)$ denotes the posterior model probabilities obtained from \eqref{eq:hierarchical}. Equation \eqref{eq:GMvar} follows from the fact that the variance of a RV is the mean of conditional variances plus the variance of the conditional means. The expressions for $v(x) \and i(x)$ are identical.

\subsection{Measurement Model} \label{sec:measM}
Given a set of instantaneous velocity readings over a period of time at a given point $x$, in this section we discuss the measurement model \eqref{eq:measModel} for the mean velocity components and turbulent intensity; see Section \ref{sec:mobSen} for details on collecting such instantaneous velocity readings using a mobile robot. First, we make the following assumption, which we further discuss in Section \ref{sec:measNoise}.
%Note that by the ergodicity assumption, these time samples are equivalent to multiple realizations of the RV $\bbq(x,t)$; see Section \ref{sec:RANS}.

\begin{assumption} \label{assumption:noise}
Instantaneous sensor noise on velocity readings is additive with a zero-mean normal distribution.
\end{assumption}

% ------------------------------------------------------------- %
\subsubsection{Mean Velocity Components} \label{sec:meanVelComp}
Consider an observation $y_u(x,t)$ of the instantaneous first velocity component $u(x,t)$ at a point $x$ and time $t$, given by
\begin{equation} \label{eq:meas}
y_u(x,t) = u(x,t) + \epsilon_{u,s}(x,t) , 
\end{equation}
where $\epsilon_{u,s}(x,t) \sim \ccalN(0, \gamma_{u,s}^2 )$ is the instantaneous sensor noise with standard deviation $\gamma_{u,s}(x,t) \in \reals_+$. 
Assume that we collect $n$ samples of $y_u(x,t)$ at time instances $t_l$ for $1 \leq l \leq n$. Then, the sample mean of these observations given by
\begin{equation} \label{eq:meanEstim}
y_u(x) = \frac{1}{n} \sum\nolimits_{l=1}^n y_u(x,t_l),
\end{equation}
is a measurement of the first mean velocity component $u(x)$.

As in Proposition \ref{prop:CLT}, assume that the samples $y_u(x,t_l)$ are \textit{independent}, and thus uncorrelated. Then, the variance of $y_u(x)$ is given by
\begin{align} \label{eq:measVar}
\var[y_u(x)] &= \frac{1}{n^2} \sum\nolimits_{l=1}^n \var[y_u(x,t)] = \frac{1}{n} \, \var[y_u(x,t)] .
\end{align}
An unbiased estimator of $\var[y_u(x,t)]$ is the sample variance
\begin{equation} \label{eq:smpVar}
\hat{\var}[y_u(x,t)] =  \frac{1}{n-1} \sum\nolimits_{l=1}^n \left[ y_u(x,t_l) - y_u(x) \right]^2 . 
\end{equation}
Substituting this estimator in \eqref{eq:measVar}, we get an estimator for the variance of the mean velocity measurement $y_u(x)$ as
\begin{align} \label{eq:smpMvar}
\hat{\var}[y_u(x)] &= \frac{1}{n \, (n-1)} \sum\nolimits_{l=1}^n \left[ y_u(x,t_l) - y_u(x) \right]^2 . 
\end{align}

The samples $y_u(x,t_l)$ are not identically distributed since the sensor noise $\epsilon_{u,s}(x,t)$ in \eqref{eq:meas} is time-dependent; see Section \ref{sec:measNoise}. Nevertheless, they are normally distributed by Assumption \ref{assumption:noise}. Consequently, being a linear combination of these samples, $y_u(x)$ in \eqref{eq:meanEstim} is also normally distributed, justifying our measurement model in \eqref{eq:measModel}. Furthermore, since $\epsilon_{u,s}(x,t)$ is zero-mean, by Proposition \ref{prop:CLT}, $y_u(x)$ is an unbiased estimator of the ensemble average $\hhatbbq_1(x)$ and by \eqref{eq:smpMvar}, it becomes more accurate as the number of samples $n$ increases.
Note that, in addition to the uncertainty caused by sensor noise, captured in the sample mean variance \eqref{eq:smpMvar}, other sources of uncertainty may also contribute to the measurement uncertainty $\sigma_u^2(x)$ in \eqref{eq:measModel}. In Section \ref{sec:measNoise}, we derive an estimator for $\sigma_u^2(x)$ taking into account these additional sources of uncertainty which are due to the mobile robot that collects the measurements.

Next, we take a closer look at $\var[y_u(x,t)]$ and the terms that contribute to it. Since the variance of the sum of uncorrelated RVs is the sum of their variances, from \eqref{eq:meas} we have $\var[y_u(x,t)] = \var[u(x,t)] + \gamma_{u,s}^2(x,t)$.
Furthermore, since the samples $y_u(x,t_l)$ are uncorrelated, we have
$ \sum\nolimits_{l=1}^n \var[y_u(x,t_l)] = \sum\nolimits_{l=1}^n \set{ \var[u(x,t_l)] + \gamma_{u,s}^2(x,t_l) } ,$
%
%\begin{equation*} \label{eq:varDecom}
%\sum_{k=1}^n \var[y_u(x,t_k)] = \sum_{k=1}^n \set{ \var[u(x,t_k)] + \gamma_{u,s}^2(x,t_k) } ,
%\end{equation*}
%
and thus,
\begin{equation} \label{eq:varDecom}
\var[y_u(x,t)] = \var[u(x,t)] + \bbargamma_{u,s}^2(x) ,
\end{equation}
where
\begin{equation} \label{eq:varMean}
\bbargamma_{u,s}^2(x) = \frac{1}{n} \sum\nolimits_{l=1}^n \gamma_{u,s}^2(x,t_l)
\end{equation}
is the mean variance of the sensor noise. We derive an expression for the instantaneous variance $\gamma_{u,s}^2(x,t)$ in Section \ref{sec:measNoise}.
Notice from equation \eqref{eq:varDecom} that the variance of $y_u(x,t)$ is due to turbulent fluctuations and sensor noise $\epsilon_{u,s}(x,t)$. The former is a variation caused by turbulence and inherent to the RV $u(x)$ whereas the latter is due to the inaccuracy in sensors and further contributes to the uncertainty in $y_u(x)$.

%Referring to Section \ref{sec:RANS} and definition of turbulent intensity \eqref{eq:turbI}, $\var[q(x,t)] \approx i^2(x) q^2(x) \approx i^2(x) \mu^2_q(x)$. Note that the variance of velocity magnitude is not exactly given by turbulence intensity \eqref{eq:turbI} since mean of the norm does not equal the norm of mean. Thus, the expression above is an approximation.

% ------------------------------------------------------------- %
\subsubsection{Turbulent Intensity} \label{sec:}
Recall from Section \ref{sec:RANS} that $\bbq_{1,\text{rms}}^2(x) = \var[u(x,t)]$ and similarly for the other velocity components. Then, from the definition of turbulent intensity in \eqref{eq:turbI} and using equations \eqref{eq:varDecom} and \eqref{eq:smpVar}, we can get a measurement of $i(x)$ as
\begin{align} \label{eq:turbIapprox}
y_i(x) = \frac{1}{ q_{\text{ref}} } & \left[ \hat{\var}[y_u(x,t)] - \bbargamma_{u,s}^2(x)  \right.  \\
& \left. + \hat{\var}[y_v(x,t)]  - \bbargamma_{v,s}^2(x) \right]^{0.5} , \nonumber
\end{align}
where the variance $\hat{\var}[y_v(x,t)]$ of the instantaneous second velocity measurement and the corresponding mean sensor noise variance $\bbargamma_{v,s}^2(x)$ are computed using equations similar to \eqref{eq:smpVar} and \eqref{eq:varMean}, respectively.

Let $\epsilon_i(x) \sim \ccalN(0, \sigma_i^2)$ denote the additive error in measurement model \eqref{eq:measModel} for turbulent intensity. Unlike the mean velocity measurements, due to the nonlinearity in definition \eqref{eq:turbI}, it is not straight-forward to theoretically obtain an expression for the variance $\sigma_i^2$ and instead, we utilize the Bootstrap resampling method to directly estimate it; see Appendix \ref{app:turbIvar} for details.

%In general, estimating this variance requires the knowledge of higher order moments of the random velocity components; cf. \cite{TBUETS1996BG}. Specifically under a Gaussianity assumption, they depend on the mean values $y_u(x), y_v(x)$ and $y_w(x)$ which are not necessarily negligible. Consequently, we need to incorporate this uncertainty into our statistical model.
%
%Here, we utilize the Bootstrap resampling method to directly estimate the variance $\sigma_i^2$. \red{\hhati(x)} Assume that the samples $y_u(x,t_k)$ are independent and consider the measurement set $\ccalY_u = \set{y_u(x,t_k) \, | \, 1 \leq k \leq n}$; define $\ccalY_v$ and $\ccalY_w$ similarly. Furthermore, consider $n_b$ batches $\ccalB_j$ of size $n$ obtained by randomly drawing the same samples from $\ccalY_u$, $\ccalY_v$, and $\ccalY_w$ with replacement. Using \eqref{eq:turbIapprox}, we obtain $n_b$ estimates $\hhati_j(x)$ corresponding to the batches $\ccalB_j$. Then, the desired variance $\sigma_i^2(x)$ can be estimated as
%%
%\begin{equation} \label{eq:turbIvar}
%\hat{\sigma}_i^2(x) = \frac{1}{n_b-1} \sum_{j=1}^{n_b} (\hhati_j - \bbari)^2 ,
%\end{equation}
%%
%where $\bbari(x)$ is the mean of the batch estimates $\hhati_j(x)$; see \cite{IVE2007W} for more details.

% ------------------------------------------------------------- %
\subsubsection{Sampling Frequency} \label{sec:smpFreq}
In the preceding analysis, a key assumption is that the samples are independent. Since fluid particles cannot have unbounded acceleration, their velocities vary continuously and thus consecutive samples are dependent. As a general rule of thumb, in order for the samples to be independent, the interval between consecutive sample times $t_l$  and $t_{l+1}$ must be larger than $2 \, t^*(x)$, where 
$ t^*(x) = \max_k \bbt_k^*(x) $
is the maximum of integral time scales \eqref{eq:intTimeScale} along different directions at point $x$. See Appendix \ref{app:smpFreq} for practical guidelines on approximating \eqref{eq:intTimeScale} given a finite set of samples.

Since a large number of samples are needed to ensure the reliability of mean velocity and turbulent intensity measurements and these samples need to be temporally separated, both the time and energy cost of collecting measurements can be significant. This cost, which is often ignored in the relevant literature, can make purely data-driven approaches prohibitively expensive and impractical, as discussed in Section \ref{sec:intro}.

%In practice, we only have a finite set of samples of the instantaneous velocity vector over time and can only approximate \eqref{eq:intTimeScale}. Let $l$ denote the discrete lag. Then, the sample autocorrelation of first velocity component is given by
%%
%$$ \hat{\rho}_u(l) = \frac{1}{n} \sum_{k=1}^{n-l} [y_u(x,t_k) - y_u(x)] [y_u(x,t_{k+l}) - y_u(x)] . $$
%%
%This approximation becomes less accurate as $l$ increases since the number of samples used to calculate the summation decreases. Furthermore, the integral of the sample autocorrelation $\hat{\rho}_u(l)$ over the range $1 \leq l \leq n-1$ is constant and equal to $0.5$; see \cite{SSAF2009H}. This means that we cannot directly use \eqref{eq:intTimeScale} which requires integration over the whole time range. The most common approach is to integrate $\hat{\rho}_u$ up to the first zero-crossing; see \cite{AFDIL2004NNDJ}.

% ------------------------------------------------------------------------------------------------------------------------------ %
\section{Learning Flow Fields using a Mobile Robot} \label{sec:path}
In this section we develop a mobile robot sensor to measure the instantaneous velocity vector and extract the necessary measurements discussed in Section \ref{sec:measM}. We also characterize the uncertainty associated with collecting these measurements with a mobile robot and formulate a path planning problem to maximize the information content of these measurements.

% ------------------------------------------------------------- %
\subsection{Mobile Robot Design} \label{sec:mobSen}
%
%Given sensors that measure the inflow velocity magnitude in their axial direction, we design a flow velocity sensor set for a small robot so that it can measure the velocity angle and magnitude in a plane parallel to the ground. Note that any obstruction of the flow will decrease the accuracy of readings. Thus, to minimize interference from the mobile sensor on the flow pattern, we select a small differential drive robot that carries the sensors and collects measurements. %We also need to make sure that the structure of the robot do not change the flow pattern before it reaches the sensors.
%
%Furthermore, since the sensors only measure the flow accurately in their axial direction, we need to take into account their directivity characteristics.

%For simplicity, in what follows we focus on the in-plane components of the velocity field and take measurements on a plane parallel to the ground. Extension to the 3D case is straightforward.
To measure the in-plane velocity components, we use a set of D6F-W01A1 flow sensors from \textsc{OMRON}; see Section \ref{sec:results} for more details. Figure \ref{fig:directivity} shows the directivity pattern for this sensor measured empirically.
\begin{figure}[t!]
	\centering
	\includegraphics[width=0.4\textwidth]{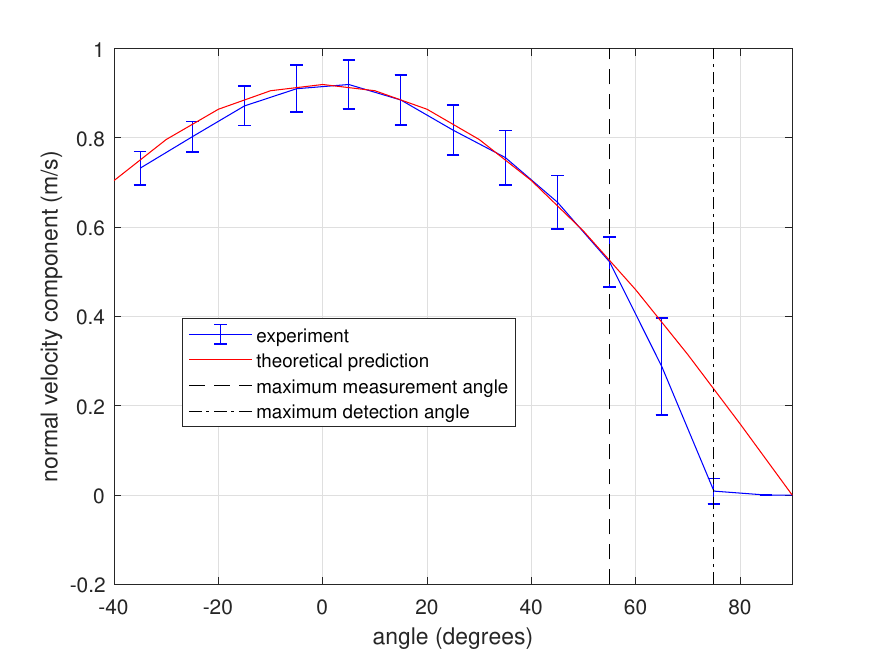}
	\caption{Directivity plot of the flow sensor used in the mobile robot where we measure the angle from the axis of the sensor. Above $55^o$ the measurements deviate from the theoretical prediction and above $75^o$ there is no detection. The bars on the experimental data show the standard deviation of the sensor samples caused by turbulent fluctuations and sensor noise. } \label{fig:directivity}
\end{figure}
In this figure, the normal velocity component $q \cos \hat{\theta}$ is given as a function of the angle $\hat{\theta}$ of the flow from the axis of the sensor. It can be seen that there exists an angle above which the measurements deviate from the mathematical formula and for an even larger angle, no flow is detected at all. The numerical values for these angles are roughly $55^o$ and $75^o$ for the sensor used here. This directivity pattern can be used to determine the maximum angle spacing between sensors mounted on a plane perpendicular to the robot's $z$-axis, so that any velocity vector on the $x-y$ plane can be measured with adequate accuracy. This spacing is $45^o$, which means that eight sensors are sufficient to sweep the plane.
%In Figure \ref{fig:directivity}, the bars show the oscillations in the flow. Note that below $35^o$, these oscillations increase meaning that the measurements of turbulent intensity \eqref{eq:turbI} also will be inaccurate below this angle.

\begin{figure}[t!]
  \begin{minipage}[c]{0.18\textwidth}
	\includegraphics[width=\textwidth]{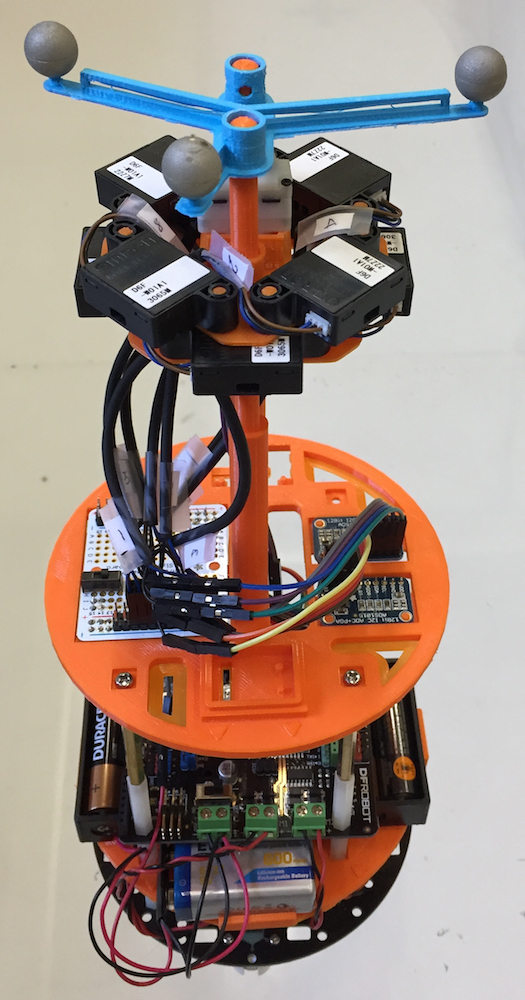}
  \end{minipage}
  \hspace{1mm}
  \begin{minipage}[t][][c]{0.29\textwidth}
  	\vspace{3mm}
	\caption{Structure of the mobile sensor and arrangement of the flow velocity sensors. Eight sensors are placed in two rows in the center of the robot with an angle spacing of $90^o$ in each row and combined spacing of $45^o$. We place the sensor rig at a higher plane away from the robot structure to minimize interference with the flow pattern.} \label{fig:robot}
  \end{minipage}
\end{figure}

Figure \ref{fig:robot} shows the mobile sensor comprised of these eight sensors separated by $45^o$ giving rise to a sensor rig that can completely cover the flow in the plane of measurements.
%
%\begin{figure}[t!]
%	\centering
%	\includegraphics[width=0.25\textwidth]{robot3.jpg}
%	\caption{Structure of the mobile sensor and arrangement of the flow velocity sensors. Eight sensors are placed in two rows in the center of the robot with an angle spacing of $90^o$ in each row and combined spacing of $45^o$. We place the sensor rig at a higher plane away from the robot structure to minimize interference with the flow pattern.} \label{fig:robot}
%\end{figure}
%
In this configuration, one sensor in the rig will have the highest reading at a given time, while one of its neighboring sensors having the next highest reading. Let $q$ and $\theta$ denote the magnitude and angle of the instantaneous flow vector and $s_j$ and $\beta_j$ denote the $j$-th sensor's reading and heading angle in the global coordinate system where $1 \leq j \leq 8$. Assume that sensors $j$ and $l$ have the two highest readings. Then, we have
\begin{align} \label{eq:sensVel}
& \ s_j = q \cos( \theta - \beta_j) \ \text{and} \ s_l = q \cos( \theta - \beta_l) \Rightarrow 	\\
& \left[
\begin{array}{c}
\sin \theta \\
\cos \theta
\end{array}
\right] = 
\frac{1}{q \sin(\beta_j - \beta_l)} 
\left[
\begin{array}{c}
s_j \cos \beta_l - s_l \cos \beta_j \\
- s_j \sin \beta_l + s_l \sin \beta_j
\end{array}
\right] .	\nonumber
\end{align}
These equations are used to get the four-quadrant angle $\theta$ in global coordinates. Given $\theta$, the magnitude $q$ is given by $q = s_j / \cos( \theta - \beta_j)$.
Assume that sensor $j$ has the highest reading at a given time, then $l=j+1$ and $\beta_j \leq \theta \leq \beta_{j+1}$ or $l=j-1$ and $\beta_{j-1} \leq \theta \leq \beta_j$. Any deviation from these conditions indicates inaccurate readings due to sensor noise or delays caused by the serial communication between the sensors and the microcontroller onboard; see Appendix \ref{app:hardware}.
Algorithm \ref{alg:flowmetry} in Appendix \ref{app:flowmetry} summarizes the procedure used by the mobile robot sensor to collect the desired measurements. Moreover, a video of the instantaneous velocity readings and the corresponding instantaneous velocity vector computed using Algorithm \ref{alg:flowmetry} can be found in \cite{meJ3_video1}.

\subsection{Measurement Noise} \label{sec:measNoise}
In Section \ref{sec:measM}, we defined the sensor noise for the instantaneous first velocity component by $\epsilon_{u,s}(x,t) \sim \ccalN(0, \gamma_{u,s}^2)$ and the corresponding measurement noise by $\epsilon_u(x) \sim \ccalN(0, \sigma_u^2)$; see equations \eqref{eq:meas} and \eqref{eq:measModel}, respectively. In the following, we derive an explicit expression for $\gamma_{u,s}$. We also discuss the  contribution of the robot heading $\gamma_{\beta}^2$ and location $\gamma_x^2$ errors to $ \sigma_u(x)$.
Since the relation between these noise terms and the measurements is nonlinear, we employ linearization so that the resulting distributions can be approximated by Gaussians. Specifically, let $y(\bbp)$ denote an observation which depends on a vector of uncertain parameters $\bbp \in \reals^{n_p}$. After linearization, $y \approx y_0 + \nabla y_0^T \, (\bbp - \bbp_0)$ where $y_0 = y(\bbp_0)$, $\nabla y_0 = \nabla y(\bbp_0)$, and $\bbp_0$ denotes the nominal value of uncertain parameters. For independent parameters $\bbp$, we have $(\bbp - \bbp_0) \sim \ccalN(\bb0, \bbGamma)$ where $\bbGamma$ is the constant diagonal covariance matrix. Then
$ y \sim \ccalN(y_0, \nabla y_0^T \, \bbGamma \, \nabla y_0 ), $
%
%\begin{equation*}  %\label{eq:lineGauss}
%y \sim \ccalN(y_0, \nabla y_0^T \, \bbGamma \, \nabla y_0 ),
%\end{equation*}
%
by the properties of the Gaussian distribution for linear mappings. Specifically, the linearized variance of measurement $y$ given the uncertainty in its parameters, can be calculated as
\begin{equation} \label{eq:lineVar}
\var[y] \approx \nabla y_0^T \, \bbGamma \, \nabla y_0 = \sum\nolimits_{j=1}^{n_p} \left( \frac{\partial \, y }{\partial \bbp_j} \Bigr|_{\bbp_{j,0} } \right)^2 \gamma_j^2 ,
\end{equation}
where $\gamma_j^2$ denotes the variance of the uncertain parameter $\bbp_j$. Consequently, since the sensor noise and heading error are independent, we can add their contributions to $\sigma_u^2(x)$.

First, we consider the sensor noise due to the flow sensors used onboard the robot; see Figure \ref{fig:robot}. We assume that the flow sensor noise is Gaussian.\footnote{Note the difference between the time-dependent `sensor noise' in \eqref{eq:meas} corresponding to the sensor rig in Figure \ref{fig:robot} and the identically distributed `flow sensor noise' corresponding to the individual flow sensors.} For flow sensors with a full-scale accuracy rating used here, this assumption is reasonable as the standard deviation is fixed and independent of the signal value. Let FS denote the full-scale error and $\gamma_s$ denote the standard deviation of the flow sensor noise. The cumulative probability of a Gaussian distribution within $\pm 3 \gamma_s$ is almost equal to one. Thus, we set
\begin{equation} \label{eq:sensorNosie}
\gamma_s = \frac{ \text{FS} }{3} .
\end{equation}

%In order to quantify the effect of sensor noise on the readings of velocity magnitude and components, we utilize the equation \eqref{eq:sensVel}.
%%
%First, consider the velocity magnitude $q(x)$. After some manipulation of \eqref{eq:sensVel}, we can rewrite it in terms of the two highest sensor readings $s_j$ and $s_l$ as
%%
%$$ y_q(x,t_k) = \abs{a} \left( s_j^2 + s_l^2 +b s_j s_l \right)^{0.5}, $$
%%
%where the costants
%%
%$$ a = \frac{1}{ \sin ( \beta_j - \beta_l) }, \ \ \ b = - 2 \cos( \beta_j - \beta_l), $$
%%
%depend on the angle spacing between sensors and $\beta_j$ and $\beta_l$ denote the sensor headings.
%%
%The gradient with respect to instantaneous sensor readings is
%%
%\begin{align*}
%& \nabla y_q =
%\left[
%\begin{array}{c}
%{\partial y_q}/{\partial s_j}   \\
%{\partial y_q}/{\partial s_l}   
%\end{array}
%\right]
%= \frac{ \abs{a} }{ 2 (s_j^2 + s_l^2 +b s_j s_l)^{0.5} }
%\left[
%\begin{array}{c}
%2 s_j + b s_l \\
%2 s_l + b s_j
%\end{array}
%\right] .
%\end{align*}
%%
%Noting that the sensor noise $\gamma_s$ is identical for $s_j$ and $s_l$, their contribution to $\gamma_q^2$ is given by $\gamma_s^2 \norm{\nabla y_{q,0} }^2$ from \eqref{eq:lineVar} and
%%
%\begin{align} \label{eq:sensNoiseQ}
%\norm{\nabla y_{q,0} }^2 &= \frac{a^2 \left[ (4+b^2) (s_{j,0}^2 + s_{l,0}^2) + 8 b s_{j,0} s_{l,0} \right] }{ 4 (s_{j,0}^2 + s_{l,0}^2 +b s_{j,0} s_{l,0}) } ,
%\end{align}
%%
%where $s_{j,0}$ and $s_{l,0}$ are the nominal values.

In order to quantify the effect of the flow sensor noise $\gamma_s$ on the instantaneous sensor noise $\gamma_{u,s}(x,t)$ in \eqref{eq:meas}, referring to equation \eqref{eq:sensVel}, we have
\begin{equation} \label{eq:instMeasU}
y_u(x,t_k) = q \cos \theta = a (- s_j \sin \beta_l + s_l \sin \beta_j ) ,
\end{equation}
where
$ a = {1}/{ \sin ( \beta_j - \beta_l) } $
depends on the angle spacing between sensors and $\beta_j$ and $\beta_l$ denote the sensor headings.
Then, $\nabla y_u = a [ - \sin \beta_l \ \ \sin \beta_j ]^T$ and
\begin{equation} \label{eq:sensNoiseU}
\gamma_{u,s}^2(x,t) = a^2 (\sin^2 \beta_{j,0} + \sin^2 \beta_{l,0}) \, \gamma_s^2
\end{equation}
from \eqref{eq:lineVar}, where $\beta_{j,0}$ and $\beta_{l,0}$ denote the nominal headings of the sensors in the global coordinate system. Similarly, 
\begin{equation} \label{eq:instMeasV}
y_v(x,t_k) = q \sin \theta = a ( s_j \cos \beta_l - s_l \cos \beta_j ) ,
\end{equation}
and $\nabla y_v = a [ \cos \beta_l  \  - \cos \beta_j ]^T$. Then,
\begin{equation} \label{eq:sensNoiseV}
\gamma_{v,s}^2(x,t) = a^2 (\cos^2 \beta_{j,0} + \cos^2 \beta_{l,0})  \, \gamma_s^2.
\end{equation}
Since equations \eqref{eq:instMeasU} and \eqref{eq:instMeasV} are linear in the instantaneous sensor readings, the contribution of flow sensor noise to the uncertainty in instantaneous velocity readings is independent of the sensor readings and only depends on the robot heading $\beta$. This also implies that Assumption \ref{assumption:noise} is valid as long as the flow sensor noise is normally distributed.

Next, we consider the uncertainty caused by localization error, i.e., error in the heading and position of the mobile robot. This error is due to structural and actuation imprecisions and can be partially compensated for by relying on motion capture systems to correct the nominal values; see Appendix \ref{app:hardware}.
Let $\beta \sim \ccalN(\beta_0, \gamma^2_{\beta})$ denote the distribution of the robot heading around the nominal value $\beta_0$ obtained from a motion capture system. Note that $\beta_j = \beta + \beta_{j,r}$ where $\beta_{j,r}$ is the relative angle of sensor $j$ in the local coordinate system of the robot; the same is true for $\beta_l$. Then, from \eqref{eq:instMeasU} we have
\begin{equation} \label{eq:headingU}
\frac{\partial y_u}{\partial \beta} = a ( - s_j \cos \beta_l + s_l \cos \beta_j ) = - y_v(x,t_k) ,
\end{equation}
where the last equality holds from \eqref{eq:instMeasV}.
Evaluating this expression at nominal values $s_{j,0}, s_{l,0}, \beta_{j,0}, \beta_{l,0}$ and using \eqref{eq:lineVar}, the contribution of the heading error to the uncertainty in the measurement of the instantaneous first velocity component is
\begin{align}  \label{eq:gammaU}
\gamma_{u, \beta}^2(x,t_k) = y_v^2(x,t_k) \, \gamma^2_{\beta} . 
\end{align}
Following an argument similar to the derivation of \eqref{eq:varDecom}, we can obtain an expression for $\bbargamma_{u, \beta}^2(x)$. Then, since from \eqref{eq:lineVar}, the contributions of independent sources of uncertainty, i.e., sensor and heading errors, are additive, we obtain an estimator for $\sigma_u^2(x)$ as
\begin{equation} \label{eq:uVar}
\hat{\sigma}_u^2(x) = \hat{\var}[y_u(x)] + \bbargamma_{u, \beta}^2(x),
\end{equation}
where, $\hat{\var}[y_u(x)]$ is given by \eqref{eq:smpMvar} and $\bbargamma_{u, \beta}^2(x)$ can be obtained using an equation similar to \eqref{eq:varMean}.
Note that unlike $\hat{\var}[y_u(x)]$ whose contribution to the uncertainty in $y_u(x)$ can be reduced by increasing the sample size $n$, the contribution of the heading error is independent of $n$ and can only be reduced by conducting multiple independent measurements.
Similarly, for the second component of flow velocity
\begin{equation} \label{eq:headingV}
\frac{\partial y_v}{\partial \beta} = a (- s_j \sin \beta_l + s_l \sin \beta_j ) = y_u(x,t_k) .
\end{equation}
Then using \eqref{eq:lineVar} and \eqref{eq:headingV}, we have
\begin{align}  \label{eq:gammaV}
\gamma_{v, \beta}^2(x,t_k) = y_u^2(x,t_k) \, \gamma^2_{\beta} 
\end{align}
and
\begin{equation} \label{eq:vVar}
\hat{\sigma}_v^2(x) = \hat{\var}[y_v(x)] + \bbargamma_{v, \beta}^2(x) .
\end{equation}
%
%Note that the velocity magnitude measurements is not a function of $\beta$ and not affected by the heading error. This implies that measurements of magnitude have relatively lower variance than the components and are more accurate; see the discussion at the end of Section \ref{sec:meanVelMagGP} as well. Particularly,
%%
%\begin{equation} \label{eq:gammaQ}
%\gamma_q^2(x,t) = \norm{\nabla y_{q,0} }^2 \gamma_s^2 ,
%\end{equation}
%%
%where $\norm{\nabla y_{q,0} }^2$ is given by equation \eqref{eq:sensNoiseQ} and we use instantaneous sensor readings as the nominal values. The mean measurement noise variance $\gamma_q^2(x)$, used in equations \eqref{eq:smpMvar} and \eqref{eq:turbImeas}, is defined by equation \eqref{eq:varMean}.

The procedure to incorporate the uncertainty due to measurement location error is outlined in Appendix \ref{app:measLocErr} where, we use a Gaussian distribution $x \sim \ccalN(x_0, \gamma^2_x \, \bbI_2)$ to encapsulate the uncertainty in measurement location; $x_0$ is the nominal location obtained from a motion capture system and $\bbI_2 \in \reals^{2\times2}$ denotes the identity matrix.

\subsection{Path Planning for Mobile Robot} \label{sec:OPP}
Next, we formulate a path planning problem for the mobile robot sensor to collect measurements with maximum information content. 
Specifically, let $\Omega$ denote a discretization of the 2D plane of the sensor rig excluding the points occupied by obstacles and let $\ccalR>0$ denote the constraint on maximum travel distance. Furthermore, given the current measurement location $x_k$ and set of currently collected measurements $\ccalX_k$, let $\ccalS_{k+1} = \set{x \in \Omega \setminus \ccalX_k \, | \, \norm{x - x_k} \leq \ccalR }$ denote the feasible subset of candidate measurement locations at step $k+1$.
Then, our goal is to select the next measurement location $x_{k+1}$ from $\ccalS_{k+1}$ so that the joint entropy of mean velocity components $u(x) \and v(x)$ at unobserved locations $\Omega \backslash \ccalX_{k+1}$, given the measurements in $\ccalX_{k+1}$, is minimized. With a slight abuse of notation, let $H(\Omega \backslash \ccalX_{k+1} \, | \, \ccalX_{k+1})$ denote this entropy. %where we replace the RVs at locations $\Omega \backslash \ccalX$ and $\ccalX$ with the locations themselves. 
Noting that $H(\Omega \backslash \ccalX_{k+1} \, | \, \ccalX_{k+1}) = H(\Omega) - H(\ccalX_{k+1})$, minimizing $H(\Omega \backslash \ccalX_{k+1} \, | \, \ccalX_{k+1})$ is equivalent to maximizing $H(\ccalX_{k+1})$.
Furthermore, by the chain rule of entropy
\begin{equation}
H(\ccalX_{k+1}) = H(x_{k+1} \, | \, \ccalX_k) + \dots + H(x_2 \, | \, \ccalX_1) + H(x_1) .
\end{equation}
Thus, we can find the next best measurement location $x_{k+1}$ by solving the following optimization problem
\begin{equation} \label{eq:greedy1}
x^*_{k+1} = \argmax_{x \in \ccalS_{k+1}} H(x \, | \, \ccalX_k) .
\end{equation}

%Our goal is to select $m$ measurement locations from the set $\ccalS$, which we collect in the set $\ccalX = \set{x_k \in \ccalS \, | \, 1 \leq k \leq m}$, so that the entropy of the velocity components at unobserved locations $\Omega \backslash \ccalX$, given the measurements $\ccalX$, is minimized. With a slight abuse of notation, let $H(\Omega \backslash \ccalX \, | \, \ccalX)$ denote this entropy. %where we replace the RVs at locations $\Omega \backslash \ccalX$ and $\ccalX$ with the locations themselves. 
%Then, we are interested in the following optimization problem
%%
%\begin{equation*}
%\ccalX^* = \argmin_{\ccalX \subset \ccalS, \abs{\ccalX} = m}  H(\Omega \backslash \ccalX \, | \, \ccalX) ,
%\end{equation*}
%%
%where $\abs{\ccalX}$ denotes the cardinality of the measurement set $\ccalX$. Noting that $H(\Omega \backslash \ccalX \, | \, \ccalX) = H(\Omega) - H(\ccalX)$, we can equivalently write
%%
%\begin{equation} \label{eq:optimPlace}
%\ccalX^* = \argmax_{\ccalX \subset \ccalS, \abs{\ccalX} = m}  H(\ccalX) .
%\end{equation}
%%
%The optimization problem \eqref{eq:optimPlace} is combinatorial and NP-complete; see \cite{NSPGP2008KSG}. This makes finding the optimal set $\ccalX^*$ computationally expensive as the size of the set $\ccalS$ grows.

Next we derive an expression for the objective $H(x \, | \, \ccalX_k) = H( u(x), v(x) \, | \, \ccalX_k )$ in \eqref{eq:greedy1}. Since from Assumption \ref{ass:velCompCorr}, $u(x)$ and $v(x)$ are assumed independent, we have 
\begin{equation} \label{eq:jointEnt}
H(u,v) = H(u \, | \, v) + H(v) = H(u) + H(v) ,
\end{equation}
where we have dropped dependence on $x \and \ccalX_k$ for simplicity.
Recall from Section \ref{sec:hierarchical} that the posterior distributions of $u(x) \and v(x)$ are GMs, for which closed-form expressions for $H(u) \and H(v)$ are unavailable \cite{EAGMRV2008HBD}. Instead, we optimize the expected entropy over the $\bbarn$ models. 
Particularly, given a model $\ccalM_j$ and measurements $\ccalX_k$, $u(x)$ is normally distributed according to \eqref{eq:inference}. Then, the value of (differential) entropy is independent of the mean and is given in closed-form as
\begin{equation} \label{eq:entropyGauss}
H(u(x \, | \, \ccalX_k, \ccalM_j) ) = \log(c \, \gamma_u(x  \, | \,  \ccalX_k, \ccalM_j) ) , 
\end{equation}
where $c = \sqrt{2 \pi e}$ and $\gamma_u^2(x  \, | \,  \ccalX_k, \ccalM_j)$ is defined in \eqref{eq:predVar}.
Given \eqref{eq:entropyGauss}, the expected entropy for $u(x)$ is given by
\begin{equation}
H(u(x \, | \, \ccalX_k) ) = \sum\nolimits_{j=1}^{\bbarn} p_{j,k} \, H(u(x \, | \, \ccalX_k, \ccalM_j) ) ,
\end{equation}
where $p_{j,k} = \tdpi(\ccalM_j \, | \, \ccalX_k)$ denotes the probability of model $\ccalM_j$, given the current measurements $\ccalX_k$, obtained from \eqref{eq:hierarchical}.
A similar expression holds true for $v(x)$. Then, from \eqref{eq:jointEnt}
\begin{equation}
H(x \, | \, \ccalX_k) = \sum_{j=1}^{\bbarn} p_{j,k} \, \log \left[ c^2 \, \gamma_u(x \, | \, \ccalX_k, \ccalM_j) \, \gamma_v(x \, | \, \ccalX_k, \ccalM_j) \right] 
\end{equation}
and we can rewrite the planning problem \eqref{eq:greedy1} explicitly as
\begin{equation} \label{eq:greedy}
x^*_{k+1} = \argmax_{x \in \ccalS_{k+1}} \sum\nolimits_{j=1}^{\bbarn} p_{j,k} \, \delta I_H(x  \, | \,  \ccalX_k, \ccalM_j) ,
\end{equation}
where
\begin{equation} \label{eq:addediH}
\delta I_H(x \, | \,  \ccalX_k, \ccalM_j) \propto \gamma_u^2(x  \, | \,  \ccalX_k, \ccalM_j) \, \gamma_v^2(x  \, | \,  \ccalX_k, \ccalM_j) 
\end{equation}
measures the information added by a potential measurement at $x \in \ccalS_{k+1}$, given current measurements $\ccalX_k$ and model $\ccalM_j$.

Using the closed-form expression \eqref{eq:addediH}, the planning problem \eqref{eq:greedy} can be solved very efficiently. Algorithm \ref{alg:NBP} summarizes our proposed solution to this problem at step $k+1$.
\begin{algorithm}[t]
\caption{Next Best Measurement Location}
\label{alg:NBP}
\begin{algorithmic}[1]
\small

\REQUIRE Covariance matrix $\barbSigma_{\Omega \Omega}$, the sets $\ccalX_k$ and $\ccalS_{k+1}$, and current model probabilities $p_{j, k} = \tdpi(\ccalM_j \, | \, \ccalX_k)$;

\FOR {$x \in \ccalS_{k+1}$} 	\label{line:loopCandMeas}
	
	\FOR{$j=1:\bbarn$}				\label{line:loopMode}
	
		\STATE Compute $\delta I_H(x \, | \, \ccalX_k, \ccalM_j)$ using \eqref{eq:addediH};
	
	\ENDFOR
	
\ENDFOR
	
\STATE Set $x^*_{k+1} = \argmax_{x \in \ccalS_{k+1}} \sum_{j=1}^{\bbarn} p_{j,k} \, \delta I_H(x  \, | \, \ccalX_k, \ccalM_j)$;	\label{line:NBML}

\end{algorithmic}
\end{algorithm}
To speed up the online computation, the algorithm requires the precomputed covariance $\barbSigma_{\Omega \Omega}$ between all discrete points in $\Omega$. For the correlation function \eqref{eq:correlation}, this matrix is sparse and can be computed efficiently. Given the current discrete probability distribution $p_{j,k}$ over models, the algorithm iterates through the candidate measurement locations $\ccalS_{k+1}$ and discrete models in lines \ref{line:loopCandMeas} and \ref{line:loopMode}, respectively, to compute the amount of added information $\delta I_H$. Finally in line \ref{line:NBML}, it selects the location $x^*_{k+1}$ with the highest expected added information as the next measurement location. Note that the evaluations of the entropy metric \eqref{eq:addediH} at different candidate locations are independent and can be done in parallel.

Algorithm \ref{alg:NBP} is suboptimal in that it only maximizes the information content of the next immediate measurement. It is possible to consider a longer horizon although at the expense of exponentially increased computational cost.
Note also that Algorithm \ref{alg:NBP} exhaustively evaluates the objective in \eqref{eq:addediH} for all candidate measurement locations in $\ccalS_{k+1}$. Given that the domain $\bbarOmega$ is generally non-convex, the prior uncertainty fields \eqref{eq:ustd} are highly nonlinear, and the planning objective \eqref{eq:addediH} can be computed efficiently, this exhaustive approach is effective in practice and there is no need to formulate and solve sophisticated optimization problems.
In the relevant literature, more sophisticated planning algorithms are often only studied for simple convex domains and constant prior uncertainty fields for which the planning problem reduces to a simple exploration; see e.g. \cite{EBSPMSNCMRF2013XCDM,LEFECCUR2019DGKS}.

\subsection{The Integrated System} \label{sec:inetgSys}
Algorithm \ref{alg:ARF} summarizes our proposed integrated system to actively learn turbulent flow fields.
\begin{algorithm}[t]
\caption{Physics-Based Learning of Flow Fields using a Mobile Robot}
\label{alg:ARF}
\begin{algorithmic}[1]
\small

\REQUIRE Prior numerical solutions $\ccalM_j$, prior distribution $\tdpi(\ccalM_j)$;

\REQUIRE Prior uncertainties $\bbarsigma_{u,0}, \bbarsigma_{v,0}, \and \bbarsigma_{i,0}$ for each model;

\REQUIRE Standard deviations $\gamma_s$, $\gamma_{\beta}$, and $\gamma_x$;

\REQUIRE Maximum number of measurements $m$;

\REQUIRE The set $\ccalX_{\bbarm}$ of $\bbarm$ exploration measurements;

%\STATE Collect exploration measurements $\ccalX_{\bbarm}$;

\STATE Set the measurement index $k = \bbarm$;

\WHILE { the algorithm has not converged and $k \leq m$}
	
	\STATE Collect measurements using Algorithm \ref{alg:flowmetry};	\label{line:meas}
	
	\STATE Compute mean variances from \eqref{eq:uVar} and \eqref{eq:vVar};	\label{line:samplMeanVar}

	\STATE Compute turbulent intensity variance from \eqref{eq:turbIvar};	\label{line:samplMeanVar2}
	
	\STATE Compute the mean values and variances from \eqref{eq:measLocE} considering the measurement location error;	\label{line:locErr}
	
	\STATE Compute measurement likelihood $\bbarpi(\bby_{u,k}, \bby_{v,k}, \bby_{i,k} \, | \, \ccalM_j)$;	\label{line:likelihood}
	
	\STATE Update model probabilities $p_{j, k}$ using \eqref{eq:hierarchical};
	
	\STATE Compute posterior GPs using \eqref{eq:inference};	\label{line:posterior}
	
	\STATE Check the convergence criterion \eqref{eq:stop};	\label{line:stop}
	
	\STATE Select $x_{k+1}$ using Algorithm \ref{alg:NBP};	\label{line:planning}
		
	\STATE $\ccalX_{k+1} = \ccalX_k \cup \set{x_{k+1} }$;
		
	\STATE $k \leftarrow k+1$;
	
\ENDWHILE

\STATE $m \leftarrow k$;

\STATE Return the discrete model probabilities $\tdpi(\ccalM_j \, | \, \ccalX_m)$;

\STATE Return the posteriors $\ccalG \ccalP(x | \ccalX_m, \ccalM_j)$ for $1 \leq j \leq \bbarn$;

\end{algorithmic}
\end{algorithm}
The algorithm begins by requiring the $\bbarn$ physics-based models of the flow properties and their discrete prior probabilities $p_{j,0} = \tdpi(\ccalM_j)$ for $1 \leq j \leq \bbarn$. Given prior uncertainty estimates $\bbarsigma_{u,0}, \bbarsigma_{v,0}, \and \bbarsigma_{i,0}$ for each model, it constructs the GP models \eqref{eq:GPu} and \eqref{eq:GPi} for the mean velocity components and turbulent intensity; see Section \ref{sec:statModel}. It also requires the standard deviations of the flow sensor noise $\gamma_s$, the heading error $\gamma_{\beta}$, and the location error $\gamma_x$; see Section \ref{sec:measNoise}. Finally, the total number of measurements $m$ and an initial set of $\bbarm$ exploration measurement must also be provided.

Given these inputs, in line \ref{line:meas}, at every iteration $k$, the mobile sensor collects measurements of the velocity field and computes $y_u(x_k), y_v(x_k)$, and $y_i(x_k)$ using Algorithm \ref{alg:flowmetry}.
Then, in line \ref{line:samplMeanVar}, it computes the sample mean variances $\hat{\sigma}^2_u(x_k)$ and $\hat{\sigma}^2_v(x_k)$ using equations \eqref{eq:uVar} and \eqref{eq:vVar}. To do so, the algorithm computes the instantaneous variances due to the heading error using equations \eqref{eq:gammaU} and \eqref{eq:gammaV} and the corresponding mean values using \eqref{eq:varMean}. Then, it adds these values to the sample variances of the mean velocity components given by \eqref{eq:smpMvar}. These values quantify the effect of sensor noise and heading error on the measurements; see Section \ref{sec:measNoise}.
Next, in line \ref{line:samplMeanVar2} it computes the variance of turbulent intensity measurement.
%Note that \eqref{eq:smpMvar} specifically defines the value for the mean velocity magnitude $q(x)$ but the expression for the components $u(x)$ and $v(x)$ are identical after appropriate substitution of fields; see Remark \ref{rem:velComp}.
%
Using this information, in line \ref{line:locErr}, the algorithm computes the mean values $\tdmu_u(x_k), \tdmu_v(x_k)$, and $\tdmu_i(x_k)$ and the variances $\tdsigma_u^2(x_k), \tdsigma_v^2(x_k)$, and $\tdsigma_i^2(x_k)$ from equations \eqref{eq:measLocE} taking into account the measurement location error; see Appendix \ref{app:measLocErr}.
Given these measurements and their corresponding variances, in line \ref{line:likelihood} the likelihood of the measurements given each RANS model $\ccalM_j$ is assessed; see Section \ref{sec:hierarchical}. %Noting that the mean velocity components $u(x)$, $v(x)$, and the turbulent intensity $i(x)$ are assumed independent, their joint likelihood is the multiplication of their individual likelihoods. 
Then, given this likelihood, the posterior discrete probability of the models given current measurements $\ccalX_k$ is obtained from \eqref{eq:hierarchical}.
%Then, the algorithm obtains the posterior GP fields in line \ref{line:posterior} using the closed-form equations \eqref{eq:inference}. These posterior fields are required to check the stopping criterion \eqref{eq:stop} and more importantly to make predictions about unobserved regions of the domain. 
%
Finally, given the model probabilities and conditional GPs of velocity components, in line \ref{line:planning}, the planning problem \eqref{eq:greedy} is solved using Algorithm \ref{alg:NBP} to determine the next measurement location.

As the number of collected measurements increases, the amount of information added by a new measurement will decrease since entropy is a submodular function \cite{NSPGP2008KSG}. This means that the posterior estimates of the mean velocity field and turbulent intensity field will reach a steady state. Thus to terminate Algorithm \ref{alg:ARF}, in line \ref{line:stop} we check the change in these posterior fields averaged over the model probabilities. Particularly, we check if
\begin{equation} \label{eq:stop}
d_k = \sum\nolimits_{j=1}^{\bbarn} p_{j,k} \, (d_{u,k,j} + d_{v,k,j} + q_{\text{ref}} \, d_{i,k,j}) < tol ,
\end{equation}
where $tol$ is a user-defined tolerance and the difference in conditional fields of $u(x)$ at iteration $k$, is given by
\begin{equation} \label{eq:errorU}
d_{u,k,j} = \frac{1}{| \Omega | } \int \abs{\mu_u(x \, | \, \ccalX_k, \ccalM_j) - \mu_u(x \, | \, \ccalX_{k-1}, \ccalM_j) } d \, \Omega ,
\end{equation}
where $| \Omega |$ is the area of the 2D projection of domain $\bbarOmega$. The other two values $d_{v,k,j}$ and $d_{i,k,j}$ are defined in the same way.

% ------------------------------------------------------------------------------------------------------------------------------ %
\section{Results} \label{sec:results}
%
% This plot belongs to the simulation case study and is put here for better arrangement.
%
\begin{figure*}
	\centering
	\begin{subfigure}[b]{0.23\textwidth}
		\includegraphics[width=\textwidth]{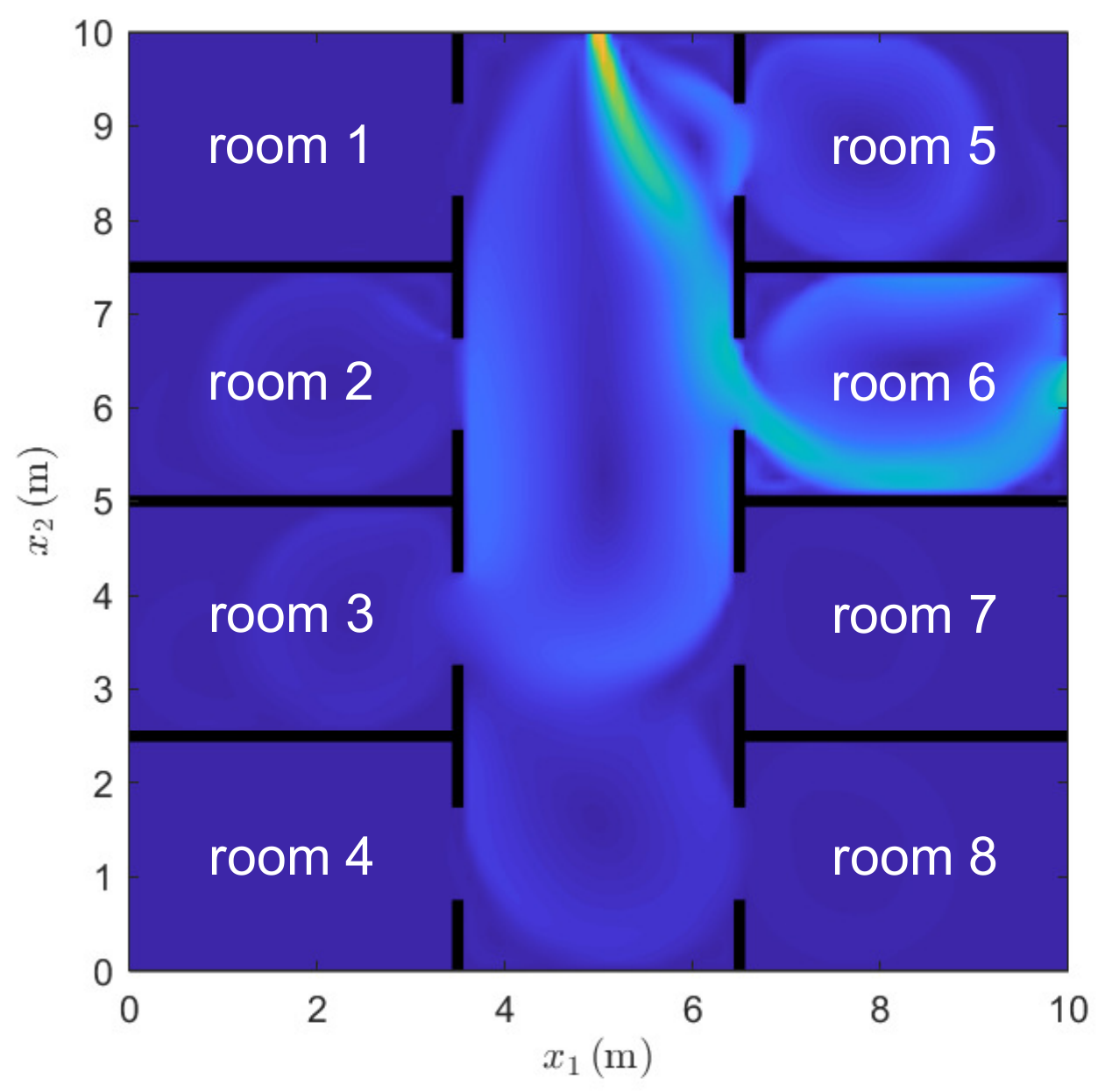}
	\captionsetup{justification=centering}
	\caption{ground truth} \label{fig:simTrue}
	\end{subfigure}
%	\quad
	\begin{subfigure}[b]{0.235\textwidth}
		\includegraphics[width=\textwidth]{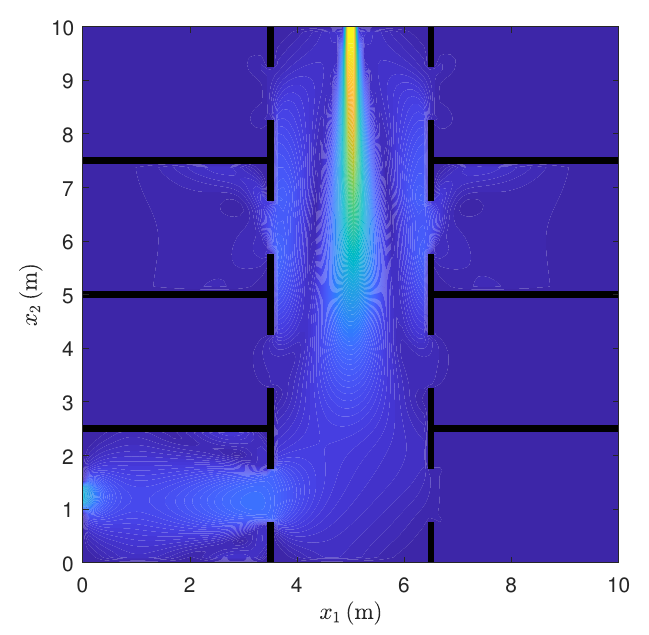}
	\captionsetup{justification=centering}
	\caption{outlet $4$}
	\end{subfigure}
	\begin{subfigure}[b]{0.23\textwidth}
		\includegraphics[width=\textwidth]{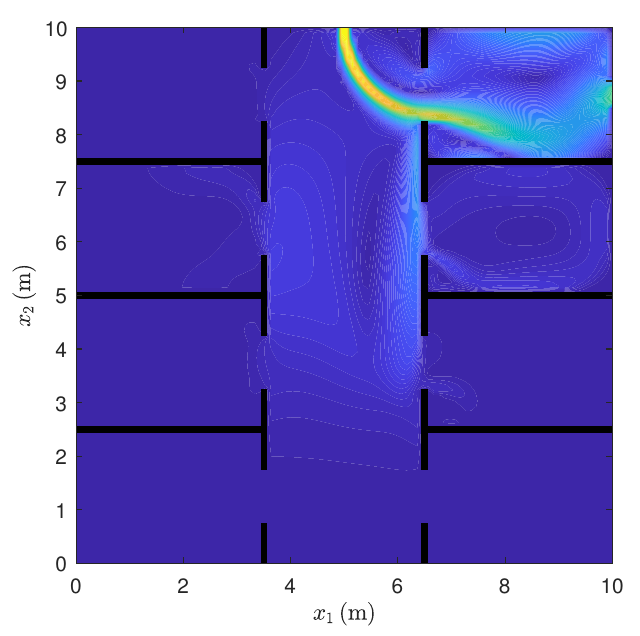}
	\captionsetup{justification=centering}
	\caption{outlet $5$}
	\end{subfigure}
	\begin{subfigure}[b]{0.27\textwidth}
		\includegraphics[width=\textwidth]{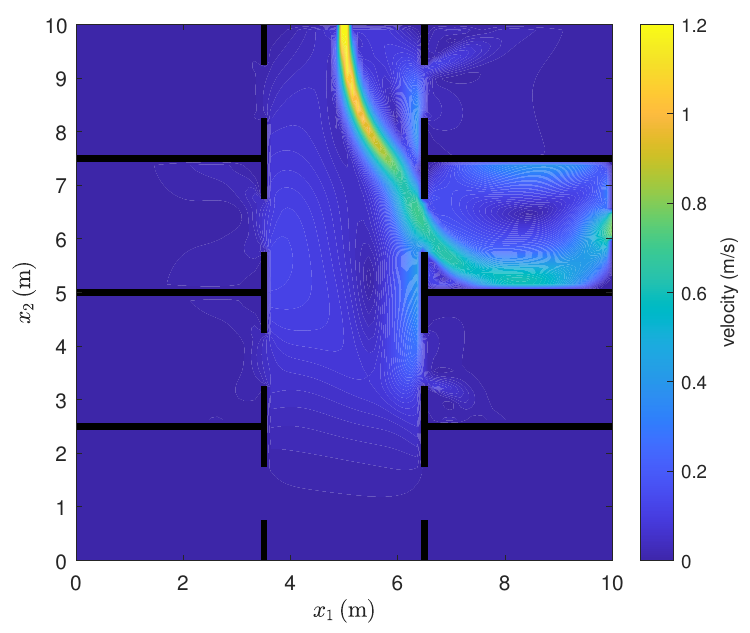}
	\captionsetup{justification=centering}
	\caption{outlet $6$}
	\end{subfigure}
	\caption{The ground truth and predicted velocity magnitude fields for outlets $4$-$6$. The ground truth field is obtained using $k-\epsilon$ RANS model whereas the rest of the fields are obtained using $k-\omega$ model.}\label{fig:simFlds}
\end{figure*}
%
% Theme: roadmap
In this section, we present simulation and experimental results to demonstrate the performance of our proposed environmental sensing Algorithm \ref{alg:ARF}.
Specifically, we first consider a large-scale simulated environment with a known ground truth. In this case-study, we (i) show that our proposed framework scales to large environments, (ii) demonstrate the performance of our proposed planning Algorithm \ref{alg:NBP} compared to a baseline lattice placement,
%\red{(iii) show the importance of active sensing in the presence of uncertain parameters,}
(iii) demonstrate the superior performance of our physics-based method compared to a purely data-driven approach and finally, (iv) provide a qualitative justification for the use of entropy metric \eqref{eq:addediH}.
Then, we present an experimental study to demonstrate the robustness of Algorithm \ref{alg:ARF} to real-world uncertainties and show that it (i) correctly selects models that are in agreement with empirical data, (ii) reaches a steady state as the number of measurements increases and, most importantly, (iii) predicts the flow properties more accurately than the prior numerical models while providing reasonable uncertainty bounds for the predictions. 

% Theme: software and parameters
We utilize \textsc{ANSYS Fluent} to solve the RANS models for the desired flow properties using the $k-\epsilon$, $k-\omega$, and RSM solvers \cite{TMCFD1993W} and \textsc{MATLAB} to implement Algorithm \ref{alg:ARF}. Throughout the results, the dimension of velocity values is m/s and turbulent intensity is dimensionless. We set the reference velocity in \eqref{eq:turbI} to $q_{\text{ref}} =1$\,m/s and use $n_0 = 200$ nominal samples in equation \eqref{eq:ustd}.
%
% Theme: characteristic length
We select the correlation characteristic length in equation \eqref{eq:correlation} by inspecting the integral length scale of the turbulent flow provided by the RANS models; see \cite{TMCFD1993W}.
For the BC values used in this paper, a correlation characteristic length of $\ell = 0.35$\,m works generally well; it provides a reasonable correlation between each measurement and its surroundings and guarantees an acceptable spacing between robot waypoints.
Alternatively, we could adopt a data-driven approach and learn the characteristic length that maximizes the likelihood of empirical data or infer its distribution in a stochastic framework; see \cite{meC3,EBSPMSNCMRF2013XCDM}.
In the absence of any prior knowledge, we assign uniform probabilities to all models in Algorithm \ref{alg:ARF}, i.e., $p_{j,0} = 1/\bbarn$ for $1 \leq j \leq \bbarn$.
As discussed in Section \ref{sec:mobSen}, the mobile sensor is equipped with eight D6F-W01A1 \textsc{OMRON} sensors with a range of $0-1$\,m/s and full-scale error rate of FS $ = 0.05$\,m/s.
For consistency, in both simulated and experimental cases, we set the standard deviation of the flow sensor noise to $\gamma_s = 0.017$\,m/s according to \eqref{eq:sensorNosie}, the heading error to $\gamma_{\beta} = 5^o$, and the measurement location error to $\gamma_x = 0.025$\,m. We construct a SROM model for the location error with $\tdn = 5$ samples; see Appendix \ref{app:measLocErr}. Note that the location error also takes into account the discretization of the numerical solutions and the error caused by the sensor rig structure.

% Theme: prediction error
To compute the prediction error of the conditional models, we compare their predictions at a set of $\hhatm$ locations to the ground truth or empirical measurements, whichever is available. Let $\hhaty_u(x_l)$ denote this value for the first velocity component at a location $x_l$ for $1 \leq l \leq \hhatm$. 
Let also $\mu_u(x_l | \ccalX_k)$ denote the predicted mean velocity obtained from \eqref{eq:GMmean} after conditioning on the previously collected measurements $\ccalX_k$ up to iteration $k$.
Then, we define the mean prediction error for the first mean velocity component over these $\hhatm$ locations as
$e_{u,k} = {1}/{\hhatm} \sum\nolimits_{l=1}^{\hhatm} \abs{ \mu_u(x_l | \ccalX_k) - \hhaty_u(x_l) } ,$
and the total mean prediction error as
\begin{equation} \label{eq:predErr}
e_k = \frac{1}{3} ( e_{u,k} + e_{v,k} + q_{\text{ref}} \, e_{i,k} ) ,
\end{equation}
where the expressions for $e_{v,k}$ and $e_{i,k}$ are identical. We are particularly interested in $e_0$ as the prediction error of the prior model and $e_m$ as the prediction error of the posterior model. 
We also define $\bbare_{k,j}$ for the individual models $\ccalM_j$ by using $\mu_u(x_l | \ccalX_k, \ccalM_j)$ in definition \eqref{eq:predErr} instead of the mean value from \eqref{eq:GMmean}; similarly for $v(x)$ and $i(x)$.

% ------------------------------------------------------------- %
\subsection{Environmental Sensing Simulation} \label{sec:sim}
In this section, we consider a $10\times10$\,m$^2$ simulated office environment with $q_{\text{in}} = 1$\,m/s velocity inlet at the top with an angle of $-80^o$, inlet turbulent intensity of $i_{\text{in}} = 0.02$, and a velocity outlet in room $6$; see Figure \ref{fig:simTrue}. We utilize \textsc{ANSYS Fluent} and the $k-\omega$ RANS model to obtain the ground truth mean velocity field shown in Figure \ref{fig:simTrue}. Then, we simulate the turbulent flow at a point $x$, using t-distributions centered at the predicted mean velocity components at that point and consider sensor noise and heading and location error in simulating velocity readings.
Assuming that we do not know which outlet is open \textit{a priori}, we define a discrete distribution over the status of the outlets $1$-$8$; see Section \ref{sec:hierarchical}. We generate the pool of prior numerical solutions for these eight outlets by solving the $k-\epsilon$ RANS model for a $q_{\text{in}} = 1.2$\,m/s velocity inlet with a perpendicular angle $-90^o$ and inlet turbulent intensity of $i_{\text{in}} = 0.05$, instead of the ground truth above; such errors are inevitable in practice. For simplicity however, we do not include additional uncertain parameters for RANS models or the erroneous inlet parameters. Figure \ref{fig:simFlds} includes the prior mean velocity magnitude fields for numerical models corresponding to outlets $4$-$6$.
%
%\begin{figure*}
%	\centering
%	\begin{subfigure}[b]{0.23\textwidth}
%		\includegraphics[width=\textwidth]{simulation/velMag_true.pdf}
%	\captionsetup{justification=centering}
%	\caption{ground truth} \label{fig:simTrue}
%	\end{subfigure}
%%	\quad
%	\begin{subfigure}[b]{0.235\textwidth}
%		\includegraphics[width=\textwidth]{simulation/velMag4.eps}
%	\captionsetup{justification=centering}
%	\caption{outlet $4$}
%	\end{subfigure}
%	\begin{subfigure}[b]{0.23\textwidth}
%		\includegraphics[width=\textwidth]{simulation/velMag5.eps}
%	\captionsetup{justification=centering}
%	\caption{outlet $5$}
%	\end{subfigure}
%	\begin{subfigure}[b]{0.27\textwidth}
%		\includegraphics[width=\textwidth]{simulation/velMag6.eps}
%	\captionsetup{justification=centering}
%	\caption{outlet $6$}
%	\end{subfigure}
%	\caption{Ground truth and predicted velocity magnitude fields for outlets $4$-$6$. The ground truth field is obtained using the $k-\epsilon$ RANS model whereas the rest of the fields are obtained using the $k-\omega$ model.}\label{fig:simFlds}
%\end{figure*}
%
We set $\bbarsigma_{u,0} = \bbarsigma_{v,0} = 0.05$\,m/s and $\bbarsigma_{i,0} = 0.05$ in equation \eqref{eq:ustd} for prior uncertainties of all models. To examine the performance of a purely data-driven model, we add to the pool of models, a constant prior flow field with $u(x) = 0$\,m/s and $v(x)=-0.2$\,m/s and $i(x) = 0.05$ with $\bbarsigma_{u,0} = \bbarsigma_{v,0} = 0.3$\,m/s and $\bbarsigma_{i,0} = 0.3$. Thus, we have $\bbarn=9$ prior models for this case study.
In the following, we use $\bbarm = 16$ exploration measurements and set the maximum number of measurements to $m = 200$.

% Theme: planning comparison
%
Figure \ref{fig:simplanOptim} depicts the prediction error \eqref{eq:predErr} as a function of the number of measurements, computed over the whole domain, i.e., $\hhatm = 245,537$, when compared to the ground truth. Specifically, we consider three different planning scenarios: (i) using Algorithm \ref{alg:NBP} without a travel distance constraint, (ii) with a travel distance constraint of $\ccalR = 1$\,m and,
%(iii) using an offline version of Algorithm \ref{alg:NBP} where the prior turbulent intensity field and model probabilities are used and,
(iii) a set of baseline lattice placements from $4\times4$ to $15\times15$ measurements.
\begin{figure}[t!]
	\centering
	\begin{subfigure}[b]{0.35\textwidth}
		\includegraphics[width=\textwidth]{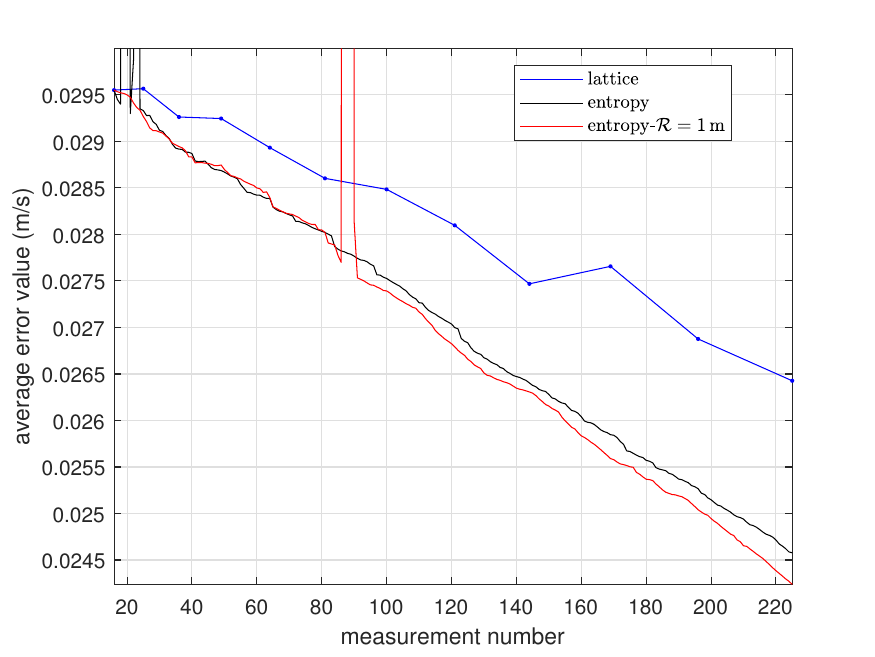}
	\captionsetup{justification=centering}
	\caption{prediction error \eqref{eq:predErr}} \label{fig:simplanOptim}
	\end{subfigure}
%	\quad
	\begin{subfigure}[b]{0.35\textwidth}
		\includegraphics[width=\textwidth]{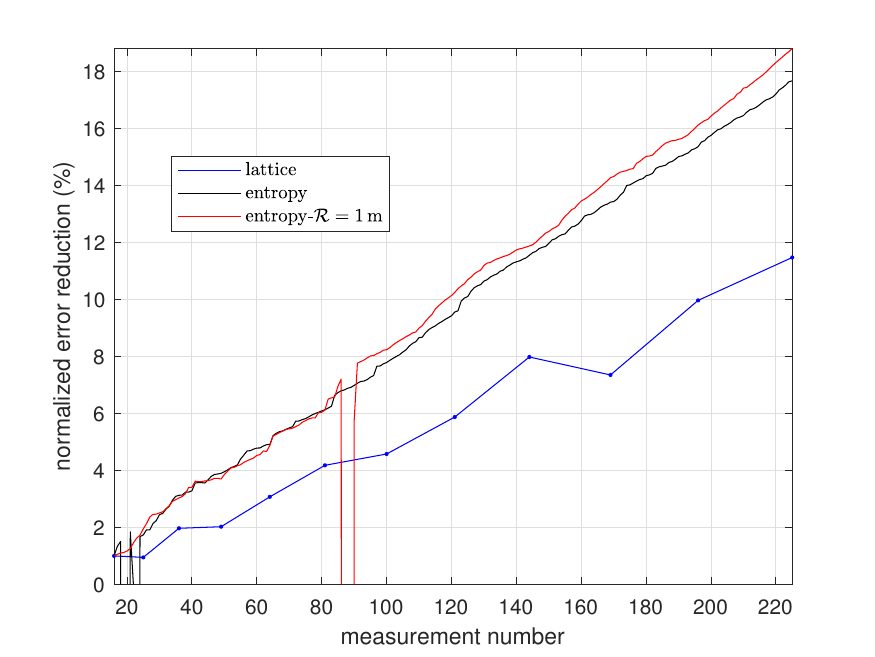}
	\captionsetup{justification=centering}
	\caption{normalized error reduction} \label{fig:simplanOptim0}
	\end{subfigure}
	\caption{Prediction error \eqref{eq:predErr} and the normalized reduction in prediction error compared to the most likely model $6$ as a function of number of measurements for the three planning scenarios. The discontinuities are due to oscillations in the distribution of models and stabilize once enough measurements are collected to determine the most likely model; see Section \ref{sec:hierarchical}.} \label{fig:}
\end{figure}
Observe that both versions of our proposed planning scheme considerably and consistently outperform the baseline lattice placement.
Note that a travel constraint of $\ccalR=1$\,m does not affect the prediction performance; obviously very small values of $\ccalR$ would impede the performance.
%Note also that the prediction error of the offline planning oscillates considerably until enough information is collected and Model $6$ is identified as the most likely model. This does not happen for the online version which uses the latest available information for planning \cite{NALGP2007KG}.
Figure \ref{fig:simWaypoints} shows the sequence of waypoints selected using Algorithm \ref{alg:NBP} under scenario (ii).
\begin{figure}[t!]
  \centering
    \includegraphics[width=0.35\textwidth]{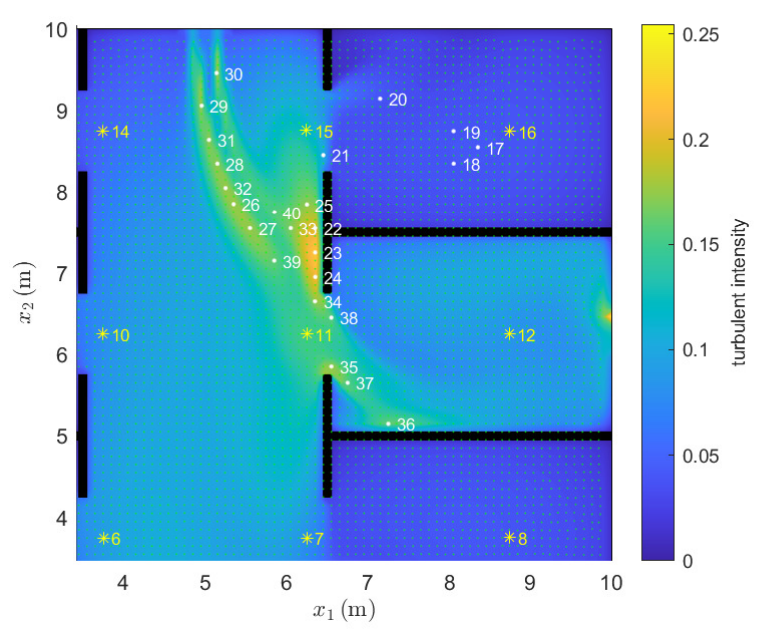}
  \caption{Sequence of the first $40$ measurements selected using Algorithm \ref{alg:NBP} under scenario (ii) with a travel constraint of $\ccalR=1$\,m, overlaid on zoomed-in turbulent intensity field from model $6$. The green dots show the candidate measurement locations and the yellow stars show the exploration measurement locations. Finally, the white dots show the sequence of waypoints.}\label{fig:simWaypoints}
\end{figure}

Model $6$ is the most likely among the prior numerical solutions since it is solved for the correct outlet albeit with a different RANS model and inaccurate inlet BCs. Its prior prediction error is $\bbare_{0,6} = 0.030$\,m/s.
Figure \ref{fig:simplanOptim0} shows the percentage of reduction in prediction error compared to the prior error of Model $6$ for each scenario.
The posterior prediction error of the purely data-driven approach is $\bbare_{225,9} = 0.082$\,m/s and $\bbare_{225,9} = 0.084$\,m/s using the measurements collected by our proposed planning scheme (scenario i) and lattice placement (scenario iii), respectively. Note that the posterior error of the purely data-driven approach is $273$\% higher than the prior error of the most likely model. This highlights the superiority of a physics-based solution to this environmental sensing problem.
As a reference, the posterior prediction error of the model obtained using Algorithm \ref{alg:ARF} under planning scenario (i) is $e_{225} = 0.025$\,m/s which is $328$\% lower than the purely data-driven approach.

Figure \ref{fig:simIdealMetric} shows the true error field for the most likely model $6$ compared to the ground truth. This field is the ideal planning metric that ranks each measurement location according to the amount of prediction error at that point. Figure \ref{fig:simEntropy} shows the entropy metric \eqref{eq:addediH} for model $6$ before collecting any measurements.
\begin{figure}[t!]
	\centering
	\begin{subfigure}[b]{0.241\textwidth}
		\includegraphics[width=\textwidth]{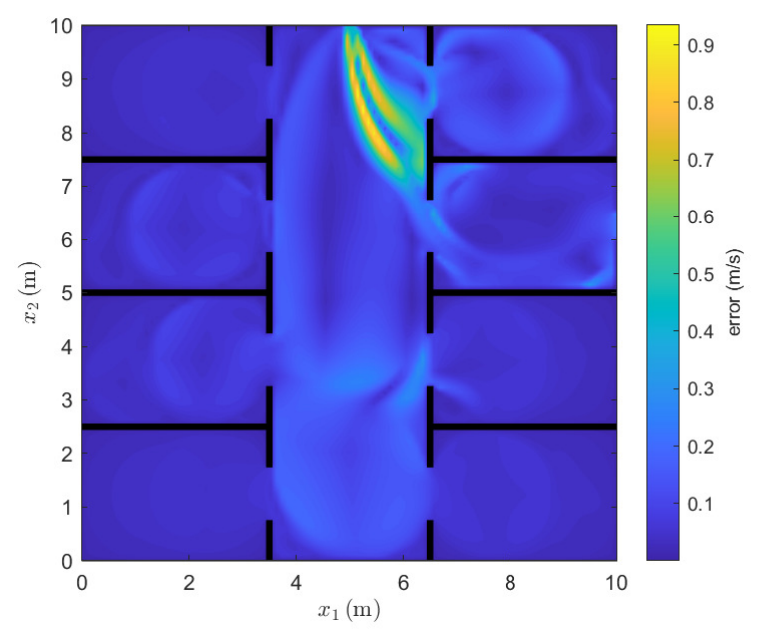}
	\captionsetup{justification=centering}
	\caption{ideal planning metric} \label{fig:simIdealMetric}
	\end{subfigure}
%	\quad
	\begin{subfigure}[b]{0.239\textwidth}
		\includegraphics[width=\textwidth]{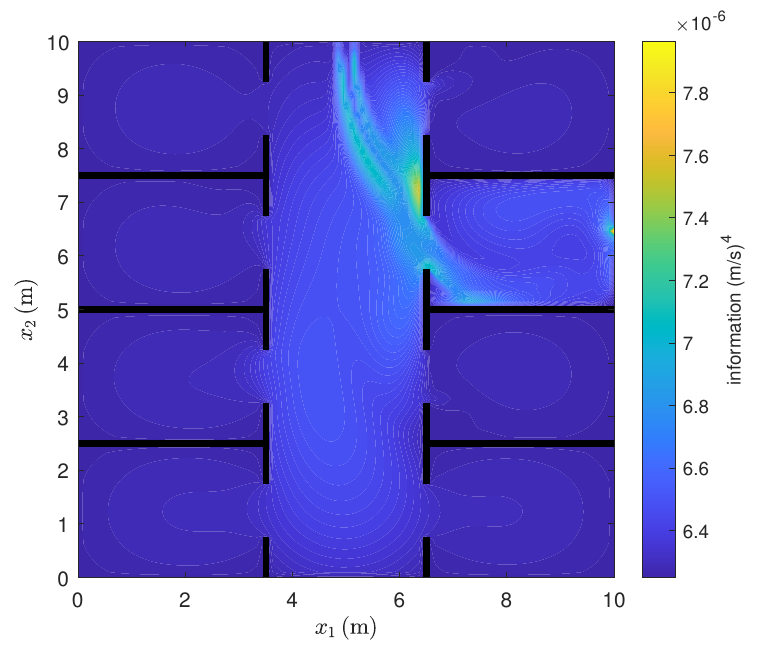}
	\captionsetup{justification=centering}
	\caption{entropy planning metric} \label{fig:simEntropy}
	\end{subfigure}
	\caption{The qualitative resemblance of the true error field (ideal planning metric) and the entropy metric \eqref{eq:addediH} for the most likely model $6$.} \label{fig:simMetrics}
\end{figure}
It can be seen that the entropy metric qualitatively resembles this field, confirming the observation that highly turbulent regions of the environment often correlate with larger error and should be prioritized in planning. Our planning Algorithm \ref{alg:NBP} is designed based on this observation.

% ------------------------------------------------------------- %
\subsection{Environmental Sensing Experiment} \label{sec:exp}
%
% Theme: RANS solver, domain, BCs
Next, we demonstrate the robustness of our proposed framework to significant uncertainties present in the real-world by considering an experiment in a $2.2 \times 2.2 \times 0.4$\,m$^3$ domain with an inlet, an outlet, and an obstacle inside as shown in Figure \ref{fig:domain}; the origin of the coordinate system is located at the bottom left corner of the domain.
\begin{figure}[t!]
  \centering
    \includegraphics[width=0.35\textwidth]{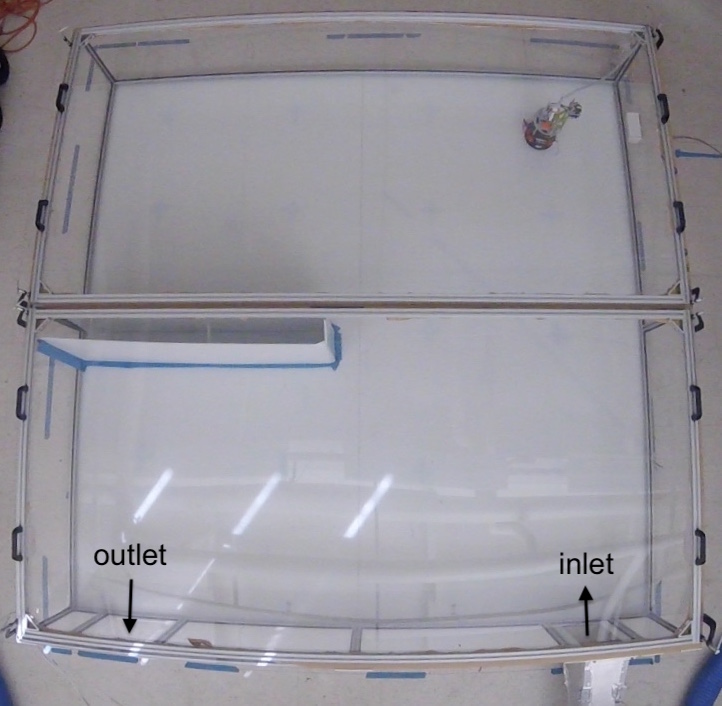}
  \caption{Domain of the experiment. A $2.2 \times 2.2 \times 0.4$m$^3$ box with velocity inlet at bottom right and outlet at bottom left. The origin of the coordinate system is located at the bottom left corner.} \label{fig:domain}
\end{figure}
We use a fan to generate a flow at the inlet with average velocity $q_{\text{in}} = 0.78$\,m/s and utilize the mobile robot designed in Section \ref{sec:mobSen} to conduct the experiment; see Appendix \ref{app:hardware} for details on hardware and control of the robot. See Appendix \ref{app:sigProcess} for a discussion on how to process the instantaneous velocity readings and the accuracy of the probabilistic measurement models developed in Section \ref{sec:statModel}.

We assume uncertainty in the inlet velocity and turbulent intensity values. This results in $\bbarn = 12$ different combinations of BCs and RANS models; see the first four columns in Table \ref{table:param} for details.
\begin{table}[t!]
\centering
\renewcommand{\arraystretch}{1.2}
\caption{BCs and prior uncertainty values for the pool of numerical solutions obtained for combinations of solvers and uncertain parameters.}
\footnotesize
\begin{tabular}{|c||c|c|c||c|c|} 
 \hline
 No. 			& model 		& $q_{\text{in}}$ (m/s)& $i_{\text{in}} $	& $\bbarsigma_{u,0}$ (m/s)	& $\bbarsigma_{i,0}$ 	\\ [0.5ex] 
 \hline\hline
 $1$			& $k-\epsilon$	& $0.78$		& $0.02$ 		&	$0.10$		& $0.05$			\\ 	 \hline
 $2$			& RSM		& $0.78$		& $0.02$ 		&	$0.20$		& $0.10$			\\ 	 \hline
 $3$			& $k-\omega$	& $0.78$		& $0.02$ 		&	$0.20$		& $0.10$			\\ 	 \hline
 $4$			& $k-\epsilon$	& profile		& $0.02$ 		&	$0.14$		& $0.07$			\\ 	 \hline
 $5$			& RSM		& profile		& $0.02$ 		&	$0.20$		& $0.10$			\\ 	 \hline
 $6$			& $k-\omega$	& profile		& $0.02$ 		&	$0.20$		& $0.10$			\\ 	 \hline
 $7$			& RSM		& profile		& $0.05$ 		&	$0.14$		& $0.07$			\\ 	 \hline
 $8$			& RSM		& profile		& $0.03$ 		&	$0.20$		& $0.10$			\\ 	 \hline
 $9$			& RSM		& profile		& $0.01$ 		&	$0.20$		& $0.10$			\\ 	 \hline
 $10$		& RSM		& profile		& $0.04$ 		&	$0.20$		& $0.10$			\\ 	 \hline
 $11$		& RSM		& $0.76$		& $0.03$ 		&	$0.20$		& $0.10$			\\ 	 \hline 
 $12$		& RSM 		& $0.80$	 	& $0.03$		&	$0.14$		& $0.07$			\\
 \hline
\end{tabular}
\label{table:param}
\end{table}
In the third column, `profile' refers to cases where the inlet velocity is modeled by an interpolated function instead of the constant value  $q_{\text{in}} = 0.78$\,m/s. Columns 5 and 6 show the prior uncertainty in the solutions of the first velocity component and turbulent intensity, where we set $\bbarsigma_{v,0} = \bbarsigma_{u,0}$; see equation \eqref{eq:ustd}. As discussed in Section \ref{sec:statModel}, these values should be selected to reflect the uncertainty in the numerical solutions. Here, we use the residual values provided by \textsc{ANSYS Fluent} as an indicator of the confidence in each numerical solution.
Figure \ref{fig:priorFlds} shows the velocity magnitude fields obtained using models 1 and 2.
\begin{figure}[t!]
	\centering
	\begin{subfigure}[b]{0.22\textwidth}
		\includegraphics[width=\textwidth]{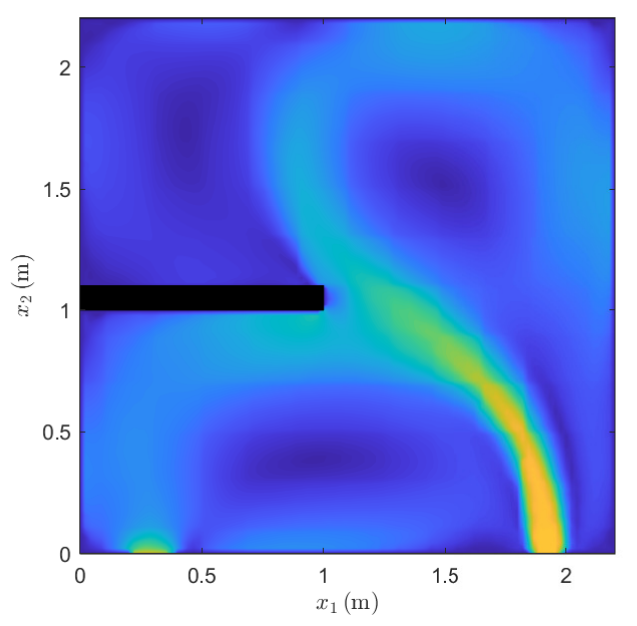}
	\captionsetup{justification=centering}
	\caption{$k-\epsilon$ model}
	\end{subfigure}
%	\quad
	\begin{subfigure}[b]{0.26\textwidth}
		\includegraphics[width=\textwidth]{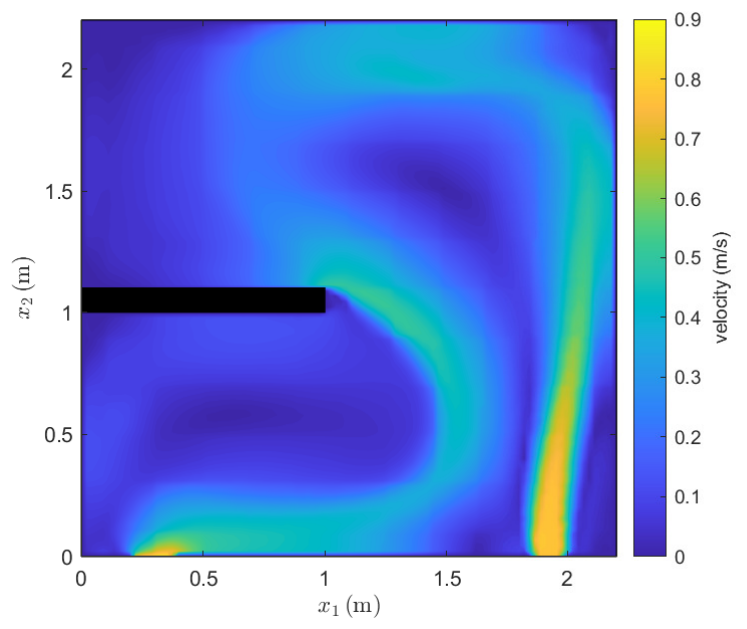}
	\captionsetup{justification=centering}
	\caption{RSM}
	\end{subfigure}
	\caption{Predictions of the velocity magnitude field according to models $1$ and $2$ in the plane of the mobile sensor located at the height of $0.27$\,cm.}\label{fig:priorFlds}
\end{figure}
The former is obtained using the $k-\epsilon$ model whereas the latter is obtained using the RSM, as reported in Table \ref{table:param}. Note that these two solutions are inconsistent and require experimental validation to determine the correct flow pattern.
% Theme: initialization
%We initialize Algorithm \ref{alg:ARF} by assigning uniform probabilities to all models in Table \ref{table:param}, i.e., $p_{j,0} = 1/\bbarn$ for $1 \leq j \leq \bbarn$.
We use $\bbarm = 9$ exploration measurements and set the maximum number of measurements to $m = 200$ and the maximum travel distance to $\ccalR = 1$\,m. The convergence tolerance in \eqref{eq:stop} is set to $tol = 2\times10^{-4}$\,m/s.

% Theme: placement
Figure \ref{fig:planning} shows the sequence of first 30 waypoints selected by Algorithm \ref{alg:NBP}. %For these results, we use a correlation characteristic length of $\ell = 0.35$m. This value provides a reasonable correlation between each measurement and its surroundings and guarantees an acceptable spacing between waypoints.
\begin{figure}[t!]
  \centering
    \includegraphics[width=0.35\textwidth]{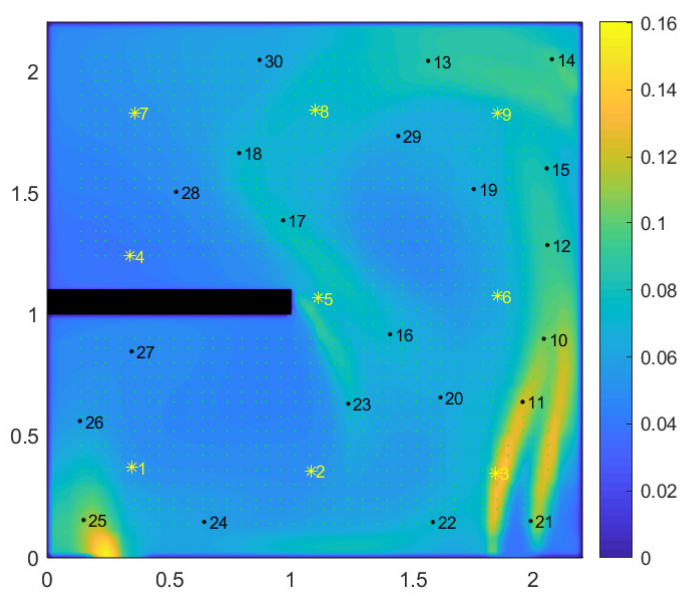}
  \caption{Path of the mobile sensor according to the entropy metric \eqref{eq:addediH} overlaid on the turbulent intensity field from model $7$. The green dots show the candidate measurement locations and the yellow stars show the exploration measurement locations. Finally, the black dots show the sequence of waypoints. For better clarity, only the first 30 measurements are included.}\label{fig:planning}
\end{figure}
The green dots in Figure \ref{fig:planning} show the $1206$ candidate measurement locations, collected in the set $\Omega$, and the yellow stars show the $\bbarm = 9$ exploration measurement locations selected over a lattice.
The black dots show the sequence of waypoints returned by Algorithm \ref{alg:NBP}.
%Note that as the size of the set $\Omega$ grows, it will take longer to evaluate the planning metric at all candidate locations in Algorithm \ref{alg:NBP}. The resolution of these candidate locations should be selected in connection with the characteristic length $\ell$. For the larger values of $\ell$, the resolution could be lower. Note also that the computations of the entropy metric \eqref{eq:addediH} at different locations are independent and can be done in parallel.% Finally, for very large domains we can consider a subset of candidate measurement locations from the set $\ccalS$ that lie in the neighborhood of the current location of the robot at every step.
%Referring to Figure \ref{fig:planning}, note that further reducing the travel distance constraint will certainly reduce energy consumption but will be detrimental to the primary goal of collecting informative measurements.
%
According to the convergence criterion \eqref{eq:stop}, Algorithm \ref{alg:ARF} converges after $k=155$ measurements.
%
% Theme: planning objective
Figure \ref{fig:deltaI} shows the added information using the entropy metric \eqref{eq:addediH} after the addition of each of these measurements. %where the correlation function \eqref{eq:correlation} is computed according to $L_g(x)$.
\begin{figure}[t!]
  \centering
    \includegraphics[width=0.35\textwidth]{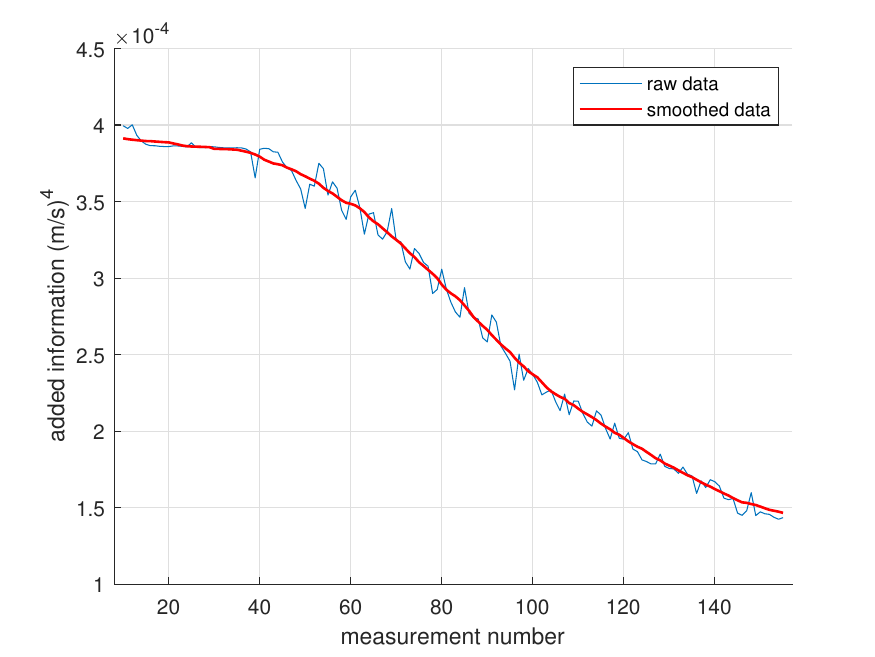}
  \caption{Added information vs measurement number for entropy metric \eqref{eq:addediH}.} \label{fig:deltaI}
\end{figure}
It can be observed that the amount of added information generally decreases as the mobile robot keeps adding more measurements. This is expected by the submodularity of the entropy information metric \eqref{eq:addediH}.
The oscillations in this figure are due to the travel distance constraint that might prevent the selection of the most informative measurement location at every step $k$.
%
% Theme: vector field
Figure \ref{fig:vecFld} shows the collected velocity vector measurements.
\begin{figure}[t!]
  \centering
    \includegraphics[width=0.33\textwidth]{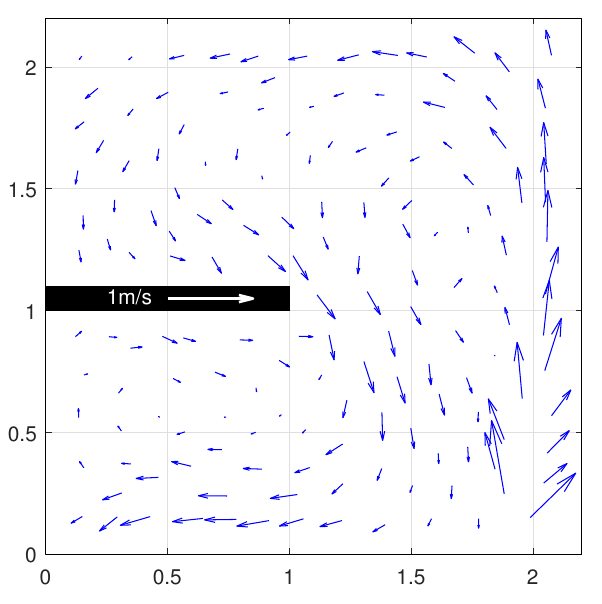}
  \caption{Velocity vector measurements for the experiment.} \label{fig:vecFld}
\end{figure}
Referring to Figure \ref{fig:priorFlds}, observe that this vector field qualitatively agrees with model 2 that was obtained using the RSM. The smooth streamlines that are clearly visible in this figure indicate that the interference from the robot is negligible.
% Theme: learning metric
Also, Figure \ref{fig:convergence} shows the evolution of the convergence criterion \eqref{eq:stop} of Algorithm \ref{alg:ARF}. The high value of this criterion for measurement $\bbarm=9$ corresponds to conditioning on all exploration measurements at once; the next measurements do not alter the posterior fields as much.
\begin{figure}[t!]
  \centering
    \includegraphics[width=0.35\textwidth]{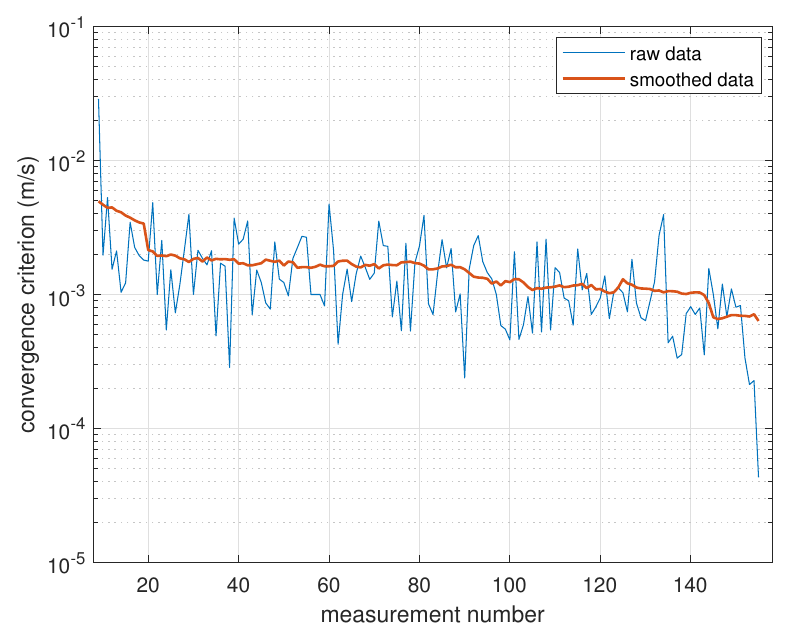}
  \caption{Evolution of convergence criterion \eqref{eq:stop} vs measurement number $k$.} \label{fig:convergence}
\end{figure}
%
%Note that the model probabilities $p_{j,k}$ used in these calculations are the posteriors given all measurements up to measurement $k$.

% Theme: model probabilities
%Figure \ref{fig:modelProb} shows the posterior probability of the models after conditioning on the measurements. %collected according to entropy metric and using $L_g(x)$ to compute the correlations.
%
%\begin{figure}[t!]
%  \centering
%    \includegraphics[width=0.48\textwidth]{modelP.eps}
%  \caption{Posterior model probabilities given all measurements.} \label{fig:modelProb}
%\end{figure}
%
The posterior probabilities $p_{j,k}$ converge shortly after the exploration measurements are collected and do not change afterwards. Particularly, the numerical solution from the RSM model $7$ is the only solution to have nonzero probability, i.e., $p_{7,155} = 1$. This means that the most accurate model can be selected given a handful of measurements that determine the general flow pattern. It is important to note that all solutions provided by RSM share a similar pattern and the empirical data help to select the most accurate model.
Note also that these posterior probabilities are computed given `only' the available models listed in Table \ref{table:param} and they should be interpreted with respect to these models and not as absolute probability values.

% Theme: posterior velocity field and turbulent intensity and their variances
In Figure \ref{fig:postFld}, the prior velocity magnitude and turbulent intensity fields corresponding to the most likely model $7$ and the posterior fields, computed using equations \eqref{eq:GM}, are given.
\begin{figure*}
	\centering
	\begin{subfigure}[b]{0.221\textwidth}
		\includegraphics[width=\textwidth]{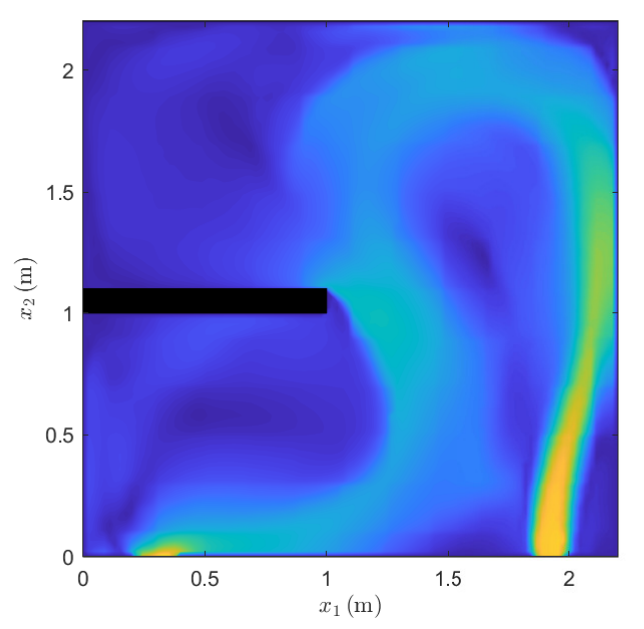}
	\captionsetup{justification=centering}
	\caption{prior velocity magnitude} \label{fig:}
	\end{subfigure}
%	\quad
	\begin{subfigure}[b]{0.256\textwidth}
		\includegraphics[width=\textwidth]{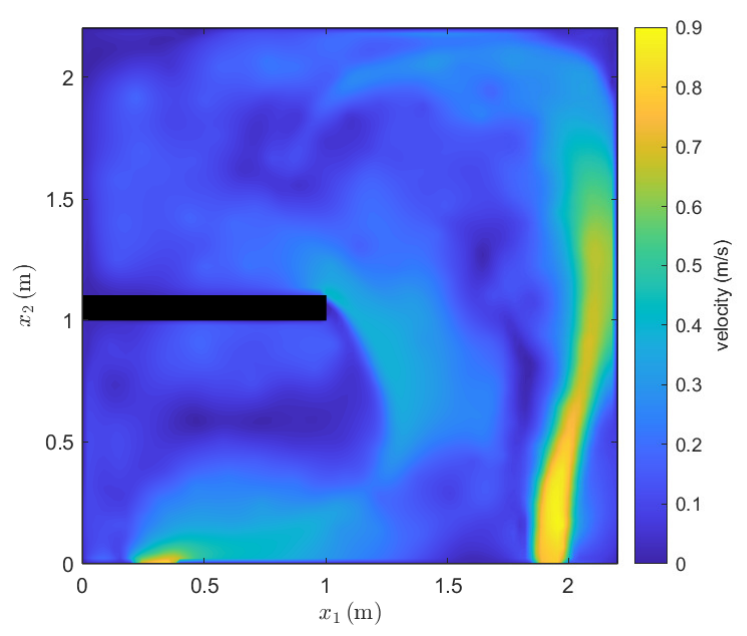}
	\captionsetup{justification=centering}
	\caption{posterior velocity magnitude} \label{fig:velMagPost}
	\end{subfigure}
%	\quad
	\begin{subfigure}[b]{0.221\textwidth}
		\includegraphics[width=\textwidth]{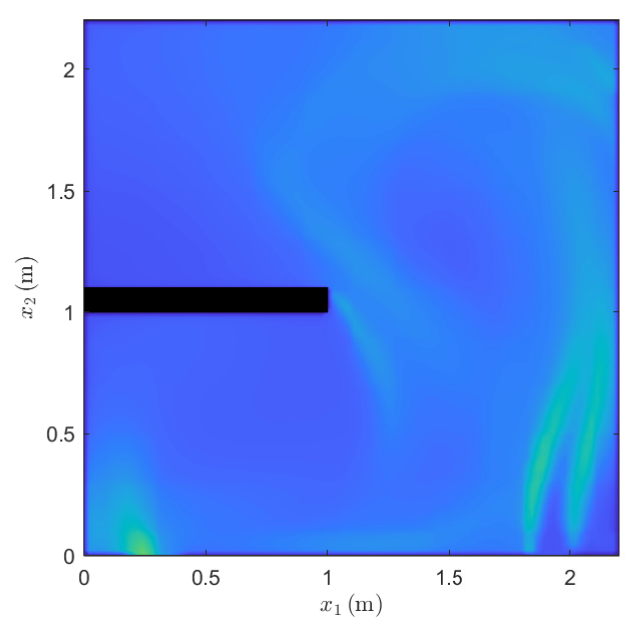}
	\captionsetup{justification=centering}
	\caption{prior turbulent intensity} \label{fig:}
	\end{subfigure}
%	\quad
	\begin{subfigure}[b]{0.261\textwidth}
		\includegraphics[width=\textwidth]{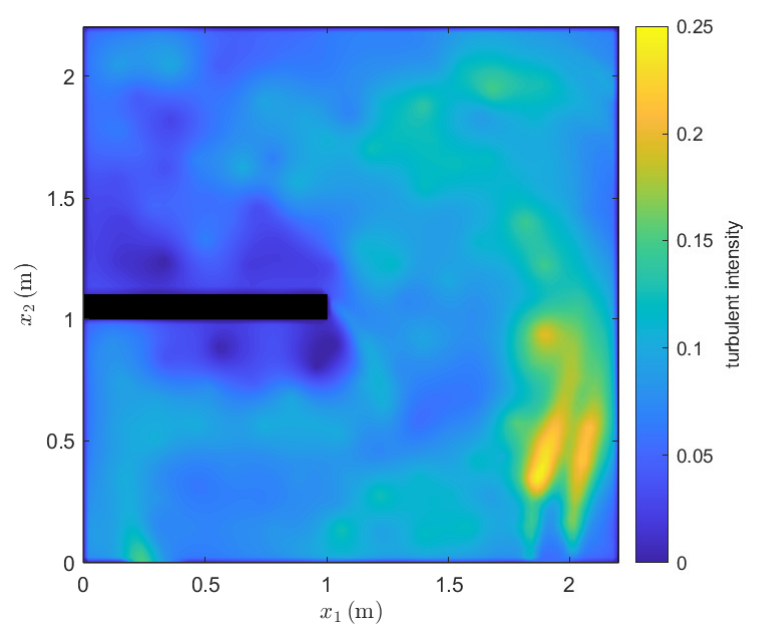}
	\captionsetup{justification=centering}
	\caption{posterior turbulent intensity} \label{fig:turbIpost}
	\end{subfigure}
	\caption{The prior fields for the most likely model $7$ and the posterior fields obtained using equations \eqref{eq:GM} after conditioning on empirical data.} \label{fig:postFld}
\end{figure*}
Comparing the prior and posterior velocity fields, we observe a general increase in velocity magnitude at the top-left part of the domain indicating that the flow sweeps the whole domain unlike the prior prediction from model $7$; see also Figure \ref{fig:vecFld}.
Furthermore, comparing the prior and posterior turbulent intensity fields, we observe a considerable increase in turbulent intensity throughout the domain. Referring to Table \ref{table:param}, note that among all RSM models, model $7$ has the highest turbulent intensity BC.
%a reduction in velocity and increase in turbulent intensity below the domain center that is not predicted by the numerical solution. This could be due to the presence of metal beams used for structural reinforcement in the domain; cf. Figure \ref{fig:domain}. These details are not modeled in the geometry of numerical solution but can have considerable effect on the real flow field.
%Furthermore, from Figure \ref{fig:velMagPostSd} we observe that the uncertainty in front of the inlet drops less than the rest of the domain despite collecting measurements there. This implies that there is higher variation in this region and larger number of instantaneous samples are required to decrease the uncertainty.
In \cite{meJ3_video2}, a visualization of the flow field is shown that validates the flow pattern depicted in Figures \ref{fig:vecFld} and \ref{fig:postFld}.

% ------------------------------------------------------------- %
%\subsection{Prediction} \label{sec:pred}
%
% Theme: error metrics and prediction error
To approximate the prediction performance of the posterior model, we collect $\hhatm=100$ new measurements at randomly selected locations; note that unlike Section \ref{sec:sim}, here we do not have access to the ground truth.
%Let $\hhaty_u(x_l)$ denote the measurement of the first velocity component at a location $x_l$ for $1 \leq l \leq \hhatm$. 
%Let also $\mu_u(x_l | \ccalX_k)$ denote the predicted mean velocity obtained from \eqref{eq:GMmean} after conditioning on the previously collected measurements $\ccalX_k$ up to iteration $k$ of Algorithm \ref{alg:ARF}.
%%
%Then, we define the mean error over the random locations $x_l$ as
%%
%\begin{equation} \label{eq:predErr}
%e_{u,k} = \frac{1}{\hhatm} \sum_{l=1}^{\hhatm} \abs{ \mu_u(x_l | \ccalX_k) - \hhaty_u(x_l) } ,
%\end{equation}
%%
%and finally
%%
%$ e_k = {1}/{3} ( e_{u,k} + e_{v,k} + e_{i,k} ) , $
%%
%where the expressions for $e_{v,k}$ and $e_{i,k}$ are identical.
%We are particularly interested in $e_0$ as the prediction error of the numerical solutions and $e_m$ as the prediction error of the posterior field.
Using equation \eqref{eq:predErr}, the prior prediction error is $e_0 = 0.092$\,m/s while the posterior error is $e_{155} = 0.037$\,m/s, a $60\%$ improvement compared to $e_0$.
%
%In Table \ref{table:error}, we report these posterior error values $e_{40}$ for both correlation field $L(x)$ options and information metrics.
%%
%\begin{table}[t!]
%\centering
%\renewcommand{\arraystretch}{1.1}
%\captionsetup{justification=centering}
%\caption{Prediction error values.}
%\begin{tabular}{|c||c|c|} 
% \hline
% $L(x)$ 		& entropy 		& mutual information		\\ [0.5ex] 
% \hline\hline
% $L_t(x)$		& $0.15$		& $0.70$				\\ 	 \hline
% $L_g(x)$		& $0.10$ 		& $0.80$	 			\\
% \hline
%\end{tabular}
%\label{table:error}
%\end{table}
%%
%It is obvious that the posterior error value $e_{40}$ is smaller than the numerical error value $e_0$ irrespective of the combination of options used in the ARF Algorithm \ref{alg:ARF}.
%
%Given the posterior knowledge that model $7$ has the highest probability, we also compute $\bbare_{0,7}$ where we use $\mu_u(x_l | \ccalX_k, \ccalM_7)$ in definition \eqref{eq:predErr} instead of the mean value from \eqref{eq:GMmean}. We do the same for $v(x)$ and $i(x)$. In this case, the error using the prior numerical model is 
Given the posterior knowledge that model $7$ is the most likely model, we have $\bbare_{0,7} = 0.052$\,m/s which is still $29\%$ higher than the posterior error value $e_{155}$. This demonstrates that the real flow field can be best predicted by systematically combining physical models and empirical data. Note that this hypothetical scenario requires the knowledge of the best model which is not available \textit{a priori}.
In Figure \ref{fig:errVal}, we plot separately for $u, v, \and i$, the prior errors of individual models as well as the prior and posterior models.
\begin{figure}[t!]
  \centering
    \includegraphics[width=0.45\textwidth]{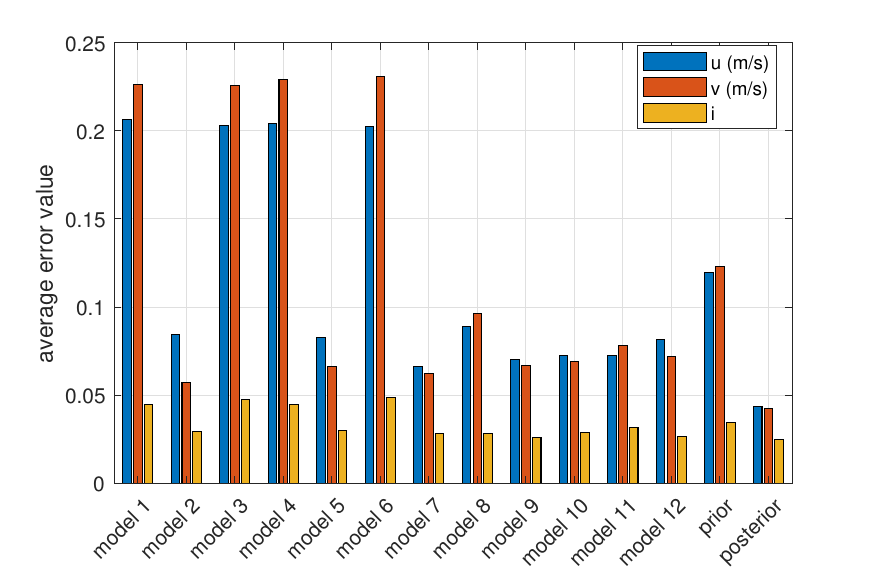}
  \caption{Prior error values for individual models along with their averaged prior as well as the posterior errors $e_{u,155}, e_{v, 155}$, and $e_{i, 155}$.} \label{fig:errVal}
\end{figure}
It can be seen that the solutions using the RSM models, including model $7$, generally have smaller errors; see also Table \ref{table:param}.
%Finally, referring to error values in Table \ref{table:error}, we observe that the entropy metric outperforms MI for our application. Given that the MI has suboptimality lower bound from Section \ref{sec:OPP}, it is a good benchmark to gauge the entropy metric.

As discussed at the beginning of this section, an important advantage of the proposed approach is that it provides uncertainty bounds on the predictions. Figure \ref{fig:uncertainty} in Appendix \ref{app:individualPredErrBars} shows the values of the individual measurements at each one of the $\hhatm$ measurement locations along with the posterior predictions and their uncertainty bounds.
%
%\begin{figure*}
%	\centering
%	\begin{subfigure}[b]{0.95\textwidth}
%		\includegraphics[width=\textwidth]{experiment/uErr.eps}
%	\captionsetup{justification=centering}
%	\caption{first velocity component} \label{fig:uErr}
%	\end{subfigure}
%	\quad
%	\centering
%	\begin{subfigure}[b]{0.95\textwidth}
%		\includegraphics[width=\textwidth]{experiment/vErr.eps}
%	\captionsetup{justification=centering}
%	\caption{second velocity component} \label{fig:vErr}
%	\end{subfigure}
%	\quad
%	\begin{subfigure}[b]{0.95\textwidth}
%		\includegraphics[width=\textwidth]{experiment/iErr.eps}
%	\captionsetup{justification=centering}
%	\caption{turbulent intensity} \label{fig:iErr}
%	\end{subfigure}
%	\caption{Posterior predictions and their uncertainty bounds at $\hhatm$ test locations along with the measured values shown by red stars. We also plot the error bounds on the measurements that fall outside the uncertainty bounds.} \label{fig:uncertainty}
%\end{figure*}
%
We also include error bars for the measurements that fall outside these uncertainty bounds. Out of $\hhatm=100$ measurements, $90\%$ fall inside the one standard deviation prediction bound for the first velocity component. Similarly, $90\%$ and $89\%$ of the measurements of the second velocity component and turbulent intensity are inside this bound. This shows that the proposed method provides reasonable uncertainty bounds.

%\textcolor{red}{There is spatial and temporal resolution lower bound on the sensors due to mechanical constraints. The spatial resolution is higher when the flow has less oscillation in angle since nearby sensors are closer to each other but in low velocity regions this error might be higher.}
%
%\textcolor{red}{There is difference between variation and uncertainty; the former is a property of the process while the latter can be decreased by better measurements. For turbulent intensity measurements, the localization and heading errors do not contribute to variation and should be excluded. In contrast, lack of spatial and temporal resolution contributes to variation but is not easy to model!}

In Appendix \ref{app:planComp}, we perform a study similar to Section \ref{sec:sim} on the performance of our proposed planning metric with an emphasis on suboptimality. It is known that mutual information (MI) as a planning metric possesses a suboptimality bound under certain set of conditions. This planning metric, however, is computationally intractable for large domains and was not included in the study of Section \ref{sec:sim}. 
For the smaller domain of this experiment, the prediction error of the entropy metic is better than the MI metric since unlike the entropy metric which signifies regions with higher variations in the velocity field (see Figure \ref{fig:simMetrics}), the MI metric is more sensitive to correlation; see Appendix \ref{app:planComp} for details.

\section{Discussion and Future Research Directions} \label{sec:disc}
%
% Theme: limitations of physics-based solutions
%
Our proposed physics-based approach, although more effective in learning environmental fields than purely data-driven methods, requires more information. Specifically, it assumes that a reliable physical model as well as the geometry of the domain and BCs are available. Although the probabilistic framework of Algorithm \ref{alg:ARF} allows for uncertainty in this information, we cannot expect a better performance if the available models are non-informative. This paper however, removes computational cost as an impediment to application of physics-based solutions to environmental sensing problems.

%Regarding scalability, note that the importance of our proposed algorithm will in fact become more evident as the domain size increases. This is because for larger domains, numerical simulations become more expensive. Consequently, it might be infeasible to obtain many numerical simulations. On the other hand, pure empirical estimation of the flow field becomes equally challenging since it requires dense measurements throughout the domain. A combined approach that allows for selection of the most accurate models given a reasonable number of measurements, is the ideal setting benefiting from both numerical simulations and experimental data.
%\footnote{Note that beyond a given domain size, it might be unnecessary to consider the entirety of the domain at once. Instead, one might be able to decompose the domain into smaller subdomains for which the flow conditions are in fact influenced by the immediate surroundings.}
%Note also that the computational cost of evaluating our entropy planning metric \eqref{eq:addediH} is independent of the domain size and only depends on the number of candidate measurement locations. For larger domains, as discussed in Remark \ref{rem:travelDist}, we enforce proximity constraints to prevent long travel distances and to decrease the number of candidate measurement locations.
%
In this paper, we have assumed that the turbulent intensity can be modeled using a GP. While this assumption allowed us to obtain a closed-form expression for the posterior distribution, more accurate probabilistic models of the turbulent intensity may improve the performance of the proposed method. %and will be explored in our future research. 
Also, as discussed prior to Assumption \ref{ass:velCompCorr}, the components of the velocity field are in fact correlated by the Navier-Stokes PDEs. While assuming independence simplified the development of the proposed framework, these correlations can be statistically captured using multi-output GPs \cite{CECMOGP2011AL}.
%In this paper we have designed a cost-effective, custom-built sensor set that measures the flow field in a plane parallel to the ground. Extending this design to measure all three components of the flow, and also using unmanned aerial vehicles (UAVs) that can measure at different heights are some possible extensions. Note that when using UAVs, one needs to ensure that the flow generated by the propellers does not interfere with the sensor readings.
%In the planning Algorithm \ref{alg:NBP}, we enumerate over the set of candidate measurement locations to select the next best measurement locations. Although we show that this approach when entropy metric is used, outperforms the MI which possesses a suboptimality bound, one might want to consider planning over a horizon. In such case, the brute force approach will become considerably more expensive.

% Theme: time-dependent problems
%
Additionally, while we have used a \textit{constant} correlation length scale in \eqref{eq:correlation}, it is possibly beneficial to define this length scale as a function of the characteristics of the turbulent flow; e.g., the integral length scale \cite{TMCFD1993W}. For instance, referring to Figure \ref{fig:postFld}, observe that the flow varies very little in the top-left area whereas it varies a lot in the bottom-right corner. Tuning the length scale to account for such variations, can result in better inference, prediction, and planning. %For example, we can consider the integral length scale of turbulent flows \cite{TMCFD1993W}.
The challenge with using such arbitrary length scales is ensuring that the resulting covariance matrices remain positive-definite; see \cite[Sec. 4.2.3]{GPML2006RW} for more details. %Designing such kernel functions specifically tailored for turbulent flows is an important future research direction.}

In Assumption \ref{assumption:ergodic}, we assumed that the turbulent flow is statistically steady and ergodic. For time-dependent problems, the flow is unsteady and Proposition \ref{prop:CLT} does not hold, meaning that we cannot model the flow properties using GPs.
It is also impossible to empirically measure the mean flow properties in this case. Extension of the proposed framework for time-dependent turbulent flows is a future research direction but as the authors in \cite{TEOPPGF2016KBH} point out, "utility of such an extension is limited by the accuracy of flow model used to predict the future time variations of the flow".
%
%\blue{ Finally, extending the results to a network of robots will only change the inference and planning and not the statistical model of the flow or development of noise components. Assuming that a central processing unit performs the inference, one can simply divide the domain of interest into subdomains that are covered with the network of robots. A distributed setting where the robots maintain their local estimates of the flow field is also possible but beyond the scope of the current paper; see for instance \cite{meJ2}.}

% Theme: scalibility
%
Finally, note that the main focus of this paper was to demonstrate the effectiveness of physics-based solutions to environmental sensing problems. In Section \ref{sec:sim}, we showed that Algorithm \ref{alg:ARF}, in its current form, can handle large domains since it requires considerably fewer measurements than purely data-driven methods. We plan to further investigate the scalability of our framework as well as distributed implementations involving teams of mobile robots.
It is well-known that the computational cost of inference using GPs increases when the number of observations increases \cite{EBSPMSNCMRF2013XCDM,MRSEMGMRF2020NKRD,ELGFGMRF2011LH}. A simple way to further improve scalability, is to utilize Gauss-Markov random fields (GMRFs) with the Matren kernel function instead of GPs; see \cite{GMRF2005RH}. For instance, the authors in \cite{EBSPMSNCMRF2013XCDM} derive a sequential formula for prediction using GMRFs whose cost is independent of the number of measurements.

% ------------------------------------------------------------------------------------------------------------------------------ %
\section{Conclusion} \label{sec:concl}
We proposed a physics-based method to learn environmental fields using a mobile robot. Specifically, we constructed GP models of the flow properties and used numerical simulations to inform their prior mean. Then, utilizing Bayesian inference, we incorporated measurements of flow properties into these GPs. To collect the measurements, we controlled a custom-built mobile robot sensor through a sequence of waypoints that maximize the information content of the measurements. We showed that, compared to purely data-driven methods that are common in the literature, our method can produce high-fidelity global estimations using only sparse measurements. Moreover, since the computationally expensive numerical simulations can be performed offline, our method can be implemented in real-time onboard mobile robots. To the best of our knowledge, this is the first physics-based framework for environmental sensing that has also been effectively demonstrated in practice.

% ------------------------------------------------------------------------------------------------------------------------------ %
\section*{Acknowledgements}
We would like to thank Dr. Wilkins Aquino for providing access to \textsc{ANSYS Fluent}, Dr. Scovazzi for his valuable input on turbulence, and Eric Stach and Yihui Feng for their help in designing the experimental setup and the mobile sensor.

% ------------------------------------------------------------------------------------------------------------------------------ %
\bibliographystyle{ieeetr}
\bibliography{MyBibliography}

% ------------------------------------------------------------------------------------------------------------------------------ %
\renewcommand{\thefigure}{A\arabic{figure}}
\setcounter{figure}{0}

\renewcommand{\thetable}{A.\Roman{table}}
\setcounter{table}{0}

\renewcommand{\thealgorithm}{A.\Roman{algorithm}}
\setcounter{algorithm}{0}

\appendices
% ------------------------------------------------------------------------------------------------------------------------------ %
\section{Details of Statistical Flow Model}\label{sec:turbF_app}
%
% ------------------------------------------------------------- %
\subsection{Proof of Gaussianity of Mean Flow Components} \label{app:CLT}
This directly follows from Assumption \ref{assumption:ergodic} and the central limit theorem (CLT) \cite{FCP2002R}. According to the ergodicity assumption, the time-averaged velocity component $u(x)$ is equal to the ensemble average $\hhatbbq_1(x,t)$. 
Also, since the flow is statistically steady, at a given spatial location $x$, the samples $u(x,t_l)$ are identically distributed.
Moreover, since $u(x,t)$ is a physical quantity, its variance is bounded. Then, from the CLT it follows that as $n \to \infty$, the distribution of $u(x)$ approaches a normal distribution. More specifically,
$$ \sqrt{n} \, \left( u(x) - \hhatbbq_1(x,t) \right) \rightarrow \ccalN \left( 0, \var[\bbq_1(x,t)] \right) .$$

% ------------------------------------------------------------- %
\subsection{Proof of Validity of Kernel Function} \label{app:kernel}
This follows from the fact that the kernel function \eqref{eq:kernel} is the composition of multiple valid kernel functions. First, note that the correlation function \eqref{eq:correlation} is a valid kernel itself. Regarding the standard deviation \eqref{eq:ustd}, note that multiplication of any function with itself, i.e., $\kappa_1(x,x') = i(x) i(x')$, constitutes a valid kernel. Moreover, scaling and addition with positive constants $q^2_{\text{ref}}/n_0$ and $\bbarsigma_{u,0}^2$, respectively, preserves the validity of the kernel. Thus the standard deviation \eqref{eq:ustd} is a valid kernel as well. Finally, the multiplication of two valid kernels results in a valid kernel meaning that the kernel function \eqref{eq:kernel} is valid as desired. See \cite[Sec 4.2]{GPML2006RW} for details.

% ------------------------------------------------------------- %
\subsection{Variance of Turbulent Intensity Measurements} \label{app:turbIvar}
In general, estimating the variance of turbulent intensity measurement $\sigma_i^2$ requires the knowledge of higher order moments of the random velocity components; cf. \cite{TBUETS1996BG}. Specifically under a Gaussianity assumption, it depends on the mean values $y_u(x) \and y_v(x)$ which are not necessarily negligible. Consequently, we need to incorporate this uncertainty into our statistical model.
To this end, we utilize the Bootstrap resampling method. Assume that the samples of instantaneous velocity $y_u(x,t_l)$ are independent and consider the measurement set $\ccalY_u = \set{y_u(x,t_l) \, | \, 1 \leq l \leq n}$; define $\ccalY_v$ and $\ccalY_w$ similarly. Furthermore, consider $n_b$ batches $\ccalB_j$ of size $n$ obtained by randomly drawing the same samples from $\ccalY_u$, $\ccalY_v$, and $\ccalY_w$ with replacement. Using \eqref{eq:turbIapprox}, we obtain $n_b$ estimates $\hhati_j(x)$ corresponding to the batches $\ccalB_j$. Then, the desired variance $\sigma_i^2(x)$ can be estimated as
\begin{equation} \label{eq:turbIvar}
\hat{\sigma}_i^2(x) = \frac{1}{n_b-1} \sum\nolimits_{j=1}^{n_b} (\hhati_j - \bbari)^2 ,
\end{equation}
where $\bbari(x)$ is the mean of the batch estimates $\hhati_j(x)$; see \cite{IVE2007W} for more details.

% ------------------------------------------------------------- %
\subsection{Estimation of Integral Time Scale} \label{app:smpFreq}
In practice, we have access to a finite set of samples of the instantaneous velocity vector over time and can only approximate \eqref{eq:intTimeScale}. Let $l$ denote the discrete lag. Then, the sample autocorrelation of the first velocity component is given by
$$ \hat{\rho}_u(l) = \frac{1}{n} \sum\nolimits_{k=1}^{n-l} [y_u(x,t_k) - y_u(x)] [y_u(x,t_{k+l}) - y_u(x)] . $$
This approximation becomes less accurate as $l$ increases since the number of samples used to calculate the summation decreases. Furthermore, the integral of the sample autocorrelation $\hat{\rho}_u(l)$ over the range $1 \leq l \leq n-1$ is constant and equal to $0.5$ \cite{SSAF2009H}. This means that we cannot directly use \eqref{eq:intTimeScale} which requires integration over the whole time range. The most common approach is to integrate $\hat{\rho}_u$ up to the first zero-crossing \cite{AFDIL2004NNDJ}.

% ------------------------------------------------------------------------------------------------------------------------------ %
\section{Details of Mobile Robot Design}\label{sec:path_app}
%
% ------------------------------------------------------------- %
\subsection{Flow Measurement Algorithm} \label{app:flowmetry}
Algorithm \ref{alg:flowmetry} is used by the mobile robot sensor to collect the desired flow measurements.
\begin{algorithm}[t]
\caption{Flow Measurements using Mobile Robot}
\label{alg:flowmetry}
\begin{algorithmic}[1]
\small

\REQUIRE Measurement location $x$, sample number $n$, and the orientation $\beta$ of the mobile sensor;

\STATE Given $\beta$, compute sensor headings $\beta_j$ for $1 \leq j \leq 8$;

\FOR{$k=1:n$}
	
	\STATE Receive the readings $\bbs \in \reals^8$ from all sensors;
	
	\STATE Let $j$ and $l$ denote sensors with the highest readings;	\label{line:sortReading}
	
	\IF{$ \abs{j-l} > 1 $}	\label{line:check1}
	
		\STATE \texttt{warning: inaccurate sample!}
	
	\ENDIF
	
	\STATE Let $\xi = \sign[ \sin(\beta_j - \beta_l) ]$ and compute:	\label{line:angle}
	\begin{align*}
	\theta = \arctan [ \xi (s_j \cos \beta_l - s_l \cos \beta_j),	
	\xi(- s_j \sin \beta_l + s_l \sin \beta_j) ] .
	\end{align*}
	
	\IF{$ \theta < \min \set{\beta_j,\beta_l} $ or $ \max \set{\beta_j,\beta_l} < \theta $}	\label{line:check2}
	
		\STATE \texttt{warning: inaccurate sample!}
	
	\ENDIF
	
	\STATE Velocity magnitude: $q = \bbs_j / \cos( \theta - \beta_j) ; $ \label{line:velMag}
	
	\STATE First velocity component: $\hhatbby_{u,k} = q \cos \theta$;	\label{line:vel1}
	
	\STATE Second velocity component: $\hhatbby_{v,k} = q \sin \theta$; \label{line:vel2}
		
\ENDFOR

%\STATE Average sensor readings: $\bbr = \texttt{mean}(\bbs)$;	\label{line:avgRead}
%
%\STATE Mean velocity magnitude: $y_q = \texttt{mean}(\bby_q)$;

\STATE First mean velocity component: $y_u = \texttt{mean}(\hhatbby_u)$;	\label{line:avgRead1}

\STATE Second mean velocity component: $y_v = \texttt{mean}(\hhatbby_v)$;		\label{line:avgRead2}

\STATE Compute sample variances $\hat{\sigma}^2_u$ and $\hat{\sigma}^2_v$ via \eqref{eq:uVar} and \eqref{eq:vVar};

\STATE Compute turbulent intensity measurement $y_i$ via \eqref{eq:turbIapprox}; \label{line:turbI}

\STATE Compute variance $\hat{\sigma}^2_i$ of turbulent intensity using \eqref{eq:turbIvar};	\label{line:turbIvar}

\STATE Return $y_u$, $y_v$, $y_i$, $\hat{\sigma}^2_u$, $\hat{\sigma}^2_v$, and $\hat{\sigma}^2_i$;

\end{algorithmic}
\end{algorithm}
Given the measurement location $x$ and the heading of the mobile sensor $\beta$, the algorithm collects instantaneous samples of the velocity vector field and computes the mean velocity component measurements $y_u(x)$ and $y_v(x)$ and the turbulent intensity measurement $y_i(x)$.
Particularly, in line \ref{line:sortReading} it finds the sensor indices $j$ and $l$ with the two highest readings, respectively. As discussed in Section \ref{sec:mobSen}, these two sensors should be next to each other. If not, the algorithm issues a warning indicating an inaccurate measurement in line \ref{line:check1}.
In line \ref{line:angle}, it computes the flow angle $\theta$ using the four-quadrant inverse Tangent function and equation \eqref{eq:sensVel}. This flow angle is validated in line \ref{line:check2}.
In lines \ref{line:velMag}-\ref{line:vel2}, the algorithm computes the magnitude of the velocity and its components.
The vectors $\hhatbby_u$ and $\hhatbby_v$ store samples at different times, i.e., $\hhatbby_{u,k} = y_u(x,t_k)$ is the $k$-th sample of the instantaneous velocity component $u(x,t)$; see also equation \eqref{eq:meas}. In lines \ref{line:avgRead1} and \ref{line:avgRead2}, \texttt{mean$(\cdot)$} denotes the mean function.
Finally, in lines \ref{line:turbI} and \ref{line:turbIvar}, the algorithm computes the turbulent intensity measurement and its variance.

% ------------------------------------------------------------- %
\subsection{Uncertainty due to Measurement Location Error} \label{app:measLocErr}
Let $x \sim \ccalN(x_0, \gamma^2_x \, \bbI_2)$ denote a $2$D distribution for the measurement location where $x_0$ is the nominal location and $\tdpi(x) = \set{\tdp_k, \tdx_k}_{k=1}^{\tilde{n}}$ denote its corresponding SROM discretization \cite{meC3}. Without loss of generality, we assume $\tdx_1 = x_0$, i.e., $x_0$ belongs to the set of SROM samples. From \eqref{eq:GP}, the measurement $y(x)$ at a point $x$ is normally distributed as $y | x \sim \ccalN(\mu(x), \kappa(x,x))$. Then, we can marginalize the location to obtain the expected measurement distribution as
$$ \bbarpi(y) = \sum\nolimits_{k=1}^{\tilde{n}} \tdp_k \, \bbarpi(y| \tdx_k) . $$
This distribution is a Gaussian mixture (GM) and cannot be properly modeled as a normal distribution. Nevertheless, the probability of the measurement location being far from the mean (nominal) value $x_0$ drops exponentially. This means that $\tdp_1$ corresponding to $\tdx_1 = x_0$ is larger than the rest of the weights and the distribution is close to a unimodal distribution.
Noting that we can obtain the mean and variance of $\bbarpi(y)$ in closed-form, a Gaussian distribution can match up to two moments of the underlying GM. Particularly, 
\begin{subequations} \label{eq:measLocE}
\begin{align}
\tilde{\mu}(x) &= \sum\nolimits_{k=1}^{\tilde{n}} \tdp_k \, \mu(\tdx_k), 	\label{eq:measLocMean} \\
\tilde{\sigma}^2(x) &= \sum\nolimits_{k=1}^{\tilde{n}} \tdp_k \left[ \kappa(\tdx_k, \tdx_k) + (\mu(\tdx_k) - \tilde{\mu}(x))^2 \right] . 	\label{eq:measLocVar}
\end{align}
\end{subequations}

Considering equations \eqref{eq:measLocE}, the following points are relevant:
(i) For simplicity, we do not consider the covariance between different measurements in this computation.
(ii) Equations \eqref{eq:measLocE} are only relevant at measurement locations. Thus, in equations \eqref{eq:inference}, we replace the entry of $\bbmu_{\ccalX}$ at location $x$ with the corresponding mean value $\tilde{\mu}(x)$ from \eqref{eq:measLocMean} and similarly, the diagonal entry of $\bbSigma_{\ccalX \ccalX}$ corresponding to $x$ with $\tilde{\sigma}^2(x)$ from \eqref{eq:measLocVar}.
(iii) The kernel function, defined in \eqref{eq:measKernel}, depends on the heading error variances $\bbargamma_{u, \beta}^2(x)$ and $\bbargamma_{v, \beta}^2(x)$ of the velocity components through \eqref{eq:uVar} and \eqref{eq:vVar}. These values depend on empirical data that are only available at the true measurement location and not at SROM samples $\tdx_k$. Thus, we use the same value for $\bbargamma_{u, \beta}$ and $\bbargamma_{v, \beta}$ at all SROM samples.

% ------------------------------------------------------------------------------------------------------------------------------ %
\section{Details of Experimental Results}\label{sec:res_app}
%
% ------------------------------------------------------------- %
\subsection{Mobile Robot Hardware and Control} \label{app:hardware}
%
% Theme: mobile robot, sensors, localization
As discussed in Section \ref{sec:mobSen}, we construct a custom mobile robot to carry out the experiment. To minimize interference with the flow pattern and since the experimental domain is small, we use a small differential-drive robot constructed with thin components; see Figure \ref{fig:robot}. This robot carries the flow sensors and collects the measurements.
We equip the mobile robot with an \textsc{Arduino Leonardo} microprocessor that implements simple motion primitives and collects instantaneous velocity readings and communicates them back to an off-board PC for processing. The radio communication between the mobile robot and the PC happens via a network of \textsc{Xbee-S2}s. The PC  processes the instantaneous readings, the localization information, and runs Algorithm \ref{alg:ARF} using \textsc{MATLAB}.
%As discussed in Section \ref{sec:mobSen}, the mobile sensor is also equipped with eight D6F-W01A1 \textsc{OMRON} sensors with range of $0-1$m/s and full-scale error rate of FS $ = 0.05$m/s.
%We set the standard deviation of the sensor noise to $\gamma_s = 0.017$m/s according to \eqref{eq:sensorNosie}, the heading error to $\gamma_{\beta} = 5^o$, and the measurement location error to $\gamma_x = 0.025$m. We construct a SROM model for the location error with $\tdn = 5$ samples; see Section \ref{sec:measNoise}. Note that the location error also takes into account the discretization of the numerical solutions and the error caused by the sensor rig structure.

In order to control the mobile robot between consecutive measurement locations, we decouple its motion into heading control and straight-line tracking. The low level planning to generate these straight-line trajectories while avoiding the obstacles can be done in a variety of ways; here we use geodesic paths generated by \textsc{VisiLibity} toolbox \cite{VisiLibityL2008O}. The feedback required for motion control is provided by an \textsc{OptiTrack} motion capture system. The robot stops when the distance between desired and final locations is below a threshold. A similar condition is used for the heading angle.

% ------------------------------------------------------------- %
\subsection{Signal Processing} \label{app:sigProcess}
%
% Theme: preface
In order to examine the accuracy of the probabilistic measurement models developed in Section \ref{sec:statModel}, we conduct a series of experiments where we collect $25$ measurements of the flow properties at two locations $x^1 = (1.9, 0.4)$ and $x^2 = (0.35, 0.4)$ in the domain of experiment; see Figure \ref{fig:domain}. $x^1$ is located at a high velocity region whereas $x^2$ is located at a low velocity region. %The robot travels back and forth between these two points ensuring that the measurements are independent. Each time we rotate the robot $15^o$ to study the effect of heading on the accuracy of measurements.
%
% Theme: instantaneous samples
The instantaneous samples of $y_u(x^1,t)$ and $y_v(x^1,t)$ are given in Figure \ref{fig:velSample} for the first measurement. 
\begin{figure}[t!]
  \centering
    \includegraphics[width=0.4\textwidth]{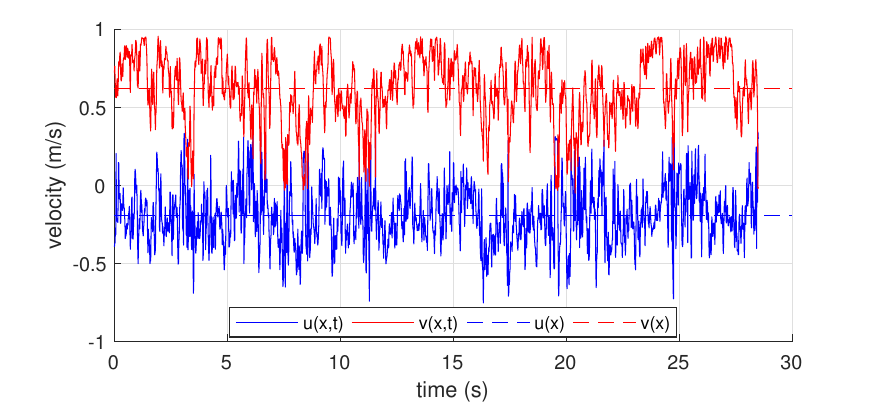}
  \caption{The instantaneous velocity components at $x^1$ for the first measurement where $y_u(x^1) = - 0.07$\,m/s and $y_v(x^1) = 0.63$\,m/s.} \label{fig:velSample}
\end{figure}
A video of the instantaneous velocity readings can be found in \cite{meJ3_video1}; note the amount of fluctuations in the velocity vector. The robot samples the velocity vector with a fixed frequency of $67$\,Hz. Then, to ensure that the samples are uncorrelated, it down-samples them using the integral time scale; see Section \ref{sec:smpFreq}.
The average integral time scales over $25$ measurements are $\bbart^*(x^1) = 0.26$\,s and $\bbart^*(x^2) = 0.29$\,s, respectively. 

% Theme: 25 measurements analysis
Figure \ref{fig:uSample} shows a histogram of the $25$ independent $u(x^1)$ measurements. This histogram approaches a Gaussian distribution as the number of measurements increases; see Section \ref{sec:meanVelComp}. Histograms of the second component and turbulent intensity are similar for both points.
\begin{figure}[t!]
  \centering
    \includegraphics[width=0.3\textwidth]{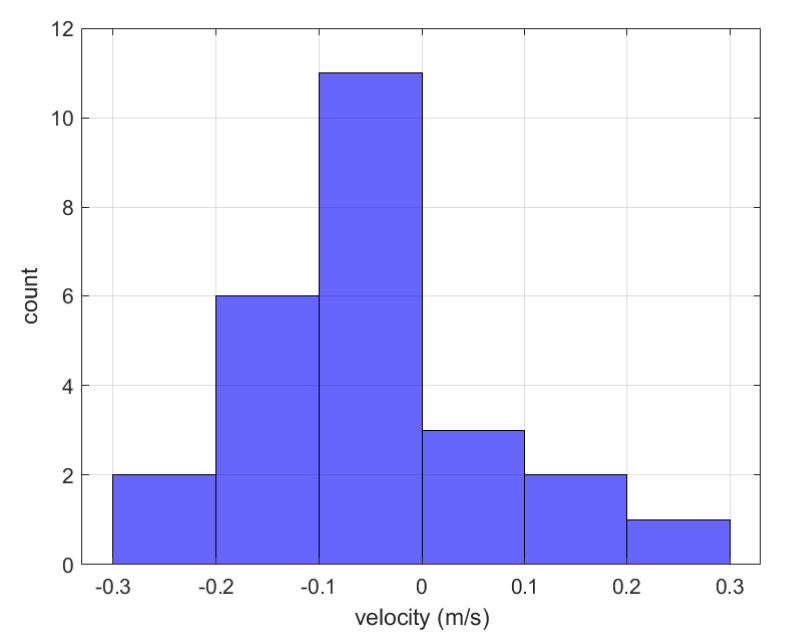}
  \caption{Histogram of the $25$ independent measurements of $u(x^1)$.} \label{fig:uSample}
\end{figure}
Figure \ref{fig:x1Vector} shows the measurements of the velocity vector at $x^1$.
\begin{figure}[t!]
  \centering
    \includegraphics[width=0.3\textwidth]{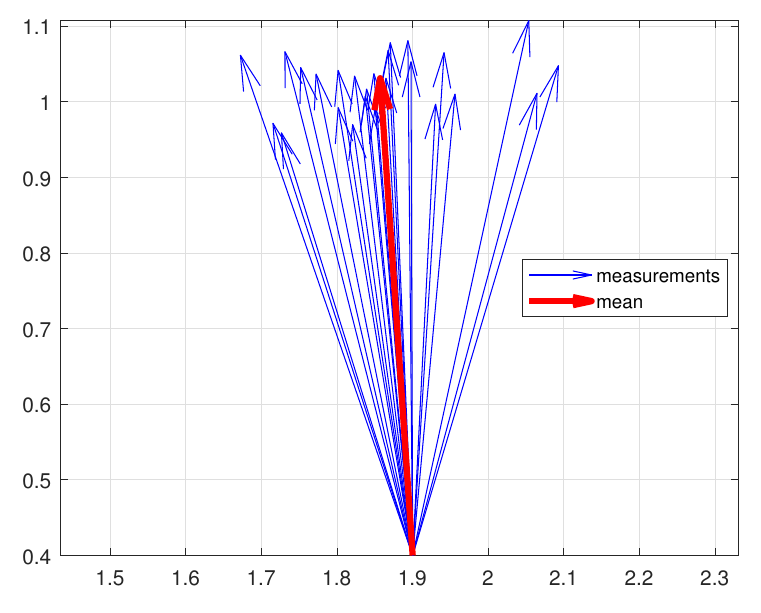}
  \caption{$25$ independent measurements of the velocity vector at $x^1 = (1.9, 0.4)$ along with their averaged velocity vector.} \label{fig:x1Vector}
\end{figure}
In Figure \ref{fig:sdSamples}, we plot the sample standard deviations of the velocity components and turbulent intensity, obtained from equations \eqref{eq:smpMvar} and \eqref{eq:turbIvar}, for both locations.
\begin{figure*}
	\centering
	\begin{subfigure}[b]{0.32\textwidth}
		\includegraphics[width=\textwidth]{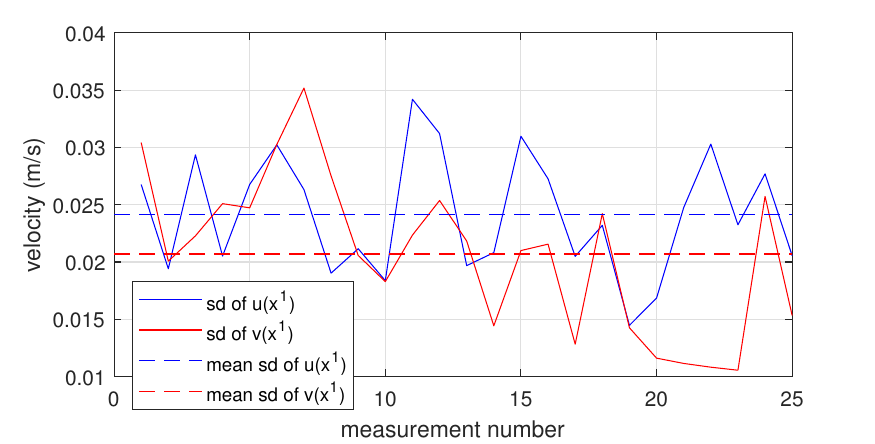}
	\captionsetup{justification=centering}
	\caption{velocity components at point $x^1$}
	\end{subfigure}
%	\quad
	\centering
	\begin{subfigure}[b]{0.32\textwidth}
		\includegraphics[width=\textwidth]{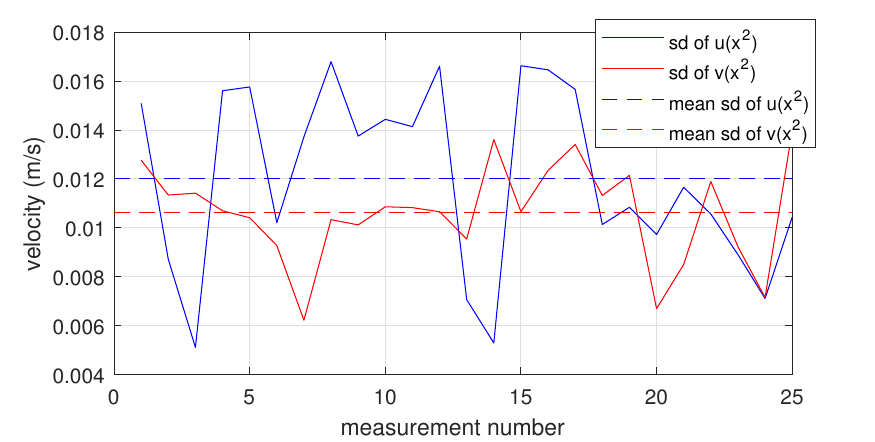}
	\captionsetup{justification=centering}
	\caption{velocity components at point $x^2$}
	\end{subfigure}
%	\quad
	\begin{subfigure}[b]{0.32\textwidth}
		\includegraphics[width=\textwidth]{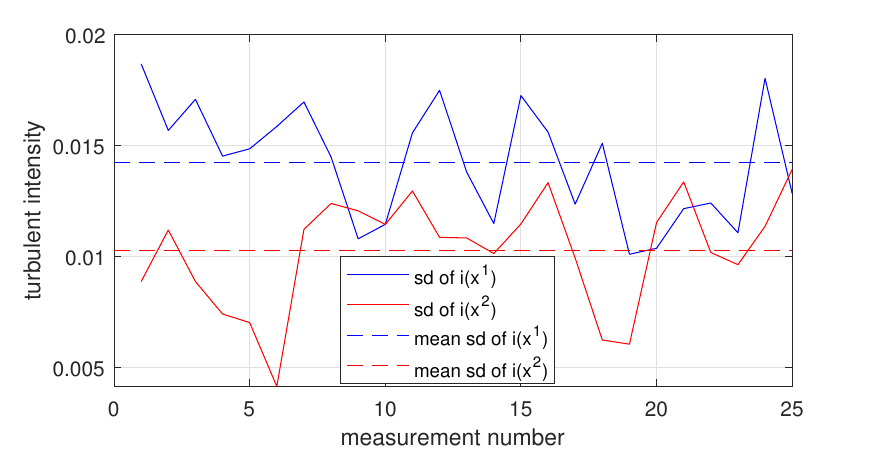}
	\captionsetup{justification=centering}
	\caption{turbulent intensity for points $x^1 \and x^2$}
	\end{subfigure}
	\caption{Standard deviations of velocity components and turbulent intensity for each of $25$ measurements in locations $x^1 = (1.9, 0.4)$ and $x^2 = (0.35, 0.4)$.}\label{fig:sdSamples}
\end{figure*}
%
%Note that for the velocity components, the average standard deviation values are close indicating that the isotropy assumption is appropriate.
In Table \ref{table:sd}, we compare the standard deviations estimated from the time-series measurements of each experiment, averaged over all experiments, to the standard deviation of time-averaged measurements from all $25$ experiments. We refer to the former as within-experiment and the latter as cross-experiment.
\begin{table}[t!]
\centering
\renewcommand{\arraystretch}{1.2}
\caption{Within-experiment and cross-experiment standard deviations.}
\footnotesize
\begin{tabular}{|c||c|c|} 
 \hline
 method 		& within-exp 	& cross-exp			\\ [0.5ex] 
 \hline\hline
 sd of $y_u(x^1)$	& $0.024$		& $0.120$				\\ 	 \hline
 sd of $y_v(x^1)$	& $0.021$		& $0.042$				\\ 	 \hline \hline
 sd of $y_u(x^2)$	& $0.012$		& $0.038$				\\ 	 \hline
 sd of $y_v(x^2)$	& $0.011$		& $0.023$				\\ 	 \hline \hline
 sd of $y_i(x^1)$	& $0.014$ 	& $0.026$	 			\\	 \hline
 sd of $y_i(x^2)$	& $0.010$ 	& $0.015$	 			\\	 \hline
\end{tabular}
\label{table:sd}
\end{table}
Note that the cross-experiment values, reported in the second column, are larger. This is due to actuation errors that are not considered within experiments but appear across different measurements, i.e., the experiments corresponding to location $x^1$ for instance, are not all conducted at the same exact location and heading of the robot. The small number of experiments may also contribute to inaccurate estimation of the standard deviations in the second column.

% ------------------------------------------------------------- %
\subsection{Uncertainty Quantification at New Measurement Locations} \label{app:individualPredErrBars}
Figure \ref{fig:uncertainty} depicts the individual measurements at each one of the $\hhatm=100$ test locations in Section \ref{sec:exp} along with the posterior predictions and their uncertainty bounds.
\begin{figure*}
	\centering
	\begin{subfigure}[b]{0.75\textwidth}
		\includegraphics[width=\textwidth]{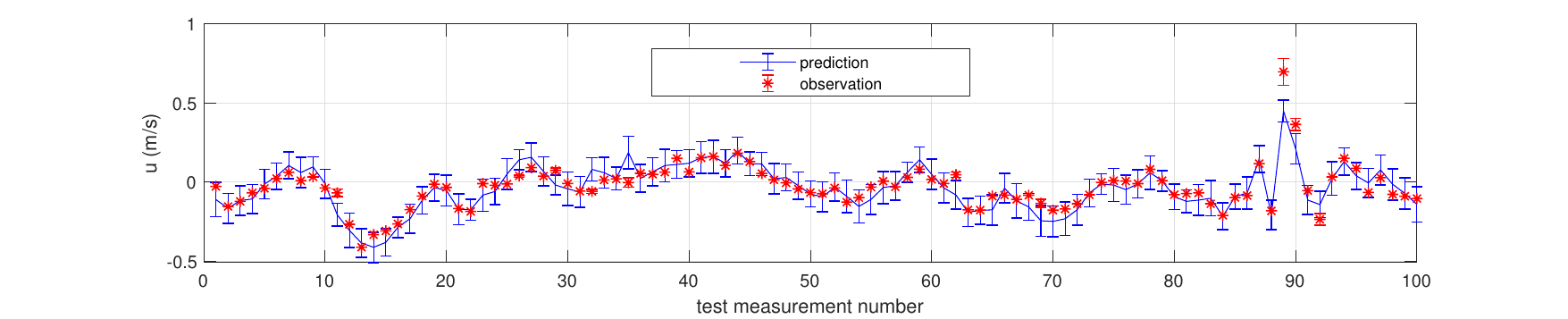}
	\captionsetup{justification=centering}
	\caption{first velocity component} \label{fig:uErr}
	\end{subfigure}
	\quad
	\centering
	\begin{subfigure}[b]{0.75\textwidth}
		\includegraphics[width=\textwidth]{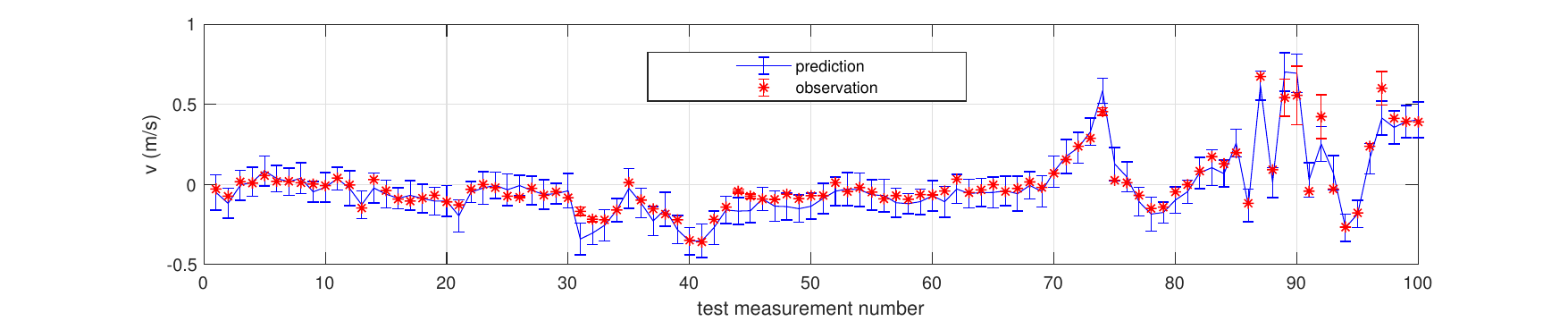}
	\captionsetup{justification=centering}
	\caption{second velocity component} \label{fig:vErr}
	\end{subfigure}
	\quad
	\begin{subfigure}[b]{0.75\textwidth}
		\includegraphics[width=\textwidth]{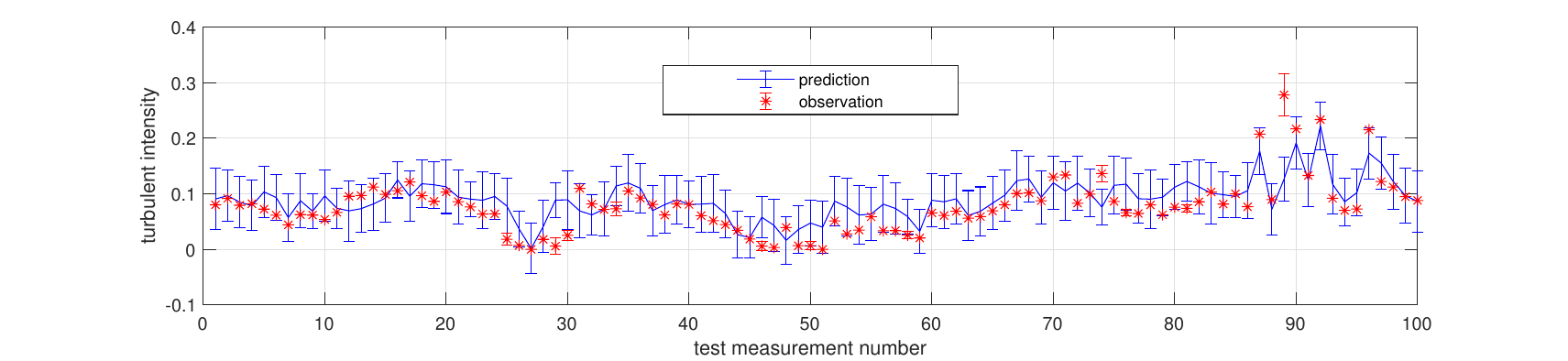}
	\captionsetup{justification=centering}
	\caption{turbulent intensity} \label{fig:iErr}
	\end{subfigure}
	\caption{Posterior predictions and their one standard-deviation uncertainty bounds at $\hhatm=100$ test locations of Section \ref{sec:exp} along with the measured values shown by red stars. We also plot the error bounds on the measurements that fall outside the uncertainty bounds.} \label{fig:uncertainty}
\end{figure*}
%

% ------------------------------------------------------------- %
\subsection{Suboptimality of the Planning Algorithm} \label{app:planComp}
%
% Theme: placement
%
In this section, we compare the performance of our proposed planning Algorithm \ref{alg:NBP} for the experimental setup of Section \ref{sec:exp}, to a baseline placement over a lattice and an information-theoretic planning using mutual information (MI) metric which has a suboptimality lower-bound.
Using the same notation introduced in Section \ref{sec:OPP}, let $I(\ccalX_k, \Omega \backslash \ccalX_k)$ denote the MI between measurements $\ccalX_k$ and unobserved regions of the domain $\Omega \backslash \ccalX_k$, given by
$ I(\ccalX_k, \Omega \backslash \ccalX_k) = H(\Omega \backslash \ccalX_k) - H(\Omega \backslash \ccalX_k \, | \, \ccalX_k) . $
Denote this value by $I(\ccalX_k)$ for short. Then, the amount of added information by a new measurement $x \in \ccalS_{k+1}$ is
$ \delta I_{\text{MI}}(x \, | \,  \ccalX_k) = I(\ccalX_k \cup x) - I(\ccalX_k) = H(x \, | \, \ccalX_k) - H(x \, | \, \bar{\ccalX_k} ) , $
where $\bar{\ccalX}_k = \Omega \backslash (\ccalX_k \cup x)$; see \cite{NSPGP2008KSG} for details of derivation.
%Particularly, for a GP using the definition of the joint entropy in \eqref{eq:entropyJoint} we get
%
%\begin{equation} \label{eq:addediMI}
%\delta I_{\text{MI}}(x \, | \,  \ccalX) \propto \frac{ \gamma_u^2(x  \, | \,  \ccalX)  }{ \gamma_u^2(x  \, | \,  \bar{\ccalX})  } .
%\delta I_{\text{MI}}(x \, | \,  \ccalX) \propto \frac{ \bbarkappa(x,x) - \bbSigma_{x\ccalX} \bbSigma_{\ccalX \ccalX}^{-1} \bbSigma_{\ccalX x} }
%		{ \bbarkappa(x,x) - \bbSigma_{x\bar{\ccalX}} \bbSigma_{\bar{\ccalX} \bar{\ccalX}}^{-1} \bbSigma_{\bar{\ccalX} x}} .
%\end{equation}
%
For a GP, MI as a set function is submodular and under certain assumptions on the discretization $\Omega$ of the domain, it is `approximately' monotone \cite{NSPGP2008KSG}. Then, according to the maximization theory of submodular functions, it can be optimized using a sequential greedy approach, as in Algorithm \ref{alg:NBP}, while ensuring sub-optimality lower-bounds; see \cite{AAMSSF1978NWF}.
In order for this suboptimality bound to hold, we perform the planning for MI offline. %meaning that we do not update the model probabilities $p_{j,k}$ in \eqref{eq:greedy}.
%On the other hand, differential entropy although submodular, is not monotone and thus the suboptimality result does not hold for it. We compare the performance of these two indices in Section \ref{sec:exp}.
%
%In order to predict the measurement variances $\sigma_u^2(x)$ and $\sigma_v^2(x)$ at a candidate location $x$, which are unknown \textit{a priori}, we utilize the prior turbulent intensity field $i(x)$. To do so, we assume that the turbulent flow is isotropic, meaning that the variations are direction-independent. %\footnote{Note that this assumption is not true in general but in the absence of any prior knowledge and for planning purposes it suffices.}
%Then, the expression for the entropy defined in \eqref{eq:entropyJoint} can be approximated by
%%
%$$ H(x \, | \, \ccalX_k) \approx 2 H(u(x \, | \, \ccalX_k) ) \propto \gamma_u^2(x  \, | \,  \ccalX_k) , $$
%%
%where in \eqref{eq:predVar} we approximate
%%
%$$ \sigma_u^2(x  \, | \,  \ccalX_k) \approx \frac{ 1 }{ n_0 } \, q^2_{\text{ref}} \, i^2(x  \, | \,  \ccalX_k); $$
%%
%see Section \ref{sec:RANS} for details. Similar to \eqref{eq:ustd}, $n_0$ is the number of independent instantaneous velocity samples to be collected in the candidate measurement location $x$. Given these approximations, the added information according to the MI metric is
From the above definition, the amount of added information according to the MI metric for model $\ccalM_j$ is
\begin{equation} \label{eq:addediMI}
\delta I_{\text{MI}}(x \, | \,  \ccalX_k, \ccalM_j) \propto \frac{ \gamma_u^2(x  \, | \,  \ccalX_k, \ccalM_j) \, \gamma_v^2(x  \, | \,  \ccalX_k, \ccalM_j)  }{ \gamma_u^2(x  \, | \,  \bar{\ccalX}_k, \ccalM_j) \, \gamma_v^2(x  \, | \,  \bar{\ccalX}_k, \ccalM_j)  } .
\end{equation}
Note that depending on the size of the discrete domain $\Omega$, the covariance matrix $\bbSigma_{\bar{\ccalX} \bar{\ccalX}}$ in \eqref{eq:addediMI} can be very large. This makes the evaluation of the MI metric \eqref{eq:addediMI} considerably more expensive than the entropy metric \eqref{eq:addediH}. In fact, planning using MI is intractable for the larger domain of Section \ref{sec:sim}. Unlike MI, the computation cost of the entropy metric \eqref{eq:addediH} is independent of the size of the domain and only depends on the number of collected measurements $\ccalX_k$.

In Figure \ref{fig:lattice}, we compare our proposed planning approach to the MI metric \eqref{eq:addediMI} and a sequence of independent experiments run for lattices ranging from $4\times4$ to $9\times9$ measurements. We use a maximum of $m=81$ measurements and do not impose any travel distance constraints on the planning methods for this comparison.\footnote{Each lattice placement explores the whole domain. Placing travel distance constraints will prevent the other planning methods from a similar exploration and favor the lattice placement.}
Particularly, Figure \ref{fig:lattice} shows the prediction error value $e_k$ as a function of the planning step $k$ for the same $\hhatm = 100$ test locations used in Section \ref{sec:exp}.
\begin{figure}[t!]
  \centering
    \includegraphics[width=0.35\textwidth]{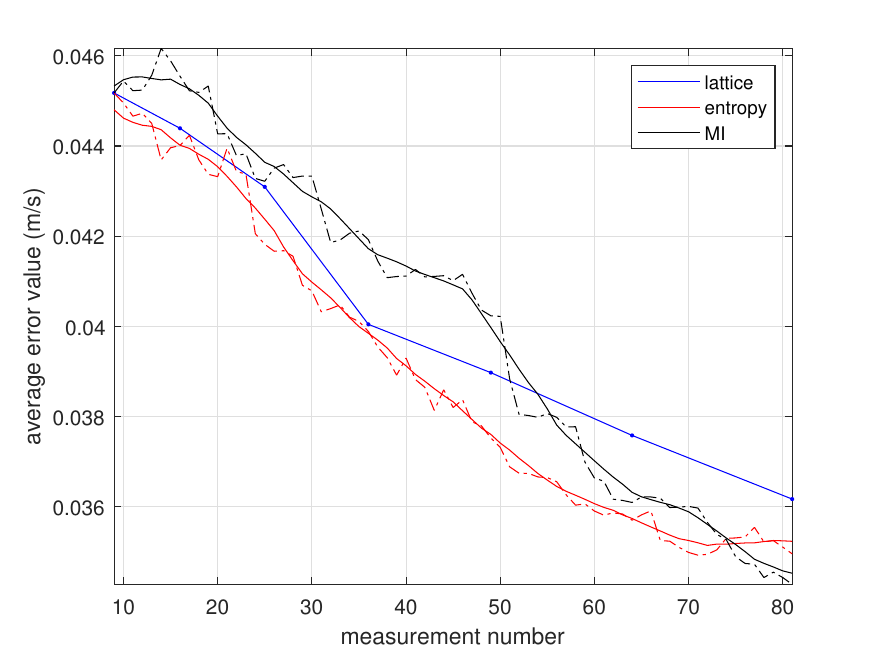}
  \caption{Prediction error values for the proposed planning Algorithm \ref{alg:NBP} using the entropy \eqref{eq:addediH} and MI \eqref{eq:addediMI} metrics, and a sequence of lattices ranging from $4\times4$ to $9\times9$ measurements.} \label{fig:lattice}
\end{figure}
From Figure \ref{fig:lattice}, it can be seen that initially our method and the baseline lattice placement perform roughly similarly while the MI metric has consistently higher errors. Once the proposed planning method has collected enough measurements to properly select the most likely model, it outperforms the baseline method as well. Clearly, once a large number of measurements have been collected so that the whole domain is sufficiently covered, all approaches would perform similarly.
It can be seen that the performance improvement using the proposed planning algorithm is less significant compared to Section \ref{sec:sim}. This is because for larger domains, information is more localized whereas here most measurements are relatively informative regardless of their location.
Note also that unlike the simulation study in Section \ref{sec:sim}, the ground truth is unknown here. This makes the conclusions less reliable and leads to oscillations in prediction error values in Figure \ref{fig:lattice} since the test measurements themselves are empirically collected and subject to uncertainty.
%In the absence of ground truth knowledge of flow properties which does not exist for such turbulent flows, it is impossible to have a more reliable comparison.
%Finally, note that the nature of this comparison is coverage-like which means that the improvements obtained by optimal planning would not be considerable. However, although there might exist lattice placements that by accident measure in more informative regions of the domain but it is not possible to know the size of this lattice \textit{a priori}.

Figure \ref{fig:planning2} shows the sequence of waypoints generated by Algorithm \ref{alg:NBP} for the entropy \eqref{eq:addediH} and MI \eqref{eq:addediMI} metrics, respectively.
\begin{figure}[t!]
	\centering
	\begin{subfigure}[b]{0.32\textwidth}
		\includegraphics[width=\textwidth]{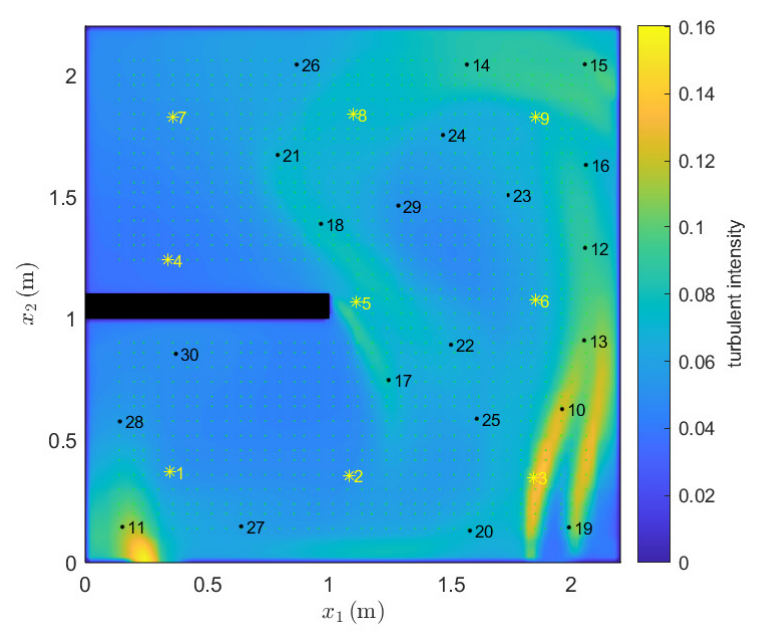}
	\captionsetup{justification=centering}
	\caption{entropy metric \eqref{eq:addediH}}
	\end{subfigure}
	\quad
	\begin{subfigure}[b]{0.32\textwidth}
		\includegraphics[width=\textwidth]{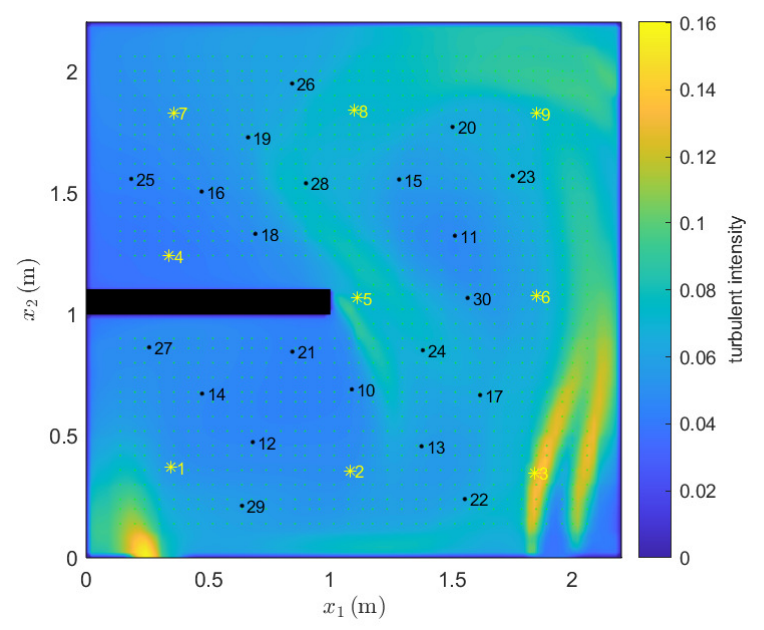}
	\captionsetup{justification=centering}
	\caption{mutual information metric \eqref{eq:addediMI}}
	\end{subfigure}
	\caption{Path of the mobile sensor for the two optimality metrics overlaid on turbulent intensity field from model $7$. The green dots show the candidate measurement locations and the yellow stars show the exploration measurement locations. Finally, the black dots show the sequence of waypoints.}\label{fig:planning2}
\end{figure}
It is evident that the entropy metric signifies regions with higher variations in the velocity field whereas the MI metric is more sensitive to correlation; see also equations \eqref{eq:addediH} and \eqref{eq:addediMI}.
Conditioning on the measurements collected using these two metrics up to step $k=30$, the prediction errors for the same $\hhatm = 100$ test locations, are $e_{30} = 0.040$\,m/s for the entropy metric and $e_{30, \text{MI}} = 0.043$\,m/s for the MI metric, respectively. Thus, in this particular case, the entropy metric outperforms the MI metric by $7\%$. Note that with only $30$ measurements, the error in the posterior fields is considerably smaller than the prior value $e_0 = 0.092$\,m/s. Nevertheless, reaching a steady estimated field requires more measurements as we demonstrated in Section \ref{sec:exp}.

\end{document}